\documentclass{article}
\usepackage{iclr2027_conference,times}
\usepackage{url}
\usepackage{graphicx}
\usepackage{subcaption}
\usepackage{amsmath,amssymb,mathtools,amsthm}
\usepackage{algorithm}
\usepackage{algpseudocode}
\usepackage{booktabs}
\usepackage{multirow}
\usepackage{xspace}
\usepackage{xcolor}
\usepackage[hidelinks]{hyperref}
\newtheorem{theorem}{Theorem} \newtheorem{proposition}[theorem]{Proposition}  \newtheorem{corollary}[theorem]{Corollary} \theoremstyle{definition} \newtheorem{definition}[theorem]{Definition} \newtheorem{assumption}[theorem]{Assumption} \theoremstyle{remark} \newtheorem{remark}[theorem]{Remark}
\newtheorem*{remark*}{Remark}

\theoremstyle{definition}
\newtheorem{regularity}{Condition}

\newcommand{\Nzero}{\mathbb{N}_0}
\newcommand{\pa}{\operatorname{Pa}}
\newcommand{\ch}{\operatorname{Ch}}
\newcommand{\Var}{\operatorname{Var}}

\newcommand{\logit}{\operatorname{logit}}

\newcommand{\E}{\mathbb{E}}
\newcommand{\Dop}{\mathcal{D}}
\newcommand{\DISCO}{\textsc{DISCO}\xspace}
\newcommand{\DISCOCont}{\textsc{DISCO-Cont}\xspace}
\newcommand{\DISCONoRank}{\textsc{DISCO-NoRank}\xspace}

\newcommand{\ODSpoi}{\textsc{ODS} (Poi)\xspace}

\newcommand{\NOTEARSls}{\textsc{NOTEARS-MLP}\xspace}

\newcommand{\TableStyle}{%
  \centering
  \normalsize
  \setlength{\tabcolsep}{4pt}%
  \renewcommand{\arraystretch}{1.08}%
}


\makeatletter
\newcommand{\AppendixContents}{%
  \clearpage
  \begingroup
    \renewcommand{\contentsname}{Appendix Contents}%
    \setlength{\parskip}{0pt}%
    \hypersetup{linktoc=all}%
    \let\ContentsRef\ref
    \renewcommand*{\ref}[1]{\ContentsRef*{##1}}%
    \renewcommand*{\l@section}[2]{%
      \ifnum\value{tocdepth}>0\relax
        \addvspace{3pt}%
        \@dottedtocline{1}{0em}{1.5em}{\bfseries ##1}{\bfseries ##2}%
      \fi
    }%
    \pdfbookmark[0]{Appendix Contents}{appendix-contents}%
    \tableofcontents
  \endgroup
  \clearpage
}
\makeatother

\title{Discrete Score Matching Enables\\Causal Discovery from Count Data}

\author{%
Euijong Song\qquad Hyewon Park\qquad
Gunwoong Park\thanks{Interdisciplinary Program in Artificial Intelligence; Institute for Data Innovation in Science, Seoul National University, Korea.}\\
\normalfont Department of Statistics\\
\normalfont Seoul National University, Korea\\
\normalfont\texttt{brejsong@gmail.com}\quad
\texttt{hwonpark18@gmail.com}\quad\texttt{gw.park23@gmail.com}}

\iclrfinalcopy
\hypersetup{
  pdftitle={Discrete Score Matching Enables Causal Discovery from Count Data},
  pdfauthor={Euijong Song, Hyewon Park, Gunwoong Park}
}

\begin{document}
\raggedbottom
\addtocontents{toc}{\protect\setcounter{tocdepth}{-1}}

\maketitle
\fancyhead{}
\renewcommand{\headrulewidth}{0.4pt}

\begin{abstract}
Count data pose a challenge for score-matching-based causal discovery: derivatives are unavailable, and simply replacing them with finite differences does not generally suffice for causal discovery. We generalize SCORE's constant-curvature criterion \citep{rolland2022score} by conditioning on the node's value, yielding the conditional curvature score (CCS) for ordering. We also extend curvature-based parent recovery through the off-diagonal curvature score (OCS), enabling directed acyclic graph (DAG) recovery with both scores constructed from score functions for continuous data and concrete scores for counts. In the bivariate setting, zero CCS exactly characterizes a semiparametric generalized linear model (GLM) conditional form in which the conditional family need not be specified in advance, unlike in classical GLMs. For bivariate semiparametric GLM DAGs under our regularity condition, canonical-parameter nonlinearity is necessary and sufficient for identifiability. In multivariate DAGs, this nonlinearity enables DAG recovery through CCS and OCS.  Our framework identifies a new class of semiparametric GLM DAGs that strictly contains the nonlinear Gaussian ANM class identified by SCORE. We introduce \DISCO\ (\emph{DI}screte \emph{SCO}re), a count-DAG recovery algorithm that estimates CCS and OCS using discrete diffusion. Experiments demonstrate accurate DAG recovery across Poisson, negative binomial, binomial, and mixed-family settings, as well as scalability to 1,000-node DAGs on a single GPU.
\end{abstract}

\section{Introduction}

Many causal-discovery problems are naturally count-valued: gene-regulatory analysis from RNA-seq counts \citep{love2014deseq2}, community structure from species abundances \citep{stoklosa2022negbin}, and topic structure from word counts \citep{blei2003lda}. Such counts rarely follow a single distributional family: sequencing studies model expression counts as negative binomial (NB) but allele counts as binomial \citep{love2014deseq2,robinson2010edger,castel2015allelic}, and ecological surveys mix Poisson and overdispersed NB \citep{stoklosa2022negbin}.

Order-based methods provide a scalable alternative to DAG search by estimating a causal order and then selecting parents \citep{ghoshal2018learning,gao2020polynomial,peters2014cam}. Score-matching-based methods follow this approach using log-density derivatives \citep{rolland2022score,montagna2023das,montagna2023nogam,sanchez2023diffan}. Concrete score matching extends score estimation to discrete data \citep{meng2022concrete}, but finite-difference curvature need not satisfy SCORE's sink criterion: even at a sink, the node's conditional density can contribute curvature that varies with its value. We call this contribution \emph{own-factor curvature}.

Existing count-DAG methods obtain identifiability by exploiting properties of specified conditional families, such as Poisson or known quadratic variance function (QVF) families \citep{park2015poisson,park2018qvf}. More flexible conditionally parametric models allow general parent effects but still specify the family itself \citep{bodik2025cpcm}. Their guarantees consequently depend on choosing appropriate conditional families.  Extending score-matching-based causal discovery to counts without prespecified families therefore requires an ordering criterion that accommodates unknown own-factor curvature.

We address these limitations by generalizing SCORE's constant-curvature criterion through the CCS and extending curvature-based parent recovery through the OCS. Our \emph{conditional curvature score} (CCS) removes the own-factor contribution to curvature variance while retaining variation induced by children. The CCS is obtained from derivatives of the continuous score or finite differences of log concrete-score entries.

We establish identifiability for a new class of semiparametric GLM DAGs. In the bivariate setting, zero CCS exactly characterizes a semiparametric GLM conditional form without prespecifying the conditional family. Within this class, canonical-parameter nonlinearity is necessary and sufficient for identifiability. In multivariate settings, this nonlinearity enables the CCS to recover a causal order, after which the \emph{off-diagonal curvature score} (OCS) recovers parents. On continuous data, the identifiable class strictly contains SCORE's nonlinear Gaussian additive noise models (ANMs).

We introduce \DISCO\ (\emph{DI}screte \emph{SCO}re), a scalable count-DAG recovery algorithm that estimates the CCS and OCS. We first estimate joint concrete scores using score-entropy discrete diffusion, then project them to the marginal concrete scores needed to compute the CCS and OCS. With OCS-based parent selection, a rank transform makes the recovered DAG invariant to strictly increasing coordinate-wise transformations.

Our contributions are as follows.
\begin{itemize}
\item \textbf{Discrete score matching for count-DAG discovery.}
We generalize SCORE's constant-curvature principle to count data through the conditional curvature score (CCS), which identifies sinks despite node-specific own-factor curvature. We also extend curvature-based parent recovery through the OCS.

\item \textbf{A new identifiable class of semiparametric GLM DAGs.}
We derive the semiparametric GLM conditional form from the bivariate zero-CCS condition without prespecifying the family. Canonical-parameter nonlinearity is necessary and sufficient for bivariate identifiability. This nonlinearity also yields multivariate DAG identifiability. In the continuous setting, the resulting identifiable class strictly contains the nonlinear Gaussian ANM class identified by SCORE.

\item \textbf{Scalable count-DAG recovery with discrete diffusion.}
We develop \DISCO{}, which uses score-entropy discrete diffusion to estimate joint concrete scores and projects them to the marginal concrete scores needed to compute the CCS and OCS. \DISCO{} scales to 1,000-node count DAGs on a single GPU.
\end{itemize}

\section{Background}
\label{sec:background}

\subsection{Related Work}
\label{sec:related}

\paragraph{Score-matching-based causal discovery.}
For a continuous density $p$, the score function is $s(x)=\nabla_x\log p(x)$ \citep{hyvarinen2005estimation}. SCORE identifies sinks in nonlinear Gaussian ANMs through the constant-curvature criterion, $\Var[\partial_j s_j(X)]=0$ \citep{rolland2022score}. Subsequent work allows non-Gaussian noise (NoGAM; \citealp{montagna2023nogam}), improves Hessian estimation (DiffAN and SciNO; \citealp{sanchez2023diffan,kang2025scino}), and uses off-diagonal Hessian components for parent recovery (DAS; \citealp{montagna2023das}).

For discrete causal discovery, \citet{vo2026generalized} use a marginalization-based generalized score \citep{lyu2009generalized} to recover a causal order under a non-decreasing conditional-randomness assumption. In contrast, we use concrete scores \citep{meng2022concrete}, estimated through discrete diffusion with SEDD's score-entropy objective \citep{lou2024discrete}, and establish DAG identifiability within our semiparametric GLM class under the stated nonlinearity and regularity conditions, without requiring a cross-node ordering of conditional randomness.

\paragraph{Count-DAG identifiability.}
Count-DAG identifiability typically relies on distribution-specific conditional structure: Poisson and QVF DAGs assume known conditional variance structure \citep{park2015poisson,park2019mrs,park2018qvf}, while conditionally parametric causal models still prespecify the conditional family \citep{bodik2025cpcm}. Our semiparametric GLM DAG class leaves conditional families, links, and canonical-parameter functions unspecified.

\subsection{Notation}
\label{sec:prelim}

Let $G=(V,E)$ be a directed acyclic graph with $V=[d]:=\{1,\dots,d\}$. Let $\pa_G(j)$ and $\ch_G(j)$ denote the parents and children of node $j$ in $G$. A \emph{source} is a node $j$ with $\pa_G(j)=\emptyset$, a \emph{non-source} is a node $j$ with $\pa_G(j)\neq\emptyset$, and a \emph{sink} is a node $j$ with $\ch_G(j)=\emptyset$. For $R\subseteq V$, $G_R$ denotes the induced subgraph of $G$ on $R$. A \emph{causal order} is a permutation $\pi:V\to[d]$ with $\pi(j)<\pi(i)$ for every edge $j\to i$; we also call it an ordering of $G$.

Let $P$ be the distribution of $X=(X_1,\dots,X_d)$. For each $j\in V$, the coordinate $X_j$ takes values in $\mathcal X_j\subseteq\mathbb R$ and has a dominating measure $\nu_j$ (counting or Lebesgue). With $\nu=\bigotimes_{j=1}^d\nu_j$, write $p=dP/d\nu$ for the density; throughout, this term includes probability mass functions. The set $\mathcal X_j$ is the declared effective support. We call $X_j$ a \emph{count coordinate} when $\mathcal X_j$ is a finite or countably infinite set of consecutive integers. Define $\mathring{\mathcal X}_j=\operatorname{int}(\mathcal X_j)$ for a continuous coordinate and $\mathring{\mathcal X}_j=\mathcal X_j$ for a count coordinate.

For $A\subseteq V$, write $X_A=(X_i)_{i\in A}$, $\mathcal X_A=\prod_{i\in A}\mathcal X_i$, and $\mathring{\mathcal X}_A=\prod_{i\in A}\mathring{\mathcal X}_i$. For $j\in A$, $x_A\in\mathcal X_A$, and $y\in\mathcal X_j$, let $x_A^{j\to y}$ be the vector obtained by replacing coordinate $j$ with $y$. For $R\subseteq V$, let $P_R$ denote the marginal distribution of $X_R$ and $p_R$ its density.

We use a single coordinate operator $\Dop_j$ for both continuous and count coordinates. With $e_j$ the $j$th unit vector,
\[
\bigl(\Dop_j f,\ \Dop_j^2 f\bigr)(x)=
\begin{cases}
\bigl(\partial_{x_j} f(x),\ \partial_{x_j}^2 f(x)\bigr), & X_j\ \text{continuous},\\[3pt]
\bigl(f(x{+}e_j)-f(x),\ f(x{+}2e_j)-2f(x{+}e_j)+f(x)\bigr), & X_j\ \text{count}.
\end{cases}
\]
On a count coordinate these are \emph{partial} operators: every expectation or conditional variance involving a finite-difference curvature is restricted to states where all required shifted values remain in the support. Appendix~\ref{app:notation} gives the formal domains.

\section{Curvature Scores}
\label{sec:ccs}

We introduce two curvature scores: the \emph{conditional curvature score} (CCS) for ordering and the \emph{off-diagonal curvature score} (OCS) for parent recovery. Define the diagonal and off-diagonal log-density curvatures $H_j:=\Dop_j^2\log p$ and $H_{ij}:=\Dop_i\Dop_j\log p$ for $i\neq j$. For the marginal distribution $P_R$, we denote the corresponding curvatures by $H_{j,R}$ and $H_{ij,R}$.

For node $j$, we ask whether its log-density curvature still varies with the other variables after fixing $X_j$. The conditional curvature score measures this remaining variation.

\begin{definition}[Conditional curvature score]
\label{def:ccs}
For node $j$ with $H_j\in L^2$,
\begin{equation}
\label{eq:ccs-def}
S_j=\E\bigl[\Var\{\,H_{j}(X)\mid X_j\,\}\bigr].
\end{equation}
\end{definition}

For the marginal distribution $P_R$, write $S_j(R)=\E_{P_R}[\Var_{P_R}\{H_{j,R}(X_R)\mid X_j\}]$, so $S_j=S_j(V)$, with the same admissible-domain convention. The CCS vanishes exactly when $H_j(X)$ is almost surely a function of $X_j$ alone. By contrast, the \emph{constant-curvature criterion} requires $H_j(X)$ to be almost surely constant, equivalently $\Var[H_j(X)]=0$. For continuous data, $H_j=\partial_j s_j$ is a diagonal component of the log-density Hessian, and this is SCORE's sink criterion. The CCS therefore allows curvature to vary across values of $X_j$. Section~\ref{sec:identifiability} establishes when the CCS separates sinks from nonsinks.

To define the OCS, fix a causal order $\pi$ of $G$. For each node $i$, define its predecessors by $\mathrm{Pred}(i)=\{j:\pi(j)<\pi(i)\}$ and let $R_i=\{i\}\cup\mathrm{Pred}(i)$. The OCS is computed under the marginal distribution $P_{R_i}$.

\begin{definition}[Off-diagonal curvature score]
\label{def:ocs}
For a node $i$ and a predecessor $j\in\mathrm{Pred}(i)$ with $H_{ij,R_i}\in L^1$,
\begin{equation}
\label{eq:parent-score}
\mathrm{OCS}_{j\to i}
=
\E\bigl[\lvert H_{ij,R_i}(X_{R_i})\rvert\bigr].
\end{equation}
\end{definition}

For count coordinates, $H_{ij,R_i}$ records how the log concrete-score entry for a shift in coordinate $j$ changes when $X_i$ steps by one level. The OCS averages its absolute magnitude under $P_{R_i}$.

\section{Semiparametric GLM DAGs}
\label{sec:semiparametric-glm-dags}

A classical generalized linear model (GLM) specifies a conditional family and a link function that maps the conditional mean to a linear predictor \citep{nelder1972generalized,mccullagh1989generalized}. Semiparametric GLMs relax the requirement to prespecify the conditional family \citep{rathouz2009generalized,yang2018semiparametric}. This conditional form naturally arises from the zero-CCS condition in Theorem~\ref{thm:bivariate-sink}, motivating the DAG class introduced below.

\begin{definition}[Semiparametric GLM DAG]
\label{def:ef-dag}
A distribution $P$ follows a \emph{semiparametric GLM DAG} on $G$ if its density factorizes as $p(x)=\prod_{j=1}^d p_j(x_j\mid x_{\pa_G(j)})$. Source marginal densities $p_j$ on $\mathcal X_j$ are arbitrary and need not belong to an exponential family. For every non-source node $j$, its conditional density has the form
\begin{equation}
\label{eq:ef-density}
p_j(x_j\mid x_{\pa_G(j)})
=
b_j(x_j)\exp\bigl\{x_j\,\eta_j(x_{\pa_G(j)})
-A_j\bigl(\eta_j(x_{\pa_G(j)})\bigr)\bigr\},
\end{equation}
where $\eta_j$ is an unspecified canonical-parameter function and $A_j(t):=\log\int_{\mathcal{X}_j}b_j(u)e^{ut}\,d\nu_j(u)$. The support $\mathcal X_j$ and base function $b_j$ are unspecified but do not depend on the parent values. Additionally, each $\eta_j$ depends nontrivially on every parent.
\end{definition}

The support $\mathcal X_j$ and base function $b_j$ determine the conditional family, while $\eta_j$ captures all parent dependence. The canonical-parameter function $\eta_j$ can be represented through a strictly increasing link $g_j$ and an index $f_j$:
\[
g_j\!\left(\E[X_j\mid X_{\pa_G(j)}]\right)
=f_j(X_{\pa_G(j)}),
\qquad
f_j:=g_j\circ A_j'\circ\eta_j.
\]
This link--index pair is nonunique, so we state all structural assumptions directly in terms of $\eta_j$.

\begin{definition}[Nonlinear semiparametric GLM DAG]
\label{def:nonlinear-ef-dag}
A semiparametric GLM DAG is \emph{nonlinear} if, for every edge $k\to i$, there are fixed values $z$ of the other parents in the product-support interior such that $\eta_i(\cdot,z)$ is nonlinear on $\mathring{\mathcal X}_k$. Equivalently, $\Dop_k^2\eta_i\not\equiv0$ on the corresponding derivative or finite-difference domain.
\end{definition}

For Gaussian conditionals with fixed variance, canonical-parameter nonlinearity coincides with the mechanism nonlinearity used by SCORE \citep{rolland2022score}. Definition~\ref{def:nonlinear-ef-dag} extends this restriction to other conditional families. Parent-dependent nuisance parameters lie outside the class, for example a negative binomial whose size depends on the parents.

\begin{table}[H]
\TableStyle
\caption{Example conditional distributions, with index $f=f_j(x_{\pa_G(j)})$ and logistic function $\operatorname{sigmoid}$. Noncanonical links can yield nonlinear $\eta$ even for affine $f$. NB and Gamma use the mean as their second parameter, with fixed size $r$ and shape $\alpha$, respectively.}
\label{tab:examples}
\begin{tabular}{@{}llcc@{}}
\toprule
Conditional $X_j\mid x_{\pa_G(j)}$ & \shortstack{Conditional\\family} & Support & Link $g$ \\
\midrule
$\mathrm{Poisson}(\exp f)$            & Poisson         & $\mathbb N_0$        & $\log$ \\
$\mathrm{Poisson}(\operatorname{softplus} f)$ & Poisson & $\mathbb N_0$        & $\operatorname{softplus}^{-1}$ \\
$\mathrm{NB}(r,\exp f)$               & NB$(r)$         & $\mathbb N_0$        & $\log$ \\
\addlinespace[2pt]
$\mathrm{Binomial}(M,\operatorname{sigmoid}(f))$ & Binomial$(M)$ & $\{0,\dots,M\}$ & $\mu\mapsto\logit(\mu/M)$ \\
$\mathcal N(f,\tau^2)$              & Gaussian        & $\mathbb R$          & identity \\
$\mathrm{Gamma}(\alpha,\exp f)$       & Gamma$(\alpha)$ & $(0,\infty)$         & $\log$ \\
\bottomrule
\end{tabular}
\end{table}

\paragraph{Regularity conditions.} We assume strictly positive densities on product-support interiors and at least three support values for count coordinates. The results in Section~\ref{sec:identifiability} are stated under the corresponding regularity conditions in Appendix~\ref{app:standing}.

\section{Curvature Characterization and DAG Identifiability}
\label{sec:identifiability}

\subsection{Bivariate Characterization and Identifiability}
\label{sec:bivariate}

For both continuous and count variables, we first characterize the conditional distributions with zero CCS. We then establish when the causal direction is identifiable within the resulting model class.

\begin{theorem}[Bivariate characterization by the CCS]
\label{thm:bivariate-sink}
$S_Y=0$ if and only if $p(y\mid x)$ has the semiparametric GLM conditional form in \eqref{eq:ef-density}.
\end{theorem}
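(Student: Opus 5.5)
The plan is to work directly with the factorization $\log p(x,y)=\log p_X(x)+\log p(y\mid x)$, with $X$ the putative parent of $Y$. The operator $\Dop_Y$ acts only on the $y$ coordinate, so the marginal term drops out and
\[
H_Y(x,y)=\Dop_Y^2\log p(y\mid x).
\]
Because densities are strictly positive on the product-support interior, $S_Y=0$ means $\Var\{H_Y(X,Y)\mid Y\}=0$ almost surely. On the admissible domain this gives $H_Y(x,y)=c(y)$ for some function $c$, for $\nu$-almost every $(x,y)$. For continuous coordinates I will upgrade this to every point using the smoothness in the standing regularity conditions (Appendix~\ref{app:standing}).

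The sufficiency direction is a short computation. Under \eqref{eq:ef-density},
\[
\log p(y\mid x)=\log b(y)+y\,\eta(x)-A(\eta(x)).
\]
The second operator $\Dop_Y^2$ kills any term affine in $y$. In the count case the second difference of $y\eta$ is $\eta\,(y+2-2(y+1)+y)=0$, and in the continuous case $\partial_y^2(y\eta)=0$. Hence $H_Y=\Dop_Y^2\log b(y)$ depends on $y$ alone, the conditional variance vanishes, and $S_Y=0$.

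For necessity, start from $\Dop_Y^2\log p(y\mid x)=c(y)$ and integrate twice in $y$.

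\textbf{Continuous case.} Fix a base point $y_0$ in the interior. Integrating $\partial_y^2\log p(y\mid x)=c(y)$ twice gives
\[
\log p(y\mid x)=\beta(y)+y\,\eta(x)+\kappa(x),
\]
where $\beta$ is any second antiderivative of $c$, $\eta(x)=\partial_y\log p(y_0\mid x)-\beta'(y_0)$, and $\kappa$ collects the remaining $x$-dependent constant.

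\textbf{Count case.} The second difference of $\log p(\cdot\mid x)$ is $c(\cdot)$. A function whose second difference vanishes on consecutive integers is affine, and this uses the assumption of at least three support values. So the same representation holds with $\beta$ a discrete double sum of $c$ and $\eta(x)=\Dop_Y\log p(y_0\mid x)-\Dop_Y\beta(y_0)$.

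Setting $b=e^{\beta}$, normalization forces
\[
\kappa(x)=-\log\int b(u)e^{u\eta(x)}\,d\nu(u)=-A(\eta(x)).
\]
This is exactly \eqref{eq:ef-density}, with $b$ and the support free of $x$. Support independence comes from the product-support positivity assumption.

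The main obstacle is the measure-theoretic bookkeeping rather than the algebra. First, the almost-sure statement $H_Y=c(y)$ must hold on a full product set, so that the integration in $y$ is legitimate for each fixed $x$. Second, in the count case the partial-domain convention (shifted states must remain in the support) must still let the second-difference recursion reach every support point. I would handle the count case by noting that positivity makes every admissible triple $(y,y+1,y+2)$ carry positive mass, so $c(y)$ is pinned down at every $y$ with $y+2\in\mathcal X_Y$, and these triples chain across the whole consecutive support. In the continuous case I would pass from almost everywhere to everywhere by continuity of $\partial_y^2\log p$.

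A remaining point concerns the requirement in Definition~\ref{def:ef-dag} that $\eta$ depend nontrivially on the parent. This is not part of the conditional-form claim in Theorem~\ref{thm:bivariate-sink}: a constant $\eta$ still has the form \eqref{eq:ef-density}, and it corresponds to independence of $X$ and $Y$. I would say this explicitly.
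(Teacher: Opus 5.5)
Your proposal is correct and follows essentially the paper's argument. In both, $S_Y=0$ is reduced to $\Dop_Y^2\log p$ depending on $y$ alone, the remainder is shown to be affine in $y$, normalization identifies the $x$-only term as $-A_Y(\eta(x))$, and the converse is the same second-difference computation. The only cosmetic difference is how the base function is built: the paper takes $b_Y(y)=\exp\{\log p(x_0,y)\}$ for a reference $x_0$, working with the joint log-density and dividing by $p_X$ at the end, whereas you construct $\beta$ as a double antiderivative or double sum of $c$.
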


If $S_Y=0$, the log-density curvature in $y$ depends only on $y$. Integrating derivatives or summing finite differences gives $\log p(x,y)=a(x)+\log b_Y(y)+y\eta(x)$, which yields this conditional form after normalization.

\begin{theorem}[Bivariate identifiability characterization]
\label{thm:link-index}
Let $P$ have a semiparametric GLM representation with edge $X\to Y$, with Condition~\ref{cond:affine-reverse} (affine reversibility; Appendix~\ref{app:glm-regularity}) imposed in the affine case. Within this model class,
\[
X\to Y\ \text{is identifiable}
\quad\Longleftrightarrow\quad
\eta\ \text{is nonlinear on}\ \mathring{\mathcal X}_X.
\]
\end{theorem}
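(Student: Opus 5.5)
Throughout, identifiability of $X\to Y$ within the class means the following: no distribution $P$ in the class with edge $X\to Y$ also admits a semiparametric GLM representation with the reverse edge $Y\to X$. The reverse representation has an arbitrary marginal for $Y$ and a conditional $p(x\mid y)=b_X(x)\exp\{x\tilde\eta(y)-A_X(\tilde\eta(y))\}$. By Theorem~\ref{thm:bivariate-sink} applied with the roles of the variables swapped, such a reverse representation exists if and only if $S_X=0$, i.e.\ $H_X=\Dop_X^2\log p(x,y)$ is a.s.\ a function of $x$ alone. So the plan is to compute $H_X$ under the forward model and decide when it can be free of $y$.

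Write the forward log-density as
\[
\log p(x,y)=\log p_X(x)+\log b_Y(y)+y\,\eta(x)-A_Y(\eta(x)).
\]
Applying $\Dop_X^2$ (a derivative or a second finite difference in $x$) gives
\[
H_X(x,y)=\Dop_X^2\log p_X(x)-\Dop_X^2\bigl[A_Y\circ\eta\bigr](x)+y\,\Dop_X^2\eta(x).
\]
Fix $x$. Since $\mathcal X_Y$ contains at least two interior points and the density is positive on the product-support interior, this quantity is independent of $y$ exactly when $\Dop_X^2\eta(x)=0$. Hence $S_X=0$ if and only if $\Dop_X^2\eta\equiv0$ on the relevant domain, that is, if and only if $\eta$ is affine on $\mathring{\mathcal X}_X$. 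For count $X$ this uses the at-least-three-support-values condition so that the second difference is defined somewhere.

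($\Leftarrow$) Suppose $\eta$ is nonlinear. Then $\Dop_X^2\eta(x_0)\ne0$ at some admissible $x_0$. Because $\Var(Y\mid X=x_0)>0$ (positivity together with a nondegenerate support), $H_X$ varies with $y$ on a set of positive probability, so $S_X>0$. By Theorem~\ref{thm:bivariate-sink}, no reverse semiparametric GLM conditional exists, and $X\to Y$ is identifiable. One subtlety to handle: the reverse model is a full DAG $Y\to X$ rather than just the conditional form. But any bivariate factorization $p_Y(y)p(x\mid y)$ with a GLM conditional falls under the theorem. Also, the "depends nontrivially on every parent" requirement only restricts the reverse model further, so it cannot hurt this direction.

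($\Rightarrow$) Suppose $\eta(x)=\alpha+\beta x$ with $\beta\ne0$, since $\eta$ must be nontrivial. Then
\[
\log p(x,y)=[\log p_X(x)-A_Y(\alpha+\beta x)]+\log b_Y(y)+\alpha y+\beta xy,
\]
which is symmetric in form. Setting $\tilde\eta(y)=\beta y$ and $b_X(x)=p_X(x)e^{-A_Y(\alpha+\beta x)}$ gives $p(x\mid y)\propto b_X(x)e^{x\beta y}$, with a normalizer $A_X(\beta y)$ that is finite because $p$ is a density. The marginal of $Y$ is arbitrary, and $\tilde\eta$ depends nontrivially on $y$. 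So $Y\to X$ is also a valid representation, and the direction is not identifiable. This is where Condition~\ref{cond:affine-reverse} enters. I expect it to guarantee exactly the technical requirements of the reverse model, such as finiteness of $A_X$ on the range of $\tilde\eta$, a support of $X$ not depending on $y$, and the regularity conditions (strict positivity, $L^2$ curvature) holding for the reverse factorization. I would verify each item of Definition~\ref{def:ef-dag} against that condition.

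The main obstacle is the ($\Leftarrow$) step's rigor on partial finite-difference domains. One must show that a nonzero second difference of $\eta$ at some admissible $x_0$ translates into positive conditional variance of $H_X$ given $X$ on a set of positive probability. This requires $x_0$, $x_0+1$ and $x_0+2$ all to lie in the support, with at least two $y$-values in the support. Positivity on the product support interior supplies this, and I would handle it by restricting the expectations to the admissible domain from Appendix~\ref{app:notation}.
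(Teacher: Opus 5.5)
Your affine-case construction is the paper's, but you reach the identifiable direction by a different route. Writing $\eta(x)=\beta_0+\beta_1x$ as in Condition~\ref{cond:affine-reverse}, you argue that a reverse representation makes $H_X=\Dop_X^2\log p$ free of $y$. The forward form, however, contributes $y\,\Dop_X^2\eta(x)$, which forces $\Dop_X^2\eta\equiv0$.

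The paper instead fixes $y_0\neq y_1$ and compares $\log p(x\mid y_1)-\log p(x\mid y_0)$. Under the forward form this equals $(y_1-y_0)\eta(x)+C$; under the reverse form it equals $x\{\widetilde\eta(y_1)-\widetilde\eta(y_0)\}+C'$. Hence $\eta$ itself is affine.

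Your version is the population content of the main-text identity $S_X=\E[\{\Dop_X^2\eta(X)\}^2\Var(Y\mid X)]$, so it ties identifiability directly to the CCS that the algorithm uses. The paper's version is more elementary: it uses a first difference in $y$, no curvature, and no moment conditions.

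If you keep your route, state the key step as ``$H_X$ is a.s.\ a function of $X$'' rather than $S_X=0$, since Definition~\ref{def:ccs} presupposes $H_X\in L^2$. Also note that you use only the easy half of Theorem~\ref{thm:bivariate-sink}. $S_X=0$ yields the conditional form but not canonical-parameter interiority, so ``a reverse representation exists iff $S_X=0$'' is overstated, although you never rely on the ``if'' half.

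The step you defer is where most of the paper's proof lies, and your expectation of what Condition~\ref{cond:affine-reverse} supplies is off. Finiteness of the reverse normalizer $\widetilde A_X(\beta_1y)$ is automatic from $p_Y(y)<\infty$, and support and positivity are inherited. What class membership actually requires is $\beta_1y\in\operatorname{int}\widetilde\Xi_X$. This is also what makes $\log p_Y=\log b_Y(y)+\beta_0y+\widetilde A_X(\beta_1y)$ a $C^2$ function for continuous $Y$.

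The paper obtains this interiority from two facts:
\begin{itemize}
\item $\widetilde\Xi_X$ is convex, by H\"older's inequality;
\item the normalizer is finite at points on both sides of $y$, namely at almost every point for continuous $Y$ and at every level for count $Y$.
\end{itemize}
Consequently the Condition is needed only at the endpoints of a count support of $Y$.

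You also omit a necessary check. The reverse representation $\widetilde\eta(y)=\beta_1y$ is itself affine, so it must satisfy Condition~\ref{cond:affine-reverse} to belong to the class; otherwise it does not witness non-identifiability. It does satisfy it. Its reversibility base is $p_Y(y)e^{-\widetilde A_X(\beta_1y)}=b_Y(y)e^{\beta_0y}$, with parameter domain $\Xi_Y-\beta_0$. For count $X$, forward interiority $\beta_0+\beta_1e\in\operatorname{int}\Xi_Y$ at each endpoint $e$ then gives $\beta_1e\in\operatorname{int}(\Xi_Y-\beta_0)$. $L^2$ curvature is not part of membership.

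Finally, your notion of identifiability excludes only $Y\to X$. You must also exclude the empty graph. This follows because $\E[Y\mid X=x]=A_Y'(\eta(x))$ is nonconstant whenever $\eta$ is nonconstant and $A_Y''>0$.
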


To relate this characterization to the CCS, consider $X\to Y$. The child conditional satisfies
\[
\log p(y\mid x)
=\underbrace{\log b_Y(y)}_{\text{family-specific term}}
+\underbrace{y\eta(x)-A_Y(\eta(x))}_{\text{parent contribution, affine in }y}.
\]
Taking a second derivative or second difference in $y$ removes the affine parent contribution, leaving $H_Y=\kappa_Y(Y)$, where $\kappa_Y(y):=\Dop_Y^2\log p(y\mid x)=\Dop_Y^2\log b_Y(y)$ is the child's own-factor curvature. This curvature need not be constant: for a negative binomial conditional with fixed size $r>0$, $H_Y=\log\frac{(Y+r+1)(Y+1)}{(Y+r)(Y+2)}$, which is zero for $r=1$ and nonconstant otherwise. Thus, simply replacing derivatives with finite differences in SCORE does not ensure constant curvature at a sink. Conditioning on $Y$, however, gives $S_Y=0$.

For the parent $X$, the curvature separates as
\[
H_X
=\kappa_X(X)
+\underbrace{Y\Dop_X^2\eta(X)-\Dop_X^2[A_Y(\eta(X))]}_{\psi_X(X,Y):\ \text{residual curvature}}.
\]
Here $\kappa_X(x):=\Dop_X^2\log p_X(x)$ is the source's own-factor curvature. Conditioning on $X$ makes all terms except $Y\Dop_X^2\eta(X)$ deterministic. Hence $S_X=\E[\{\Dop_X^2\eta(X)\}^2\Var(Y\mid X)]$. Since $\Var(Y\mid X)>0$, $S_Y=0<S_X$ exactly when $\eta$ is nonlinear.

\begin{remark}[Translation to link and index]
Identifiability is determined by nonlinearity of the canonical-parameter function $\eta$, not by the link or index in isolation. Appendix~\ref{app:link-index} gives the link--index characterization.
\end{remark}

\subsection{Multivariate Identifiability}
\label{sec:multivariate}

We extend the CCS argument to recover a causal order in multivariate DAGs and use the OCS to recover parents. A remaining set $R$ obtained by repeatedly removing sinks is \emph{ancestral}. Its marginal $P_R$ retains the semiparametric GLM factorization on $G_R$ and the nonlinearity condition.

\begin{samepage}
\begin{theorem}[DAG recovery by the CCS and OCS]
\label{thm:sink-identification}
Let $P$ follow a semiparametric GLM DAG on $G$.
\begin{itemize}
\item[(i)] If $P$ follows a nonlinear semiparametric GLM DAG, then, for every ancestral remaining set $R$ and $j\in R$, $j$ is a sink of $G_R$ if and only if $S_j(R)=0$.
\item[(ii)] Given a causal order and $j\in\mathrm{Pred}(i)$, $j\in\pa_G(i)$ if and only if $\mathrm{OCS}_{j\to i}>0$.
\end{itemize}
\end{theorem}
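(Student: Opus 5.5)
First I reduce to $R=V$. Let $R$ be an ancestral remaining set. Marginalizing sinks out of the factorization $p=\prod_j p_j(x_j\mid x_{\pa_G(j)})$ just drops their factors, because each factor integrates (or sums) to one over its own coordinate. Hence $p_R=\prod_{j\in R}p_j(x_j\mid x_{\pa_{G_R}(j)})$, with the same conditionals, the same $\eta_j$, and $\pa_{G_R}(j)=\pa_G(j)$. So $P_R$ is a nonlinear semiparametric GLM DAG on $G_R$, and it suffices to prove (i) for $R=V$.

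Write $\log p(x)=\sum_k \log p_k(x_k\mid x_{\pa(k)})$ and apply $\Dop_j^2$. Only the factor of $j$ and the factors of its children involve $x_j$. The own factor contributes $\Dop_j^2\log b_j(x_j)$ if $j$ is a non-source, or $\Dop_j^2\log p_j(x_j)$ if $j$ is a source. Either way this is a function $\kappa_j(x_j)$, because the affine-in-$x_j$ term $x_j\eta_j-A_j(\eta_j)$ is annihilated by the second derivative or second difference. Each child $c\in\ch_G(j)$ contributes
\[
x_c\,\Dop_j^2\eta_c(x_{\pa(c)})-\Dop_j^2 A_c(\eta_c(x_{\pa(c)})).
\]
If $j$ is a sink, then $H_j=\kappa_j(X_j)$ and $S_j=0$ immediately.

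For the converse, suppose $j$ has children. Let $c^\ast$ be a child of $j$ that is last in a causal order among $\ch(j)$; then no other child of $j$ is a descendant of $c^\ast$. I bound $S_j$ below by conditioning further, using
\[
\E[\Var(H_j\mid X_j)]\ \ge\ \E[\Var(H_j\mid X_{V\setminus\{c^\ast\}})].
\]
Given all coordinates except $c^\ast$, every term of $H_j$ is fixed except $X_{c^\ast}\Dop_j^2\eta_{c^\ast}$. This holds because $c^\ast$ is not a parent of any other child of $j$: if it were, that child would come after $c^\ast$, contradicting the choice of $c^\ast$. Also, $\pa(c^\ast)$ cannot contain $c^\ast$. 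Therefore
\[
S_j\ \ge\ \E\bigl[\{\Dop_j^2\eta_{c^\ast}\}^2\,\Var(X_{c^\ast}\mid X_{V\setminus c^\ast})\bigr].
\]
The conditional variance $\Var(X_{c^\ast}\mid X_{V\setminus c^\ast})$ is positive, because conditioning on the Markov blanket still leaves a density in $x_{c^\ast}$ that is proportional to $p_{c^\ast}(\cdot)\prod_{\text{children}}$, and by positivity this is nondegenerate on at least two (three) support points. The factor $\Dop_j^2\eta_{c^\ast}$ is nonzero on a set of positive measure by nonlinearity together with strict positivity of the density on the product interior. So $S_j>0$.

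\textbf{Main obstacle.} The main obstacle is making this step rigorous on count coordinates. There the shifted states $x+e_j$ and $x+2e_j$ must lie in the support, so the admissible domain restricts the conditioning set. I would check that on this domain the fiber over fixed $X_{V\setminus c^\ast}$ still contains at least two values of $x_{c^\ast}$, using positivity and the requirement of at least three support values. I would also check that nonlinearity at some interior $z$ transfers to positive measure; for counts this is automatic, since every point has positive mass.

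For (ii), with a causal order fixed, $P_{R_i}$ factorizes as above and $i$ is a sink of $G_{R_i}$. The terms of $\log p_{R_i}$ that involve $x_i$ come only from $i$'s own factor, $\log b_i(x_i)+x_i\eta_i(x_{\pa(i)})-A_i(\eta_i)$. Applying $\Dop_i\Dop_j$ for $j\ne i$ kills $\log b_i$. The term $-A_i(\eta_i)$ does not depend on $x_i$, so it is also killed. This leaves $H_{ij,R_i}=\Dop_j\eta_i(x_{\pa(i)})$; for counts the shift in $x_i$ by one gives exactly a coefficient of one. If $j\notin\pa(i)$, then $\eta_i$ does not depend on $x_j$, so $H_{ij,R_i}\equiv0$ and $\mathrm{OCS}_{j\to i}=0$. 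If $j\in\pa(i)$, then $\eta_i$ depends nontrivially on $x_j$, so $\Dop_j\eta_i\ne0$ on a set of positive $P_{R_i}$-measure by positivity, and $\mathrm{OCS}_{j\to i}>0$. The nontrivial-dependence requirement in Definition~\ref{def:ef-dag} is exactly what is needed here. For continuous $x_j$, I would invoke continuity or a.e. differentiability so that a nonconstant function has a derivative that is nonzero on a set of positive measure.
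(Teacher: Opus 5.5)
Your proof is correct and follows the paper's argument essentially step for step: restrict to the ancestral marginal, split $H_{j,R}$ into own-factor and child terms, pick a child $c^\star$ that is not a parent of any other child of $j$, write $H_{j,R}=a(U)X_{c^\star}+B(U)$ with $U=X_{R\setminus\{c^\star\}}$, and conclude via $S_j(R)\ge\E[\Var(H_{j,R}\mid U)]>0$ using positivity and nonlinearity, while (ii) reduces to $H_{ij,R_i}=\Dop_j\eta_i$ exactly as you do. The obstacle you flag is milder than you expect: $\mathcal A_j^{(2)}$ depends only on $X_j$, which lies in $U$, so conditioning on it leaves the conditional law of $X_{c^\star}$ given $U$ unchanged, with full support, hence nondegenerate; this is how the paper handles it.
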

\end{samepage}

On an ancestral remaining set $R$, the bivariate decomposition extends to $H_{j,R}(x_R)=\kappa_j(x_j)+\psi_{j,R}(x_R)$, where $\kappa_j(x_j):=\Dop_j^2\log p_j(x_j\mid x_{\pa_G(j)})$ is the own-factor curvature and $\psi_{j,R}$ is the residual curvature from the child contributions. Conditioning on $X_j$ makes $\kappa_j(X_j)$ deterministic, so the CCS measures the conditional variation of $\psi_{j,R}$. This residual vanishes at a sink; at a nonsink, the nonlinearity condition makes its averaged conditional variance positive.

For parent recovery, node $i$ is a sink of $G_{R_i}$.  Under $P_{R_i}$, $H_{ij,R_i}=\Dop_j\eta_i$ if $j\in\pa_G(i)$ and $H_{ij,R_i}=0$ otherwise. Thus a nonconstant parent effect suffices for OCS; nonlinearity is not required.

\begin{corollary}[DAG identifiability]
\label{cor:dag-identifiability}
Let $P$ follow a nonlinear semiparametric GLM DAG on $G$. Then $G$ is identifiable from $P$ within this nonlinear semiparametric GLM class.
\end{corollary}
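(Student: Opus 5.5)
The plan is to turn Theorem~\ref{thm:sink-identification} into an explicit recovery procedure whose output depends only on $P$. Once that is done, identifiability follows: if $P$ admits nonlinear semiparametric GLM DAG representations on two graphs $G$ and $G'$, the same procedure applied to $P$ must return both $G$ and $G'$, so $G=G'$.

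\emph{Step 1: recovering a causal order.} I will define the sequence of remaining sets from $P$ alone. Set $R_d:=V$. For $t=d,d-1,\dots,1$, choose any $j_t\in R_t$ with $S_{j_t}(R_t)=0$, and set $R_{t-1}:=R_t\setminus\{j_t\}$. Each $S_j(R)$ is a functional of the marginal $P_R$, so the procedure uses only $P$.

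I then argue by induction that every $R_t$ is ancestral in $G$.
\begin{itemize}
\item The base case $R_d=V$ holds trivially.
\item Suppose $R_t$ is ancestral. Since $G_{R_t}$ is a nonempty DAG, it has at least one sink.
\item By Theorem~\ref{thm:sink-identification}(i), the nodes with $S_j(R_t)=0$ are exactly the sinks of $G_{R_t}$. Hence a valid choice of $j_t$ always exists, and any such choice is a sink of $G_{R_t}$.
\item Removing a sink from an ancestral set leaves an ancestral set, so $R_{t-1}$ is ancestral.
\end{itemize}
The induction also shows that the resulting $\pi(j_t)=t$ is a causal order of $G$. Indeed, if $k\to i$ is an edge and $i=j_t$, then $k$ is still present in $R_t$, because a node with a child in the current set is not a sink there. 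So $k$ is removed later and $\pi(k)<\pi(i)$. The same argument applied with $G'$ shows that $\pi$ is also a causal order of $G'$. The ambiguity from ties does not matter, since every admissible choice yields a valid order for both graphs.

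\emph{Step 2: recovering the parents.} Fix the order $\pi$ from Step 1. For each $i$, the set $R_i$ and the scores $\mathrm{OCS}_{j\to i}$ are determined by $P$ and $\pi$ alone. By Theorem~\ref{thm:sink-identification}(ii), applied to $G$ and to $G'$, we get
\[
\pa_G(i)=\{j\in\mathrm{Pred}(i):\mathrm{OCS}_{j\to i}>0\}=\pa_{G'}(i).
\]
Hence $G=G'$.

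The main obstacle is in Step 1. Theorem~\ref{thm:sink-identification}(i) has to be applicable to the remaining sets actually produced, and these are generated from $P$ rather than from $G$. The induction handles this, because each $R_t$ is ancestral for every candidate graph simultaneously, so I do not need a single fixed sequence. I will also check that the hypotheses of (i), namely the semiparametric GLM factorization and nonlinearity on $G_R$, pass to ancestral marginals. This is the claim stated just before Theorem~\ref{thm:sink-identification}. Marginalizing out a sink removes its factor, and the other conditionals and their $\eta$'s are unchanged. I will also confirm that the $L^2$ and $L^1$ integrability conditions and the regularity conditions of Appendix~\ref{app:standing} carry over to $P_R$.
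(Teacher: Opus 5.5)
Your argument is correct and is essentially the paper's proof. You peel off common sinks using $P$ alone. Induction shows that each remaining set stays ancestral for every candidate graph, so the resulting order is causal for both $G$ and $G'$. You then read off each parent set from the predecessor marginal $P_{R_i}$.

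The one point to fix is your final step of ``confirming'' that the $L^2$ and $L^1$ integrability conditions carry over to the marginals. They do not follow from membership in the class; they are separate hypotheses (Appendix~\ref{app:score-regularity}). The paper's own Gamma example in Appendix~\ref{app:score-proofs} needs $\alpha>4$ for $H_2\in L^2$. Without these conditions, $S_j(R)$ and $\mathrm{OCS}_{j\to i}$ are not even defined as stated. You can either state them as assumptions, or do what the paper does. The paper does not use the numerical scores; it uses the moment-free equivalences established inside the proof of Theorem~\ref{thm:sink-identification}:
\begin{itemize}
\item \eqref{eq:structural-sink}: $j$ is a sink of $G_R$ if and only if $H_{j,R}$ is a.s.\ a function of $X_j$;
\item \eqref{eq:structural-edge}: $j$ is a parent of $i$ if and only if $|H_{ij,R_i}|>0$ with positive probability.
\end{itemize}
Both depend only on the relevant marginal of $P$, and using them gives the corollary for the whole nonlinear semiparametric GLM class.
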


This class strictly contains SCORE's nonlinear Gaussian ANMs. For example, the Gamma model in Appendix~\ref{app:score-proofs} is identifiable despite its nonconstant sink curvature $H_Y=-(\alpha-1)/Y^2$, while $S_Y=0$.

\section{\DISCO\ Algorithm}
\label{sec:disco-algorithm}

\DISCO\ consists of concrete-score estimation, CCS-based ordering, and parent selection given the estimated order. After applying the rank transform, we estimate joint concrete scores using score-entropy discrete diffusion and project them to the marginal concrete scores needed for CCS-based ordering and OCS-based parent selection. Parent recovery can use CAM pruning \citep{buhlmann2014cam}, PCM-GAM \citep{lundborg2024projected}, or the OCS-based rule. The OCS-based rule reuses the fitted score networks, providing a scalable alternative to regression- and testing-based parent selection. Appendix~\ref{app:diagnostic-parent-selection} gives detailed comparisons of these methods. Appendix~\ref{app:continuous-recovery} presents a continuous-data version and a comparison with DiffAN \citep{sanchez2023diffan}.

\subsection{Concrete-Score Estimation}
\label{sec:concrete-score-est}
\label{sec:rank-preprocess}
We first apply the rank transform, mapping the distinct training values of each count variable $j$ to the empirical rank grid $\{0,\ldots,K_j\}$, where $K_j+1$ is the number of distinct training values. We then estimate concrete scores on the transformed data using discrete diffusion.

The rank transform makes neighboring differences invariant to strictly increasing transformations on the support of each variable. For estimation, $X$, $\mathcal X_j$, and $p_R$ refer to the transformed data, rank grid, and corresponding marginal density.

For $j\in R\subseteq V$ and $y\in\mathcal X_j$, the concrete-score entry is the probability ratio $s_{j,y}^{R}(x_R):=p_R(x_R^{j\to y})/p_R(x_R)$. On the transformed data, we train a joint concrete-score network $s_\theta(x;\sigma)$ to estimate the concrete score of the noise-corrupted joint distribution by minimizing a neighbor-restricted denoising score-entropy objective \citep{lou2024discrete}. Here $\sigma$ denotes the diffusion noise level. As $\sigma\to0$, the concrete score of the noise-corrupted distribution converges to that of the data distribution.

\paragraph{Marginal concrete-score projection.}
Computing the CCS after each sink removal requires the concrete scores of the remaining variables' marginal distribution. These scores cannot generally be obtained by simply restricting the joint concrete score to the remaining variables. The following identity estimates them via conditional-mean regression.

\begin{proposition}[Marginalization of the concrete score]
\label{prop:marginal-concrete-score}
Let $j\in R\subseteq V$ with count-valued $X_j$. For each neighboring replacement $y\in\mathcal X_j$, the concrete-score entries satisfy
\[
s^R_{j,y}(x_R)
=
\E\!\left[s^V_{j,y}(X_V)\mid X_R=x_R\right].
\]
\end{proposition}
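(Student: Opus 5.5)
The proof is a direct computation from the definition of the marginal density. Write $C=V\setminus R$, so that $p_R(x_R)=\int p(x_R,x_C)\,d\nu_C(x_C)$. Since $j\in R$, the replacement $x_R^{j\to y}$ leaves the integration variables $x_C$ untouched, and therefore
\[
p_R(x_R^{j\to y})=\int p\bigl((x_R,x_C)^{j\to y}\bigr)\,d\nu_C(x_C).
\]
Next multiply and divide the integrand by $p(x_R,x_C)$. By the standing regularity assumption (strictly positive densities on the product-support interior), this quantity is positive wherever it matters, so
\[
p_R(x_R^{j\to y})=\int s^V_{j,y}(x_R,x_C)\,p(x_R,x_C)\,d\nu_C(x_C).
\]
Dividing by $p_R(x_R)>0$ turns $p(x_R,x_C)/p_R(x_R)$ into the conditional density $p(x_C\mid x_R)$. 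The left side becomes $s^R_{j,y}(x_R)$ and the right side becomes $\E[s^V_{j,y}(X_V)\mid X_R=x_R]$, which is the stated identity.

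Two technical points need attention. The first is positivity: the identity is asserted for $x_R$ in the support, where $p_R(x_R)>0$. If $p(x_R,x_C)=0$ for some $x_C$, the ratio $s^V_{j,y}$ is undefined there. I would handle this by using the convention from Appendix~\ref{app:notation} that concrete-score entries live only on the admissible domain. Strict positivity on the product support makes every $(x_R,x_C)$ with $x_R$ in the support admissible, so no mass is lost. The neighboring replacement $y$ lies in $\mathcal X_j$, and since the support is a product, $(x_R,x_C)^{j\to y}$ also stays in the support. The second point is integrability. The integrand $s^V_{j,y}\,p$ equals $p(\cdot^{j\to y})\ge 0$, so Tonelli applies and the conditional expectation is finite, equal to $p_R(x_R^{j\to y})/p_R(x_R)$. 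No moment assumption on $s^V$ is needed.

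The only real obstacle is the bookkeeping on the domain. I need to check that the marginal density at the shifted point involves exactly the same set of $x_C$ values as the conditional law at $x_R$. This holds because the support is a product set that does not depend on the other coordinates, in line with Definition~\ref{def:ef-dag} and the positivity condition. The count nature of $X_j$ plays no role beyond making $y$ a legitimate point of $\mathcal X_j$, so the argument uses no derivatives.
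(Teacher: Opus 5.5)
Your proof is correct and follows the same Bayes-rule computation as the paper. You write $p_R(x_R^{j\to y})$ as an integral over the removed coordinates, multiply and divide by the joint density, and recognize the conditional density of $X_{V\setminus R}$ given $X_R=x_R$. There are two minor differences. You make explicit the product-support positivity that prevents mass loss where $p(x_R,x_C)=0$, which the paper leaves to its standing regularity conditions. You also obtain finiteness of the conditional expectation from nonnegativity, whereas the paper additionally shows $\E[\bar s_j^{V,\delta}(X_V)]\le 1$ for the zero-extended neighboring entries. That global $L^1$ bound is not needed for the identity itself, but it supports the conditional-mean regression used by \DISCO{}.
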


Applying Proposition~\ref{prop:marginal-concrete-score} to the noise-corrupted distribution gives the same conditional-mean identity at each noise level $\sigma$. Using predictions from $s_\theta(x;\sigma)$ as targets, we learn a single mask-conditioned projection network with remaining set $R$ as an input. The resulting estimator $\widehat s_{\theta,\phi}^{\,R}(x_R;\sigma)$, with projection parameters $\phi$, provides marginal concrete scores for different sets $R$ without refitting. Appendices~\ref{app:score-estimation-proofs} and~\ref{app:algorithm-details} provide the theoretical analysis and implementation details, respectively.

\subsection{CCS and OCS Computation}
\label{sec:curvature-score-computation}
We compute the CCS and OCS at near-zero noise levels on held-out folds, using estimators fitted on the other folds. Fold indices on fitted quantities are suppressed.

Let $\widehat\ell_{j,R}(x_R;\sigma)$ denote the projection network's predicted log concrete-score entry for increasing $x_j$ by one. Where all required neighbors lie in the grid, it equals $\log\bigl[\widehat s_{\theta,\phi}^{\,R}(x_R;\sigma)\bigr]_{j,\,x_j+1}$, and for $i,j\in R$ we estimate curvature by
\begin{equation}
\label{eq:estimated-curvatures}
\widehat H_{ij,R}(x_R;\sigma)
:=\widehat\ell_{j,R}(x_R+e_i;\sigma)
-\widehat\ell_{j,R}(x_R;\sigma).
\end{equation}
For the CCS, we first average diagonal curvature estimates over a finite set of noise levels $\mathcal G_\sigma$, without taking absolute values: $\widehat H_{j,R}:=|\mathcal G_\sigma|^{-1} \sum_{\sigma\in\mathcal G_\sigma}\widehat H_{jj,R}(\cdot;\sigma)$.

To estimate the CCS, we first restrict evaluation to observations where the required finite differences are defined. We group these observations by the rank value of $X_j$ and take a sample-size-weighted average of the within-group sample variances of $\widehat H_{j,R}$. For these observations, let $\mathcal J_{j,v}=\{a:X_{aj}=v\}$ and $n_v=|\mathcal J_{j,v}|$. Keeping the levels $\mathcal V_j=\{v:n_v\ge n_{\min}\}$ gives
\begin{equation}
\label{eq:cond-var-score}
\widehat S_j(R)
=
\frac{1}{\sum_{v\in\mathcal V_j}n_v}
\sum_{v\in\mathcal V_j}n_v\,
\widehat{\Var}\!\left(
\{\widehat H_{j,R}(X_a):a\in\mathcal J_{j,v}\}
\right).
\end{equation}
Here $\widehat{\Var}$ is the sample variance.

We compute the OCS separately on each held-out fold. Given an estimated causal order $\widehat\pi$, define $\mathrm{Pred}_{\widehat\pi}(i)=\{j:\widehat\pi(j)<\widehat\pi(i)\}$ and $\widehat R_i=\{i\}\cup\mathrm{Pred}_{\widehat\pi}(i)$. Let $\mathcal I_1,\ldots,\mathcal I_B$ denote these folds. For candidate edge $j\to i$, let $\mathcal I_b^{ij}$ index the evaluation observations used for that edge. For nonempty $\mathcal I_b^{ij}$, we take absolute values before averaging over noise levels and observations:
\begin{equation}
\label{eq:empirical-ocs}
\widehat{\mathrm{OCS}}_{j\to i}^{(b)}
=
\frac{1}{|\mathcal I_b^{ij}|\,|\mathcal G_\sigma|}
\sum_{a\in\mathcal I_b^{ij}}\sum_{\sigma\in\mathcal G_\sigma}
\bigl|\widehat H_{ij,\widehat R_i}(X_{a,\widehat R_i};\sigma)\bigr|.
\end{equation}
\subsection{DAG Recovery}
\label{sec:dag-recovery}

Starting from $R=V$, \DISCO\ repeatedly removes a minimizer of $\widehat S_j(R)$ and assigns that node the last available position. Given this order, parents can be selected using CAM, PCM-GAM, or OCS. Algorithm~\ref{alg:csm-recursive} presents the OCS-based implementation.

For scalable parent selection, we use the OCS estimates without fitting additional regressions or performing conditional-independence tests. Estimated absolute curvatures can be positive even for nonparents. To account for this noise baseline, we standardize the OCS estimates for each target node and fold, obtaining $Z_{j\to i}^{(b)}$. We retain edges whose selection frequency $\Pi_{j\to i}(\tau_Z):=B^{-1}\sum_{b=1}^B\mathbf 1\{Z_{j\to i}^{(b)}>\tau_Z\}$ is at least $\pi_{\min}$.

\begin{algorithm}[H]
\caption{\DISCO\ with OCS-based parent selection.}
\label{alg:csm-recursive}
\begin{algorithmic}
\State \textbf{Input:} marginal concrete-score estimator $\widehat s_{\theta,\phi}^{\,R}$ and evaluation folds $\{\mathcal I_b\}_{b=1}^B$
\end{algorithmic}
\smallskip
\noindent
\begin{minipage}[t]{0.47\linewidth}
\textbf{Ordering}
\begin{algorithmic}
\State $R\gets V$
\While{$|R|>1$}
    \State Compute $\widehat S_j(R)$ for all $j\in R$
    \State $\widehat j_R\in\arg\min_{j\in R}\widehat S_j(R)$
    \State $\widehat\pi(\widehat j_R)\gets|R|$
    \State $R\gets R\setminus\{\widehat j_R\}$
\EndWhile
\State Assign the sole remaining node to position $1$
\end{algorithmic}
\end{minipage}\hfill
\begin{minipage}[t]{0.51\linewidth}
\textbf{Parent selection given $\widehat\pi$}
\begin{algorithmic}
\For{each node $i\in V$}
    \State $\widehat R_i\gets\{i\}\cup\mathrm{Pred}_{\widehat\pi}(i)$
    \State Compute the foldwise $\widehat{\mathrm{OCS}}_{j\to i}^{(b)}$
    \State Standardize and form $\Pi_{j\to i}(\tau_Z)$
    \State Select $j\to i$ if $\Pi_{j\to i}(\tau_Z)\ge\pi_{\min}$
\EndFor
\State \textbf{Output:} DAG $\widehat G$
\end{algorithmic}
\end{minipage}
\end{algorithm}

For the consistency result, we consider a variant of Algorithm~\ref{alg:csm-recursive} that selects parents by thresholding the raw OCS at a deterministic level $\lambda_n>0$.

\begin{theorem}[Plug-in Consistency of \DISCO{} with raw OCS thresholding]
\label{thm:plugin-consistency}
Under the assumptions of Theorem~\ref{thm:sink-identification} and Assumption~\ref{ass:unified-curvature-consistency}, this variant recovers a causal order of $G$ with probability tending to one as $n\to\infty$. If $\lambda_n\to0$ and the curvature-estimation error $\varepsilon_n$ from Assumption~\ref{ass:unified-curvature-consistency} satisfies $\varepsilon_n=o_p(\lambda_n)$, then $\Pr\{\widehat G=G\}\longrightarrow1$.
\end{theorem}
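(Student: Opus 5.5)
We would prove Theorem~\ref{thm:plugin-consistency} by the standard population-gap plus uniform-estimation-error argument, in two stages: ordering, then parents. Since Assumption~\ref{ass:unified-curvature-consistency} is not displayed here, we take it to give an error $\varepsilon_n=o_p(1)$ that uniformly controls, over all remaining sets $R$ visited (finitely many, at most $2^d$) and all $j\in R$, the quantities $|\widehat S_j(R)-S_j(R)|$ and $|\widehat{\mathrm{OCS}}_{j\to i}-\mathrm{OCS}_{j\to i}|$. If it is instead stated for the curvature estimates $\widehat H$ themselves, the first step is to transfer it to these scores. This transfer uses that the CCS is a continuous functional of $H_{j,R}$ in $L^2$, via the bound $|\sqrt{\Var(U\mid\cdot)}-\sqrt{\Var(V\mid\cdot)}|\le\|U-V\|$, together with a law of large numbers for the within-level sample variances; levels with $n_v<n_{\min}$ have vanishing probability weight. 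For the OCS the transfer uses the reverse triangle inequality $\bigl||a|-|b|\bigr|\le|a-b|$ averaged over folds and noise levels.

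\textbf{Ordering.} Define the population gap
\[
\delta:=\min_{R\ \text{ancestral}}\ \min_{j\in R\ \text{non-sink of}\ G_R} S_j(R).
\]
By Theorem~\ref{thm:sink-identification}(i), $\delta>0$, since it is a minimum over finitely many strictly positive numbers, and every sink $j$ of $G_R$ has $S_j(R)=0$. On the event $E_n=\{\varepsilon_n<\delta/2\}$, whose probability tends to one, we induct on the step $|V|-|R|$. Suppose $R$ is ancestral. Every sink of $G_R$ has $\widehat S_j(R)<\delta/2$, and every non-sink has $\widehat S_j(R)>\delta/2$. The argmin is therefore a sink of $G_R$. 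Removing a sink keeps $R$ ancestral, and assigning it the last position is consistent with a causal order. Hence $\widehat\pi$ is a causal order of $G$ on $E_n$.

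One subtlety needs care. The sets $R$ visited are random, but on $E_n$ they are always ancestral, so uniformity over ancestral sets suffices. Ties among sinks are harmless because any sink works.

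\textbf{Parents.} On $E_n$ the order $\widehat\pi$ is valid, so $\widehat R_i$ equals $R_i$ for some causal order, and Theorem~\ref{thm:sink-identification}(ii) applies. Let
\[
\gamma:=\min_{i}\ \min_{j\in\pa_G(i)}\mathrm{OCS}_{j\to i}>0,
\]
where the minimum is taken over all causal orders, of which there are finitely many. Nonparents have $\mathrm{OCS}_{j\to i}=0$. For nonparents, $\widehat{\mathrm{OCS}}\le\varepsilon_n<\lambda_n$ with probability tending to one, because $\varepsilon_n=o_p(\lambda_n)$. For parents, $\widehat{\mathrm{OCS}}\ge\gamma-\varepsilon_n>\lambda_n$ eventually, because $\lambda_n\to0$ and $\varepsilon_n\to0$. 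Intersecting these finitely many events with $E_n$ gives $\Pr\{\widehat G=G\}\to1$.

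\textbf{Main obstacle.} The main difficulty is the first step: matching the form of Assumption~\ref{ass:unified-curvature-consistency} to a uniform score-level error. This involves controlling the plug-in conditional variance when $\widehat H$ is estimated on other folds, handling the finite-difference admissible domains, and handling the noise-level averaging over $\mathcal G_\sigma$ (bias at $\sigma>0$ must also vanish). We would first try to assume, or show, that $\sup_R\max_j\|\widehat H_{j,R}-H_{j,R}\|_{L^2(P_R)}\le\varepsilon_n$ conditionally on the training folds. We would then apply the triangle inequality to the sample variances within each level $v$ of $X_j$, which is finite after the rank transform, using cross-fitting to treat $\widehat H$ as fixed.
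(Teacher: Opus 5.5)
Your two-stage structure (a positive CCS gap for ordering, then threshold separation of the OCS) matches the paper's. However, the step that uses the rate hypothesis does not go through as written.

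In the theorem, $\varepsilon_n$ is the \emph{curvature} error of Assumption~\ref{ass:unified-curvature-consistency}. It is a maximum of population norms $\|\widehat H_{ij,R}^{(n)}(\cdot;\sigma)-H_{ij,R}\|_{L^2(Q_{ij,R})}$, not an error bound on the scores. So the inequality $\widehat{\mathrm{OCS}}\le\varepsilon_n$ is not available: $\widehat{\mathrm{OCS}}$ is a random average over the evaluation sample and can exceed a population norm.

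Your fallback proves a different theorem. You would transfer the assumption to one uniform score-level error and then require that error to be $o_p(\lambda_n)$. But for a true parent, $|\widehat{\mathrm{OCS}}-\mathrm{OCS}|$ contains the law-of-large-numbers fluctuation of the oracle average of $|H_{ij,R}|$. That fluctuation is only $o_p(1)$, and no hypothesis compares it with $\lambda_n$.

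The missing observation is that the rate is needed only at nonparents, where the oracle term vanishes identically. There $H_{ij,R}=0$ $Q_{ij,R}$-a.s., so $\widehat{\mathrm{OCS}}$ is exactly the evaluation-sample mean of $|\widehat H-H|$, averaged over noise levels. The evaluation sample is independent of the training information $\mathcal F_n$, so $\E[\widehat{\mathrm{OCS}}\mid\mathcal F_n]\le\varepsilon_n$ (using $L^1\le L^2$). Conditional Markov with bounded convergence then gives $\Pr\{\widehat{\mathrm{OCS}}>\lambda_n\}\le\E[\min\{1,\varepsilon_n/\lambda_n\}]\to0$. For parents, only consistency and $\lambda_n\to0$ are needed, as you wrote.

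The same conditioning device is what your CCS transfer lacks, though there it is a matter of rigor rather than a wrong idea. You propose a population continuity bound followed by ``a law of large numbers for the within-level sample variances'' of $\widehat H$. Since $\widehat H$ changes with $n$, that is a triangular-array claim. The paper instead uses that $\sqrt{\mathsf S_N(\cdot)}$, the square root of the grouped estimator on the $N$ admissible evaluation points, is a seminorm. This gives $|\sqrt{\mathsf S_N(\overline H)}-\sqrt{\mathsf S_N(H)}|^2\le c_k(N/D_N)T_n$. Here $\overline H$ is the noise-averaged estimate, $D_N$ counts observations in retained levels, $c_k\le2$, and $T_n$ is the evaluation mean of $(\overline H-H)^2$, with $\E[T_n\mid\mathcal F_n]\le\varepsilon_n^2$. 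The law of large numbers is then applied only to the fixed $H$.

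Also, the population result is not posed on a finite rank grid: supports may be countably infinite. Discarding sparse levels therefore requires truncating to finite level sets, which yields $D_N/N\to1$ together with an $L^2$ tail bound.
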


\paragraph{Transformation invariance.}
\label{sec:support-index}
For the population result on count data, let $T_j$ map each value in the ordered support of coordinate $j$ to its support rank. The fitted map $\widehat T_j$ is its sample counterpart.

\begin{proposition}[Invariance to strictly increasing transformations]
\label{prop:support-index-invariance}
\textup{\DISCO} with OCS-based parent selection returns the same graph under strictly increasing transformations of each variable when using the same seed.
\end{proposition}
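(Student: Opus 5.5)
The approach is to show that the first step of \DISCO, the rank transform, produces bit-for-bit identical transformed training data under any strictly increasing coordinate-wise transformation. Everything downstream depends on the data only through these transformed values, together with the seed and the fold assignment. Identical inputs therefore give identical outputs.

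\textbf{Step 1: the rank transform is unchanged.} Let $\phi_j$ be strictly increasing on the support of coordinate $j$, and set $\tilde X_{aj}=\phi_j(X_{aj})$. Because $\phi_j$ is injective, the set of distinct training values $\{\tilde X_{aj}\}_a$ is in bijection with $\{X_{aj}\}_a$. Because it is order preserving, this bijection keeps the sorted order of those distinct values. Hence $K_j$ is unchanged, and the fitted map satisfies $\widehat{\tilde T}_j\circ\phi_j=\widehat T_j$ on training values. Consequently every observation receives the same rank, $\widehat{\tilde T}_j(\tilde X_{aj})=\widehat T_j(X_{aj})$. The same argument applies to the population map $T_j$ on the ordered support. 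Held-out evaluation points are mapped through the same fitted grid, so they also receive identical ranks, provided the out-of-sample rule is itself defined by order comparisons, such as nearest rank or clipping.

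\textbf{Step 2: the rest of the pipeline sees the same input.} Sample indices are untouched, so the fold partition is the same once the seed is fixed. With the same seed, the same initialization, the same minibatch order, and identical inputs, the following quantities coincide exactly:
\begin{itemize}
\item the score-entropy training of $s_\theta$ and of the projection network $\widehat s^{\,R}_{\theta,\phi}$;
\item the curvature estimates \eqref{eq:estimated-curvatures}, since these are finite differences on the rank grid;
\item the grouping by rank value $v$, the sets $\mathcal V_j$, and $\widehat S_j(R)$ in \eqref{eq:cond-var-score};
\item the argmin sequence, including any deterministic tie-breaking, and hence the order $\widehat\pi$;
\item the foldwise $\widehat{\mathrm{OCS}}$ in \eqref{eq:empirical-ocs}, their standardizations $Z^{(b)}_{j\to i}$, the selection frequencies $\Pi_{j\to i}(\tau_Z)$, and therefore $\widehat G$.
\end{itemize}
Formally, this is an induction along the algorithm's steps, each of which is a deterministic function of the transformed data and the seed.

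\textbf{Main obstacle.} The hard part is being precise about what "same seed" must fix. Randomness enters through diffusion noise sampling, network initialization, and fold splits, and the argument needs all of it to be generated independently of the raw data values.

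A second point is why the result is stated for OCS-based selection specifically. CAM and PCM-GAM regress on the original-scale values, so they are not invariant. The OCS rule, by contrast, uses only the transformed data through the already-fitted networks. I would also note that if \DISCO\ is run on the raw values rather than the ranks, invariance fails, because finite differences on $\mathcal X_j$ change under a nonlinear $\phi_j$. This is why the rank step is essential.
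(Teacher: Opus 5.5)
Your proposal is correct and matches the paper's argument. Strict monotonicity preserves the ordered support, so the population and fitted rank transforms yield identical ranked training and evaluation inputs, and with a fixed seed every downstream quantity coincides, including the CCS order and the OCS parent sets. The proviso you leave open is met by the paper's actual rule: values below $v_{j,0}$ or above $v_{j,K_j}$ are clipped, and values strictly between $v_{j,k}$ and $v_{j,k+1}$ are sent to $\max(k,1)$, which uses only order comparisons. If by \emph{nearest rank} you mean rounding to the closest training value, however, that rule is distance-based and would break invariance, so it is the wrong example.
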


Proofs of the formal results in this section are provided in Appendix~\ref{app:disco-proofs}.

\paragraph{Computational complexity.}
\label{sec:disco-cost}
Following DiffAN~\citep{sanchez2023diffan}, we fix the training epochs and evaluation sample size, and count each score evaluation at unit cost. With fixed hidden widths and mask draws per observation, the rank transform and training take $O(nd)$ for $n$ training samples on consecutive counts. Ordering requires $O(d^2)$ shifted-state evaluations in $O(d)$ sequential batches. OCS-based parent selection uses $O(d)$ batches and $O(d^2)$ candidate-edge operations. These counts give $O(nd+d^2)$ for the OCS-based implementation; CAM and PCM-GAM require additional regressions or tests.

\section{Numerical Experiments}
\label{sec:experiments}

We evaluate whether conditional curvature restores the ordering signal when nonconstant own-factor curvature invalidates the constant-curvature criterion, whether \DISCO{} achieves family-agnostic count-DAG recovery within the semiparametric GLM DAG class, and whether it scales to large DAGs. We compare \DISCO{} with DiffAN \citep{sanchez2023diffan}, \NOTEARSls{} \citep{zheng2020learning}, ODS \citep{park2018qvf}, and MRS \citep{park2019ghd}. ODS (Oracle) receives the true nodewise conditional families and their fixed parameters, whereas \DISCO{} receives no family labels. For parent selection, we compare CAM's additive-regression pruning \citep{buhlmann2014cam} and PCM-GAM's conditional-mean independence testing \citep{lundborg2024projected}. Unless otherwise stated, \DISCO{} uses OCS-based parent selection. Experimental settings and evaluation metrics are given in Appendix~\ref{app:experimental-setup}. Additional experiments and algorithm diagnostics appear in Appendices~\ref{app:disco-exp} and~\ref{app:estimator-diagnostics}.

\paragraph{DAG recovery versus sample size.}
Figure~\ref{fig:curvature-scaling} compares \DISCO{} with the baselines on DAGs with $d=100$ as $n\in\{500,1000,2000,3000,4000,5000\}$ increases: NB with $r=1$ in panel (a), NB with $r=6$ in panel (b), and the mixed NB $\to$ Bin $\to$ Poi configuration in panel (c). \DISCO{} attains the highest median $F_1$ at larger sample sizes under $r=6$ and mixed families, whereas ODS (Oracle) remains strongest under $r=1$. As illustrated by the negative binomial example in Section~\ref{sec:bivariate}, own-factor curvature vanishes when $r=1$, so the constant-curvature criterion is already sufficient and estimating the more general conditional curvature criterion can incur an additional finite-sample efficiency cost. Nevertheless, \DISCO{} shows improved recovery at larger sample sizes. DiffAN shows no consistent gains with more data under $r=6$ and mixed families, where the constant-curvature criterion fails because sink curvature is nonconstant.

\begin{figure}[htbp]
\centering
\begin{minipage}{\linewidth}
\centering
\includegraphics[width=\linewidth]{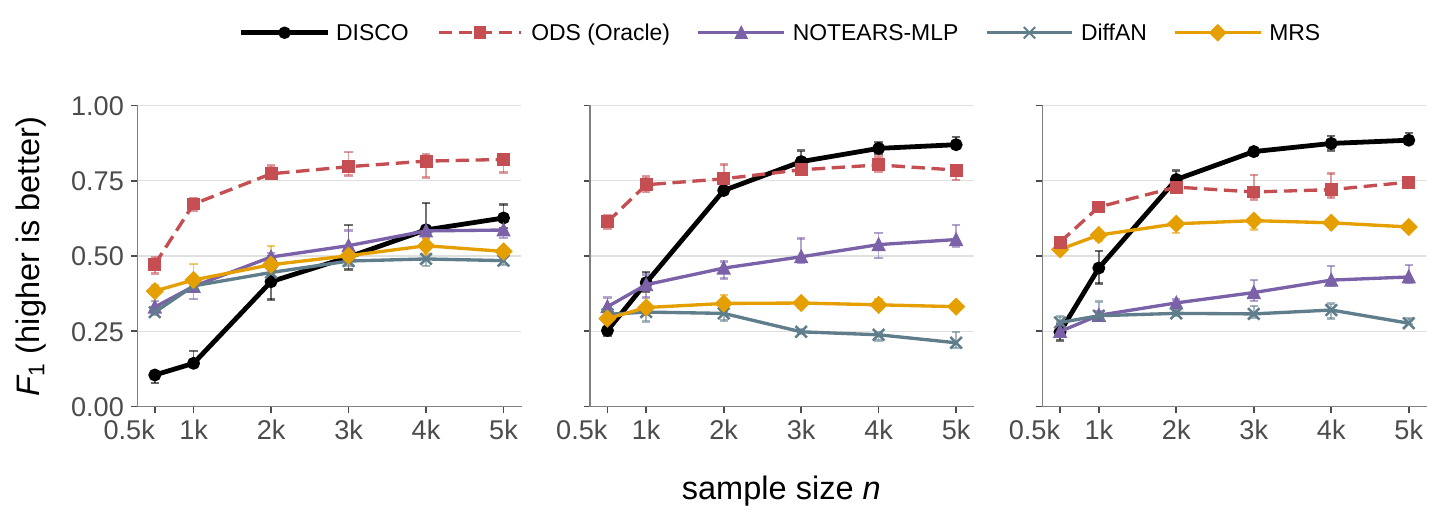}
\par\vspace{2pt}
{\normalsize
\makebox[\linewidth][l]{%
\hspace*{0.228411\linewidth}%
\makebox[0pt][c]{(a) NB ($r=1$)}%
\hspace*{0.314054\linewidth}%
\makebox[0pt][c]{(b) NB ($r=6$)}%
\hspace*{0.314054\linewidth}%
\makebox[0pt][c]{(c) NB $\to$ Bin $\to$ Poi}%
}}
\end{minipage}
\caption{End-to-end $F_1$ versus sample size on DAGs with $d=100$: (a) NB with $r=1$; (b) NB with $r=6$; and (c) the mixed NB $\to$ Bin $\to$ Poi configuration. Points and error bars show the median and interquartile range. ODS (Oracle) uses the true nodewise conditional families and their fixed parameters.}
\label{fig:curvature-scaling}
\end{figure}

\paragraph{Conditional curvature ablation.}
We compare CCS and the constant-curvature criterion on the same sets of remaining nodes using the same fitted networks ($d=50$, $n=5000$). Table~\ref{tab:ccs-ablation}(a) shows that CCS improves the accuracy of identifying sinks, with larger gains for Poisson and negative binomial data than for binomial data. This improvement does not always increase DAG recovery accuracy: for binomial data, mean $F_1$ is $0.877$ with CCS and $0.884$ with the constant-curvature criterion. Table~\ref{tab:ccs-full-dag} in Appendix~\ref{app:ccs-diagnostics} reports DAG recovery for all three conditional families.

\paragraph{Parent recovery given an estimated order.}
We compare OCS, CAM, and PCM-GAM given the same estimated \DISCO{} order at $d=50$ and $n=5000$, using disjoint ordering and parent-recovery samples. CAM improves mean $F_1$ over OCS for NB and Bin, whereas OCS performs best for Poi (Table~\ref{tab:parent-methods-main}(b)). Appendix~\ref{app:diagnostic-parent-selection} gives the test settings, SHD, and true-order comparisons.

\begin{table}[H]
\TableStyle
\caption{Ordering and parent-selection comparisons ($d=50$, $n=5000$). (a) Sink-selection accuracy: Constant denotes the constant-curvature criterion; gain is in percentage points. (b) Mean $F_1$ given the same estimated \DISCO{} order.}
\label{tab:ccs-ablation}
\label{tab:parent-methods-main}
\begin{minipage}[t]{0.47\linewidth}
\centering
(a) Sink selection (\%)\par\smallskip
\small
\begin{tabular*}{\linewidth}{@{\extracolsep{\fill}}lrrr@{}}
\toprule
Family & Constant & CCS & Gain \\
\midrule
Poi & 68.3 & \textbf{81.7} & $+13.3$ \\
NB & 73.3 & \textbf{93.3} & $+20.0$ \\
Bin & 80.0 & \textbf{83.3} & $+3.3$ \\
\bottomrule
\end{tabular*}
\end{minipage}\hfill
\begin{minipage}[t]{0.49\linewidth}
\centering
(b) Parent recovery ($F_1$)\par\smallskip
\small
\begin{tabular*}{\linewidth}{@{\extracolsep{\fill}}lrrr@{}}
\toprule
Selector & Poi & NB & Bin \\
\midrule
OCS & \textbf{0.858} & 0.751 & 0.828 \\
CAM & 0.850 & \textbf{0.823} & \textbf{0.898} \\
PCM-GAM & 0.712 & 0.703 & 0.738 \\
\bottomrule
\end{tabular*}
\end{minipage}
\end{table}

\paragraph{Scalability with graph size.}
Table~\ref{tab:scalability-main} reports $F_1$ and total wall-clock time as $d$ grows on Poisson DAGs with $n=5000$. At $d=1000$, \DISCO{} attains median $F_1=0.769$ in 57.4 minutes on a single GPU, whereas all baselines exceed the 240-minute time limit. This demonstrates practical scalability; the runtimes are not intended as a hardware-matched speed comparison.

\begingroup
\begin{table}[htbp]
\TableStyle
\caption{Scalability on Poisson DAGs ($n=5000$; numerical entries are medians). \emph{Timeout}: runtime exceeds 240 minutes.}
\label{tab:scalability-main}
\begin{minipage}{\linewidth}
\normalsize\rmfamily
\setlength{\parskip}{0pt}
\setlength{\tabcolsep}{0pt}
\renewcommand{\arraystretch}{1.0}
\setlength{\aboverulesep}{0.3ex}
\setlength{\belowrulesep}{0.4ex}
\begin{tabular*}{\linewidth}{@{\extracolsep{\fill}}lcc@{\hspace{5pt}}cc@{\hspace{5pt}}cc@{\hspace{5pt}}cc@{}}
\toprule
& \multicolumn{2}{c}{$d=100$}
& \multicolumn{2}{c}{$d=200$}
& \multicolumn{2}{c}{$d=500$}
& \multicolumn{2}{c}{$d=1000$} \\
\cmidrule(lr){2-3}\cmidrule(lr){4-5}\cmidrule(lr){6-7}\cmidrule(lr){8-9}
\makebox[0.20\linewidth][l]{Method}
& \makebox[0.08\linewidth]{$F_1$} & \makebox[0.105\linewidth]{\shortstack{Total time\\(min)}}
& \makebox[0.08\linewidth]{$F_1$} & \makebox[0.105\linewidth]{\shortstack{Total time\\(min)}}
& \makebox[0.08\linewidth]{$F_1$} & \makebox[0.105\linewidth]{\shortstack{Total time\\(min)}}
& \makebox[0.08\linewidth]{$F_1$} & \makebox[0.105\linewidth]{\shortstack{Total time\\(min)}} \\
\midrule
\textbf{\DISCO{}} & \textbf{0.915} & \makebox[2.7em][r]{2.7} & \textbf{0.835} & \makebox[2.7em][r]{3.5} & \textbf{0.769} & \makebox[2.7em][r]{10.8} & \textbf{0.769} & \makebox[2.7em][r]{57.4} \\
\textsc{ODS} (Oracle) & 0.725 & \makebox[2.7em][r]{12.5} & 0.713 & \makebox[2.7em][r]{37.0} & 0.729 & \makebox[2.7em][r]{204.7} & -- & \emph{Timeout} \\
MRS & 0.786 & \makebox[2.7em][r]{3.5} & 0.763 & \makebox[2.7em][r]{10.8} & 0.742 & \makebox[2.7em][r]{96.3} & -- & \emph{Timeout} \\
\NOTEARSls{} & 0.467 & \makebox[2.7em][r]{174.1} & -- & \emph{Timeout} & -- & \emph{Timeout} & -- & \emph{Timeout} \\
DiffAN & -- & \emph{Timeout} & -- & \emph{Timeout} & -- & \emph{Timeout} & -- & \emph{Timeout} \\
\bottomrule
\end{tabular*}\par
\end{minipage}
\end{table}
\endgroup

\paragraph{Family-agnostic DAG learning on real MLB data.}
We revisit the Lahman batting data previously analyzed for count-DAG learning by \citet{park2019mrs}, using $d=17$ count variables from $n=1{,}400$ player--seasons during 2019--2025. All methods in this diagnostic use the same four-level quantile representation (q4), chosen to increase observations per level for CCS estimation. We evaluate ordering against seven domain-informed accounting and containment relations among standard baseball statistics \citep{mlbRecording}. ODS requires a conditional-family specification, and its mean ordering agreement \(A_{\mathrm{top}}\) with these references varies from $0.333$ to $0.429$ across family choices. In contrast, \DISCO{} requires no family or link input and attains mean $A_{\mathrm{top}}=0.905$. Although the partial references do not define a ground-truth DAG, \DISCO{} shows higher agreement with these domain-informed references than the three fixed-family ODS variants, without requiring a family choice. Appendix~\ref{app:lahman} provides the complete diagnostics, including pairwise family-specification comparisons.

\section{Conclusion}
\label{sec:conclusion}

\DISCO{} enables score-matching-based causal discovery from count data without prespecifying nodewise conditional families, relaxing a central assumption of existing identifiable count-DAG methods. The CCS and OCS provide ordering and parent-recovery criteria for nonlinear semiparametric GLM DAGs under the stated assumptions. A joint concrete-score network and marginal projection allow \DISCO{} to estimate both criteria, with OCS-based recovery scaling to 1,000-node DAGs in our experiments. The main remaining theoretical challenge is to establish end-to-end consistency of \DISCO{}, from finite-noise discrete-diffusion score estimation and marginal projection to curvature estimation and DAG recovery.

\subsection*{AI Use Statement}
Generative AI tools were used to assist with aspects of the proofs and experimental work. The authors reviewed the AI-assisted material and take responsibility for the final manuscript and results.

\subsection*{Reproducibility statement}
Implementation details, including preprocessing, training objectives, mask sampling, and parent selection, are provided in Appendix~\ref{app:algorithm-details}. Appendix~\ref{app:disco-exp} specifies the data-generating mechanisms, baseline configurations, evaluation metrics, and replication protocols. Appendix~\ref{app:estimator-diagnostics} describes the algorithm diagnostics, and Appendix~\ref{app:lahman} gives the Lahman cohort construction, paired cross-validation design, and reference sets. The appendix also contains the assumptions and proofs of the theoretical results.

\appendix
\addtocontents{toc}{\protect\setcounter{tocdepth}{2}}
\AppendixContents

\section{Notation}
\label{app:notation}

\begin{table}[H]
\TableStyle
\caption{Recurring symbols.}
\label{tab:notation}
\begin{tabular}{@{}p{0.28\linewidth}p{0.68\linewidth}@{}}
\toprule
Symbol & Meaning (defined in) \\
\midrule
$G=(V,E)$, $|V|=d$ & DAG and number of nodes (Sec.~\ref{sec:prelim}) \\
$\widehat G$ & estimated DAG (Sec.~\ref{sec:dag-recovery}) \\
$\pa_G(j),\ \ch_G(j)$ & parents and children of $j$ (a sink has $\ch_G(j)=\emptyset$) \\
source node & a node $j$ with $\pa_G(j)=\emptyset$ \\
$R\subseteq V$ & remaining node set, ancestral when obtained by recursive sink removal \\
$G_R,\ P_R,\ p_R$ & induced subgraph, marginal distribution of $X_R$, and its density \\
$p_{R,\sigma}$ & density of the noise-corrupted $R$-marginal (App.~\ref{app:score-estimation-proofs}) \\
$\pi,\ \mathrm{Pred}(i),\ R_i$ & causal order, predecessors $\{j:\pi(j)<\pi(i)\}$, and $R_i=\{i\}\cup\mathrm{Pred}(i)$ (Sec.~\ref{sec:ccs}) \\
$\widehat\pi,\ \mathrm{Pred}_{\widehat\pi}(i),\ \widehat R_i$ & estimated counterparts of the preceding row (Sec.~\ref{sec:curvature-score-computation}) \\
$x_A^{j\to y}$ & $x_A$ with coordinate $j$ reset to $y$ (Sec.~\ref{sec:prelim}) \\
$\Dop_j,\ \Dop_j^2$ & unit coordinate (second) difference, with $\partial_j,\partial_j^2$ if continuous \\
$b_j$ & unspecified base function of a non-source conditional (Def.~\ref{def:ef-dag}) \\
$g_j,\ f_j$ & link and its corresponding index \\
$\eta_j$ & canonical-parameter function encoding parent dependence \\
$A_j$ & log-partition of a non-source conditional's natural exponential family \\
$\kappa_j(x_j)$ & own-factor curvature of node $j$ (Sec.~\ref{sec:multivariate}) \\
$\psi_X,\ \psi_{j,R}$ & residual curvature in the bivariate or marginal decomposition (Secs.~\ref{sec:bivariate}--\ref{sec:multivariate}) \\
$H_{j,R}=\Dop_j^2\log p_R$ & diagonal curvature of the marginal (Sec.~\ref{sec:ccs}) \\
$H_{ij,R}=\Dop_i\Dop_j\log p_R$ & off-diagonal curvature (parent recovery) \\
$\mathcal A_j^{(2)},\ \mathcal A_{ij}^{(1,1)}$ & admissible-difference events for $\Dop_j^2$, $\Dop_i\Dop_j$ \\
$S_j(R)$ & conditional curvature score (CCS), \eqref{eq:ccs-def} \\
$\mathrm{OCS}_{j\to i}$ & off-diagonal curvature score, \eqref{eq:parent-score} \\
$O_{j\to i}(R)$ & OCS on a general admissible marginal in the consistency proof \\
$T_j$ & population rank transform (Sec.~\ref{sec:support-index}) \\
$\widehat T_j$ & fitted rank transform from the score-training split (Sec.~\ref{sec:rank-preprocess}) \\
$s_{j,y}^R$ & concrete-score entry, neighboring when $y=x_j\pm1$ \\
$s_\theta$ & joint concrete-score network \\
$\widehat s_{\theta,\phi}^R$ & $R$-marginal concrete-score estimate \\
$\widehat\ell_{j,R}(\cdot;\sigma)$ & predicted log concrete-score entry for increasing $x_j$ by one (Sec.~\ref{sec:curvature-score-computation}) \\
$\widehat S_j(R),\ \widehat{\mathrm{OCS}}_{j\to i}^{(b)}$ & estimated CCS and foldwise estimated OCS (Sec.~\ref{sec:curvature-score-computation}) \\
\bottomrule
\end{tabular}
\end{table}

\paragraph{Admissible domains.}
A finite difference is defined only where all required shifted values remain in the support; we call this its \emph{admissible domain}. For $r\in\{1,2\}$, define
\[
\mathcal X_j^{\circ,r}
=
\begin{cases}
\{x_j\in\mathcal X_j:x_j,x_j+1,\ldots,x_j+r\in\mathcal X_j\},
& X_j\text{ count},\\
\mathring{\mathcal X}_j,
& X_j\text{ continuous}.
\end{cases}
\]
Thus $\Dop_j^r$ requires $x_j\in\mathcal X_j^{\circ,r}$, and, for $i\ne j$, $\Dop_i\Dop_j$ requires $x_i\in\mathcal X_i^{\circ,1}$ and $x_j\in\mathcal X_j^{\circ,1}$. The corresponding curvature events are
\[
\mathcal A_j^{(2)}=\{X_j\in\mathcal X_j^{\circ,2}\},
\qquad
\mathcal A_{ij}^{(1,1)}
=\{X_i\in\mathcal X_i^{\circ,1},\ X_j\in\mathcal X_j^{\circ,1}\},
\quad i\ne j.
\]
Continuous coordinates impose only the support-interior restriction, which has probability one.

For these curvatures, population moments and almost-sure statements use the marginal distribution conditioned on the corresponding event. For the CCS, both the outer expectation and the conditional variance use $P_R(\,\cdot\mid\mathcal A_j^{(2)})$; the OCS uses $P_{R_i}(\,\cdot\mid\mathcal A_{ij}^{(1,1)})$. If the event is determined by the conditioning variables, this restriction leaves the conditional distribution of the other variables unchanged.

\section{Regularity Conditions}
\label{app:standing}

\subsection{Common Regularity}
\label{app:distribution-regularity}

We assume finite, strictly positive densities on product-support interiors. Each coordinate support is a connected interval with nonempty interior or a consecutive integer set with at least three levels. For the distribution-level characterization, $\log p$ is jointly $C^2$ in the continuous coordinates for each fixed choice of count coordinates.

\subsection{Semiparametric GLM Representations}
\label{app:glm-regularity}

For the semiparametric GLM representations in Definition~\ref{def:ef-dag}, count supports are additionally bounded below. Source densities and non-source base functions are finite and strictly positive on their support interiors, and their logarithms are $C^2$ in continuous coordinates. Each $\eta_j$ is finite and jointly $C^2$ in its continuous parent coordinates for each fixed choice of count parents. Required finite differences are finite on their admissible domains, and the canonical parameters lie in the interior of the canonical-parameter domain:
\[
\eta_j(\mathring{\mathcal X}_{\pa_G(j)})
\subseteq\operatorname{int}\Xi_j,
\qquad
\Xi_j:=\{t\in\mathbb R:A_j(t)<\infty\}.
\]
This interiority justifies differentiation under the integral, so $A_j$ is smooth and $A_j''(t)=\Var_t(X_j)\in(0,\infty)$ on $\operatorname{int}\Xi_j$.

\begin{regularity}[Affine reversibility]
\label{cond:affine-reverse}
In the affine branch of Theorem~\ref{thm:link-index}, write $\eta(x)=\beta_0+\beta_1x$ with $\beta_1\neq0$, and define
\[
\widetilde b_X(x)=p_X(x)e^{-A_Y(\beta_0+\beta_1x)},
\qquad
\widetilde\Xi_X
=\left\{t:\int_{\mathcal X_X}\widetilde b_X(x)e^{tx}\,d\nu_X(x)<\infty\right\}.
\]
For count $Y$, require $\beta_1e\in\operatorname{int}\widetilde\Xi_X$ at each support endpoint $e$. There are two endpoints for finite support and one for one-sided infinite support. No additional restriction is imposed for continuous $Y$.
\end{regularity}

Integrating out a sink removes its conditional density from the DAG factorization. Therefore an ancestral marginal is finite and strictly positive and retains the remaining conditional densities and parent sets. The nonlinearity condition is also preserved when present. The own-factor curvature is $\kappa_j(x_j)=\Dop_j^2\log b_j(x_j)$ for a non-source and $\kappa_j(x_j)=\Dop_j^2\log p_j(x_j)$ for a source, consistent with Section~\ref{sec:multivariate}. Its curvature decomposition is
\begin{equation}
\label{eq:nonsink-decomp}
\begin{aligned}
H_{j,R}&=\kappa_j(x_j)+\psi_{j,R}(x_R),\\
\psi_{j,R}
&=\sum_{c\in\ch_{G_R}(j)}
\Dop_j^2\!\left\{
x_c\eta_c(x_{\pa_{G_R}(c)})
-A_c\bigl(\eta_c(x_{\pa_{G_R}(c)})\bigr)
\right\}.
\end{aligned}
\end{equation}
The child base functions do not depend on $x_j$ and disappear under $\Dop_j^2$.

\subsection{Moment Conditions}
\label{app:score-regularity}

\paragraph{Score moments.}
Where the CCS is used, we assume $H_{j,R}\in L^2(P_R(\cdot\mid\mathcal A_j^{(2)}))$ for every evaluated ancestral set $R$ and $j\in R$. In the bivariate setting, this means $H_Z\in L^2(P(\cdot\mid\mathcal A_Z^{(2)}))$ for $Z\in\{X,Y\}$. Where the OCS is used, we assume $H_{ij,R_i}\in L^1(P_{R_i}(\cdot\mid\mathcal A_{ij}^{(1,1)}))$ for every evaluated predecessor marginal $R_i$ and $j\in\mathrm{Pred}(i)$.

\paragraph{Score-entropy regression.}
For each positive active target $U$, assume $\E[U\log^+U]<\infty$, where $\log^+u=\max\{\log u,0\}$. This moment condition holds on finite population supports, but a finite empirical rank grid alone does not ensure it.

\section{Proofs for Section~\ref{sec:identifiability}}
\label{app:identifiability-proofs}

All derivatives, finite differences, and conditional moments below use the notation and admissible-event convention in Appendix~\ref{app:notation}.

\subsection{Proof of Theorem~\ref{thm:bivariate-sink}}
\label{app:bivariate-proofs}

\begin{proof}
Write $F(x,y)=\log p(x,y)$. By Definition~\ref{def:ccs}, $S_Y=0$ exactly when $\Dop_Y^2F(X,Y)$ is almost surely a function of $Y$ alone. Support positivity and smoothness extend this equality to the admissible domain, so $\Dop_Y^2F(x,y)$ is independent of $x$.

Fix $x_0\in\mathring{\mathcal X}_X$ and let $r_x(y)=F(x,y)-F(x_0,y)$. Since $\Dop_Y^2r_x=0$, a connected continuous support or a consecutive count support implies that $r_x$ is affine. Thus there are functions $a(x)$ and $\eta(x)$ such that $r_x(y)=a(x)+y\eta(x)$. Define $b_Y(y):=\exp\{F(x_0,y)\}$. Then
\begin{equation}
\label{eq:ccs-bivariate-factorization}
\log p(x,y)=a(x)+\log b_Y(y)+y\eta(x).
\end{equation}
For any fixed distinct $y_0,y_1$ in the support interior, $\eta(x)=\{r_x(y_1)-r_x(y_0)\}/(y_1-y_0)$. Thus $a$ and $\eta$ are finite, and they are $C^2$ when $X$ is continuous.

Define the log-partition function $A_Y(t)=\log\int_{\mathcal X_Y}b_Y(u)e^{ut}\,d\nu_Y(u)$. Exponentiating \eqref{eq:ccs-bivariate-factorization} and integrating over $y$ gives
\[
p_X(x)
=e^{a(x)}\int_{\mathcal X_Y}b_Y(u)e^{u\eta(x)}\,d\nu_Y(u)
=\exp\{a(x)+A_Y(\eta(x))\}.
\]
The marginal density $p_X(x)$ is finite and positive for $P_X$-almost every $x$. For each such $x$, the integral is therefore finite, and division by $p_X(x)$ yields
\[
p(y\mid x)
=\frac{e^{a(x)}b_Y(y)e^{y\eta(x)}}{e^{a(x)+A_Y(\eta(x))}}
=b_Y(y)\exp\{y\eta(x)-A_Y(\eta(x))\}.
\]
This gives \eqref{eq:ef-density}. Finiteness of $A_Y(\eta(x))$ alone does not imply that $\eta(x)$ lies in the interior of the canonical-parameter domain.

Conversely, suppose the conditional density has this form. In $\log p(x,y)=\log p_X(x)+\log b_Y(y)+y\eta(x)-A_Y(\eta(x))$, all terms other than $\log b_Y(y)$ are constant or affine in $y$. Hence $\Dop_Y^2\log p(x,y)=\Dop_Y^2\log b_Y(y)$, which depends only on $y$, and $S_Y=0$.
\end{proof}

\subsection{Proof of Theorem~\ref{thm:link-index}}
\label{app:bivariate-reverse-proof}

\begin{proof}
In the given $X\to Y$ representation, $\eta$ is nonconstant and $A_Y''>0$, so the conditional mean $\E[Y\mid X=x]=A_Y'(\eta(x))$ is nonconstant. Thus $X$ and $Y$ are dependent, leaving only the two edge directions to compare.

First suppose $\eta(x)=\beta_0+\beta_1x$, where $\beta_1\neq0$. Use $\widetilde b_X$ and $\widetilde\Xi_X$ from Condition~\ref{cond:affine-reverse}, and write $\widetilde A_X(t)=\log\int_{\mathcal X_X}\widetilde b_X(x)e^{tx}\,d\nu_X(x)$. The joint density can be written as $p(x,y)=b_Y(y)e^{\beta_0y}\widetilde b_X(x)e^{x\beta_1y}$. Integrating over $x$ gives $p_Y(y)=b_Y(y)\exp\{\beta_0y+\widetilde A_X(\beta_1y)\}$. Dividing the joint density by this marginal density therefore gives $p(x\mid y) =\widetilde b_X(x)\exp\{x\beta_1y-\widetilde A_X(\beta_1y)\}$.

To show that the constructed $Y\to X$ representation belongs to the same model class, we check its regularity. First consider canonical-parameter interiority. If $Y$ is continuous, the integral defining $p_Y$ is finite for almost every $y$. Around any interior support point $y$, choose one such finite point on each side. H\"older's inequality makes the finiteness domain $\widetilde\Xi_X$ convex, so $\beta_1y$ lies in its interior. For count $Y$, the integral is finite at every support value because $p_Y(y)\leq1$. Every level with neighbors on both sides therefore gives an interior parameter. Condition~\ref{cond:affine-reverse} covers the endpoints.

The reverse parameter $\widetilde\eta(y)=\beta_1y$ is nonconstant because $\beta_1\neq0$. To check smoothness, observe that
\[
\begin{aligned}
\log\widetilde b_X(x)&=\log p_X(x)-A_Y(\beta_0+\beta_1x),\\
\log p_Y(y)&=\log b_Y(y)+\beta_0y+\widetilde A_X(\beta_1y).
\end{aligned}
\]
Canonical-parameter interiority and the stated smoothness make these expressions $C^2$ in continuous variables and finite at count values. Finally, its affine-reversibility base is $p_Y(y)e^{-\widetilde A_X(\beta_1y)}=b_Y(y)e^{\beta_0y}$, with parameter domain $\Xi_Y-\beta_0$. If $X$ is count, then at each support endpoint $e$, forward canonical-parameter interiority gives $\beta_0+\beta_1e\in\operatorname{int}\Xi_Y$, or equivalently $\beta_1e\in\operatorname{int}(\Xi_Y-\beta_0)$. Thus the reverse representation also satisfies Condition~\ref{cond:affine-reverse}. Taking $P_Y$ as the source therefore gives a reverse representation in the same model class. The direction is consequently not identifiable when $\eta$ is affine.

Conversely, suppose a reverse semiparametric GLM representation exists, with canonical-parameter function $\widetilde\eta$. Choose distinct $y_0,y_1$ for which both conditional representations hold. Positivity permits any two support values for count $Y$. When $Y$ is continuous, Fubini's theorem gives a full-measure set of valid values, from which two distinct points can be chosen. The forward representation and Bayes' rule give
\[
\log p(x\mid y_1)-\log p(x\mid y_0)
=(y_1-y_0)\eta(x)+C,
\]
where $C$ is independent of $x$. The reverse representation expresses the same difference as $x\{\widetilde\eta(y_1)-\widetilde\eta(y_0)\}+C'$, where $C'$ is also independent of $x$. Equating the two expressions and dividing by $y_1-y_0\neq0$ shows that $\eta(x)$ is affine almost everywhere. Support positivity and smoothness extend this equality to the support interior. Therefore a reverse representation exists exactly when $\eta$ is affine, proving the result.
\end{proof}

\subsection{Proof of Theorem~\ref{thm:sink-identification}}
\label{app:multivariate-proofs}

\begin{proof}
\textit{(I) Sink characterization.} Fix an ancestral remaining set $R\subseteq V$ and a node $j\in R$. Write $P_R^{(j)}=P_R(\,\cdot\mid\mathcal A_j^{(2)})$, which equals $P_R$ when $X_j$ is continuous. All probabilities, expectations, and variances in this part use $P_R^{(j)}$, with the convention in Appendix~\ref{app:notation}. We first establish
\begin{equation}
\label{eq:structural-sink}
j\text{ is a sink of }G_R
\quad\Longleftrightarrow\quad
H_{j,R}(X_R)=h_{j,R}(X_j)
\quad P_R^{(j)}\text{-a.s. for some measurable }h_{j,R}.
\end{equation}

If $j$ is a sink of $G_R$, \eqref{eq:nonsink-decomp} gives $H_{j,R}=\kappa_j(X_j)$, proving the forward implication in \eqref{eq:structural-sink} with $h_{j,R}=\kappa_j$.

For the reverse implication, we prove the contrapositive. Suppose $j$ is a nonsink, and choose a sink $c^\star$ in the subgraph induced by $\ch_{G_R}(j)$. Such a node exists because this subgraph is a nonempty finite DAG. Then $c^\star$ is not a parent of any other child of $j$, so only its own child contribution in \eqref{eq:nonsink-decomp} depends on $X_{c^\star}$. Let $U=X_{R\setminus\{c^\star\}}$. All parents of $c^\star$, including $j$, belong to $R\setminus\{c^\star\}$. Define the function $a(u):=(\Dop_j^2\eta_{c^\star}) \bigl(u_{\pa_{G_R}(c^\star)}\bigr)$. Thus $\eta_{c^\star}(X_{\pa_{G_R}(c^\star)})$ and its log-partition term are functions of $U$ alone. Since $\Dop_j^2$ leaves $X_{c^\star}$ fixed, linearity gives $H_{j,R}=a(U)X_{c^\star}+B(U)$, where $B(U)$ collects all terms independent of $X_{c^\star}$. Definition~\ref{def:nonlinear-ef-dag} gives an admissible $u$ with $a(u)\neq0$. Support positivity and smoothness then give $\mathbb P\{a(U)\neq0\}>0$. Support positivity also makes $X_{c^\star}\mid U$ nondegenerate, including under the admissible-event restriction. Hence $H_{j,R}\mid U$ is nondegenerate on $\{a(U)\neq0\}$. If $H_{j,R}$ were a function of $X_j$ alone, it would be fixed given $U$, since $U$ contains $X_j$. This contradiction proves the reverse implication in \eqref{eq:structural-sink}.

For a nonsink, $H_{j,R}\in L^2$ and the identity $X_{c^\star}=\{H_{j,R}-B(U)\}/a(U)$ give a finite conditional second moment for $X_{c^\star}$ given $U$ on $\{a(U)\neq0\}$. The conditional nondegeneracy above therefore implies $\Var(H_{j,R}\mid U)=a(U)^2\Var(X_{c^\star}\mid U)>0$ on this event. Since $U$ includes $X_j$, the conditional law of total variance gives
\[
0<\E\{\Var(H_{j,R}\mid U)\}
\leq\E\{\Var(H_{j,R}\mid X_j)\}=S_j(R)<\infty.
\]
Thus every nonsink has positive CCS, and every sink has zero CCS.

\medskip
\noindent\textit{(II) Parent recovery.}
Fix a causal order, a node $i$, and a predecessor $j\in\mathrm{Pred}(i)$. Write $R_i=\{i\}\cup\mathrm{Pred}(i)$. Expectations below use the admissible-event convention for $\mathcal A_{ij}^{(1,1)}$.

The predecessor set contains all parents of $i$ and no descendants of $i$. The local Markov property therefore gives $p(x_i\mid x_{\mathrm{Pred}(i)})=p_i(x_i\mid x_{\pa_G(i)})$. Since $\log p_{R_i}=\log p(x_i\mid x_{\mathrm{Pred}(i)})+\log p_{\mathrm{Pred}(i)}$ and the second term is independent of $x_i$, we have $H_{ij,R_i} =\Dop_i\Dop_j\log p_i(x_i\mid x_{\pa_G(i)})$. If $i$ is a source, its conditional density equals its marginal density $p_i(x_i)$, so this off-diagonal curvature is zero for every predecessor, as required.

For a non-source, applying $\Dop_i$ to the log conditional density in \eqref{eq:ef-density} gives $\Dop_i\log b_i(x_i)+\eta_i(x_{\pa_G(i)})$. The mixed operators commute on the admissible domain. Applying $\Dop_j$ therefore removes the first term and yields
\[
H_{ij,R_i}=\Dop_j\eta_i,
\qquad
\mathrm{OCS}_{j\to i}
=\E\bigl[\lvert\Dop_j\eta_i(X_{\pa_G(i)})\rvert\bigr].
\]
If $j$ is not a parent, $\eta_i$ is independent of $x_j$. For a parent, the nonconstant dependence in Definition~\ref{def:ef-dag} gives a point where $\Dop_j\eta_i\neq0$; support positivity and smoothness give positive probability to this event. Hence
\begin{equation}
\label{eq:structural-edge}
j\in\pa_G(i)
\quad\Longleftrightarrow\quad
\mathbb P\!\left(
|H_{ij,R_i}|>0
\,\middle|\,
\mathcal A_{ij}^{(1,1)}
\right)>0.
\end{equation}
Since $H_{ij,R_i}\in L^1$, the OCS is finite and positive exactly for parents. For example, $\eta_i(x_j)=a+bx_j$ with $b\neq0$ gives $\mathrm{OCS}_{j\to i}=|b|>0$, so nonlinearity is not needed in this part.
\end{proof}

\subsection{Proof of Corollary~\ref{cor:dag-identifiability}}

\begin{proof}
Suppose that $P$ has nonlinear semiparametric GLM DAG representations on both $G$ and $\widetilde G$. For any remaining set $R$ that is ancestral for both representations, the right-hand side of \eqref{eq:structural-sink} depends only on $P_R$. It therefore identifies the same sink set in $G_R$ and $\widetilde G_R$. Starting from $R=V$, choose a common sink using a fixed tie-breaking rule and remove it. The new set remains ancestral for both graphs, so induction gives a single ordering $\pi$ compatible with both DAG representations.

For each node $i$, $R_i=\{i\}\cup\mathrm{Pred}(i)$ is ancestral for both representations, so \eqref{eq:structural-edge} applies to both graphs. Its right-hand side depends only on $P_{R_i}$ and hence recovers the same parent set for $i$. Thus $G=\widetilde G$. Both \eqref{eq:structural-sink} and \eqref{eq:structural-edge} were established without moment conditions.
\end{proof}

\section{Additional Identifiability Results}
\label{app:additional-identifiability}

\subsection{Strict Extension of SCORE}
\label{app:score-proofs}

\begin{corollary}[Strict extension of SCORE]
\label{cor:continuous-scms}
SCORE's identifiable nonlinear Gaussian ANMs form a proper subclass of the continuous nonlinear semiparametric GLM DAGs identified by Corollary~\ref{cor:dag-identifiability}. For a nonlinear semiparametric GLM DAG with constant own-factor curvature at every node, including these Gaussian ANMs, every ancestral remaining set $R$ and every $j\in R$ satisfy
\[
\Var(H_{j,R})=0
\quad\Longleftrightarrow\quad
S_j(R)=0
\quad\Longleftrightarrow\quad
j\text{ is a sink of }G_R.
\]
\end{corollary}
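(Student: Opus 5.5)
Three steps are needed: embed SCORE's class into the nonlinear semiparametric GLM DAG class, show the embedding is proper, and establish the three-way equivalence under constant own-factor curvature.

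\emph{Step 1 (embedding).} Take a nonlinear Gaussian ANM, $X_j=f_j(X_{\pa_G(j)})+\varepsilon_j$ with $\varepsilon_j\sim\mathcal N(0,\sigma_j^2)$, where each $f_j$ is nonlinear in every parent (SCORE's identifiability assumption). Each non-source conditional can be rewritten in the form \eqref{eq:ef-density} by setting
\[
b_j(x_j)=(2\pi\sigma_j^2)^{-1/2}e^{-x_j^2/(2\sigma_j^2)},\qquad
\eta_j=f_j/\sigma_j^2,\qquad
A_j(t)=\sigma_j^2t^2/2.
\]
The support is $\mathbb R$ and does not depend on the parents. Sources are Gaussian marginals. $\Xi_j=\mathbb R$, so the interiority condition holds trivially. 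Smoothness of $f_j$ gives the $C^2$ conditions. Nonlinearity of $f_j$ in each parent is exactly the statement $\partial_k^2\eta_i\not\equiv0$, so the model is a nonlinear semiparametric GLM DAG in the sense of Definition~\ref{def:nonlinear-ef-dag}. The own-factor curvature is then $\kappa_j=-1/\sigma_j^2$, which is constant.

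\emph{Step 2 (properness).} Exhibit a bivariate Gamma example: $X$ is any smooth positive source, and $Y\mid X\sim\mathrm{Gamma}(\alpha,\text{mean }\exp f(X))$ with $\alpha\neq1$. Its canonical parameter is $\eta(x)=-\alpha e^{-f(x)}$, which is nonlinear for, say, affine nonconstant $f$. The base function is $b_Y(y)\propto y^{\alpha-1}$, giving $H_Y=-(\alpha-1)/Y^2$. This is nonconstant, so $\Var(H_Y)>0$ at the sink and SCORE's criterion fails. The model is not a Gaussian ANM, since its support is $(0,\infty)$, and it is identified by Corollary~\ref{cor:dag-identifiability}. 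The remaining regularity checks are routine. $\eta(x)<0$ lies in $\operatorname{int}\Xi_Y=(-\infty,0)$. The moment condition $H_Y\in L^2$ requires $\E[Y^{-4}]<\infty$, which holds when $\alpha>4$; I would simply choose such an $\alpha$.

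\emph{Step 3 (equivalence).} Assume every $\kappa_j$ is constant. Fix an ancestral $R$ and $j\in R$. The equivalence $S_j(R)=0\iff j$ is a sink of $G_R$ is Theorem~\ref{thm:sink-identification}(i). The implication $\Var(H_{j,R})=0\Rightarrow S_j(R)=0$ follows from the law of total variance, $\Var(H_{j,R})=S_j(R)+\Var(\E[H_{j,R}\mid X_j])$. Conversely, if $j$ is a sink of $G_R$, then \eqref{eq:nonsink-decomp} gives $H_{j,R}=\kappa_j(X_j)$, which is constant, so $\Var(H_{j,R})=0$. Here one uses that $R$ is ancestral, so that $P_R$ keeps the factorization and the own factor of $j$, as stated in Appendix~\ref{app:glm-regularity}.

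The main obstacle is Step 2's bookkeeping. Strictness must be shown within the continuous class, and the Gamma example must satisfy every regularity and moment assumption, in particular the $L^2$ requirement near $y=0$ and the requirement that $\log p_X$ is $C^2$. Choosing $\alpha$ large and $p_X$ a smooth positive density (e.g.\ a Gamma marginal) should handle both.
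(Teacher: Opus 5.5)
Your proposal is correct and follows the paper's proof essentially step for step. It uses the same Gaussian embedding with $\eta_i=f_i/\tau_i^2$ (your $\sigma_j^2$) and constant $\kappa_i=-1/\tau_i^2$, and it proves the three-way equivalence the same way: a sink gives $H_{j,R}=\kappa_j$, and Theorem~\ref{thm:sink-identification}(i) together with the law of total variance handles the rest. Strictness likewise comes from a Gamma counterexample with $\alpha>4$.

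The only difference is cosmetic. The paper fixes $X_1\sim\operatorname{Exp}(1)$ and uses scale $1+x$, so $\eta_2(x)=-1/(1+x)$ and $\kappa_1\equiv0$. Your log-link version with affine $f$ gives $\eta(x)=-\alpha e^{-f(x)}$ and works equally well once you fix the source concretely, for example to $\operatorname{Exp}(1)$.
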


\begin{proof}
For every non-source node in a Gaussian ANM, choose
\[
b_i(x)=(2\pi\tau_i^2)^{-1/2}e^{-x^2/(2\tau_i^2)},
\qquad
A_i(t)=\frac{\tau_i^2t^2}{2}.
\]
Then $\eta_i=f_i/\tau_i^2$, so positive scaling preserves the nonlinearity condition and Corollary~\ref{cor:dag-identifiability} applies under the regularity conditions in Appendix~\ref{app:glm-regularity}. Gaussian source marginals are also allowed, and every node has constant own-factor curvature $\kappa_i(x_i)=-1/\tau_i^2$.

With constant own-factor curvature, a sink has $H_{j,R}=\kappa_j(X_j)$, so both scores are zero. At a nonsink, Theorem~\ref{thm:sink-identification}(i) and the law of total variance give $\Var(H_{j,R})\ge S_j(R)>0$.

To see that the inclusion is strict, let $X_1\sim\operatorname{Exp}(1)$ and
\[
X_2\mid X_1=x\sim\operatorname{Gamma}(\alpha,\text{scale}=1+x),
\qquad \alpha>4.
\]
This conditional has
\[
b_2(y)=\frac{y^{\alpha-1}}{\Gamma(\alpha)},\qquad
\eta_2(x)=-\frac{1}{1+x},\qquad
A_2(t)=-\alpha\log(-t).
\]
Here $\eta_2$ is nonlinear, while $\kappa_2(y)=-(\alpha-1)/y^2$ is nonconstant. Hence $S_2=0$ but $\Var(H_2)>0$; $\alpha>4$ ensures a finite second curvature moment. The CCS therefore identifies a model for which SCORE's constant-curvature criterion fails.
\end{proof}

\subsection{Link--Index Characterization}
\label{app:link-index}

For a conditional family--link--index parameterization of $Y\mid X$, let $g_{Y,c}:=(A_Y')^{-1}$ be the canonical link, set $h:=g_{Y,c}\circ g_Y^{-1}$, and write $I_f:=f(\mathring{\mathcal X}_X)$. Then the canonical-parameter function is $\eta=h\circ f$. For any link $g$, its affine class is $[g]:=\{\alpha\circ g:\alpha\ \text{is increasing affine}\}$. Thus a canonical-class link is a member of $[g_{Y,c}]$.

\begin{corollary}[Link--index characterization]
\label{cor:link-index}
In the setting of Theorem~\ref{thm:link-index}, its characterization becomes
\[
X\to Y\ \text{identifiable}
\quad\Longleftrightarrow\quad
h\circ f\ \text{is nonlinear on }\mathring{\mathcal X}_X.
\]
\begin{enumerate}
\item[(i)] If $g_Y\in[g_{Y,c}]$, then $h$ is affine and $\eta=\lambda f+\mu$ for some $\lambda\neq0$. Hence $X\to Y$ is identifiable if and only if $f$ is nonlinear on $\mathring{\mathcal X}_X$.

\item[(ii)] If $g_Y\notin[g_{Y,c}]$, then $h$ is nonlinear on its full domain, but its restriction to $I_f$ may be affine. For a nonconstant affine index $f(x)=\alpha+\beta x$, the direction is identifiable exactly when $h$ is nonlinear on $I_f=\alpha+\beta\,\mathring{\mathcal X}_X$. For a nonlinear index, the direction fails to be identifiable exactly when $f=h^{-1}\circ a$ on $\mathring{\mathcal X}_X$ for an affine function $a$ whose values lie in $h(I_f)$.
\end{enumerate}
Every unidentifiable case also has a canonical-link representation with an affine index.
\end{corollary}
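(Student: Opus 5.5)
The plan is to reduce everything to Theorem~\ref{thm:link-index}, which says $X\to Y$ is identifiable exactly when $\eta$ is nonlinear on $\mathring{\mathcal X}_X$. I would write the family--link--index parameterization as $g_Y(A_Y'(\eta(x)))=f(x)$. Since $A_Y''>0$ on $\operatorname{int}\Xi_Y$, $A_Y'$ is strictly increasing, so $g_{Y,c}=(A_Y')^{-1}$ is well defined. Inverting gives $\eta=g_{Y,c}\circ g_Y^{-1}\circ f=h\circ f$, and substituting this into the theorem yields the displayed equivalence directly. The only care needed is that $h$ is evaluated only on $I_f$, where $g_Y^{-1}$ lands in the mean space $A_Y'(\operatorname{int}\Xi_Y)$.

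For (i), if $g_Y=\alpha\circ g_{Y,c}$ with $\alpha(t)=\lambda' t+\mu'$ and $\lambda'>0$, then $g_Y^{-1}=g_{Y,c}^{-1}\circ\alpha^{-1}$. Hence $h=\alpha^{-1}$ is affine with nonzero slope, so $\eta=\lambda f+\mu$ with $\lambda=1/\lambda'\neq0$. An affine map with nonzero slope preserves (non)linearity of $f$, which gives (i).

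For (ii), I would first argue that $h$ is nonlinear on its full domain. If $h$ were affine there, then $g_{Y,c}=h\circ g_Y$ would give $g_Y=h^{-1}\circ g_{Y,c}$ with $h^{-1}$ affine. Both links are increasing, so $h$ is increasing, and this puts $g_Y\in[g_{Y,c}]$, a contradiction. Next, for affine $f(x)=\alpha+\beta x$ with $\beta\ne0$, the composition $h\circ f$ is affine on $\mathring{\mathcal X}_X$ exactly when $h$ is affine on $I_f$, because $f$ is an affine bijection onto its image. For a nonlinear index, nonidentifiability means $h\circ f=a$ for some affine $a$ on $\mathring{\mathcal X}_X$. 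Since $h$ is strictly increasing, it is injective, so this is equivalent to $f=h^{-1}\circ a$, and the values of $a$ must lie in $h(I_f)$. The converse direction is immediate by composing with $h$.

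For the final sentence, I would take an unidentifiable case, where $\eta=a$ is affine. Choosing the canonical link $g_{Y,c}$ gives index $g_{Y,c}\circ A_Y'\circ\eta=\eta$, which is affine. So the same conditional law has a canonical-link representation with an affine index.

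I expect the main obstacle to be the domain bookkeeping, not the algebra. Specifically, I need $g_Y^{-1}(I_f)$ to sit inside the mean space so that $h$ is defined and strictly increasing there. I also need "nonlinear on its full domain" to be read on the natural domain of $h$, which I would take to be $g_Y(A_Y'(\operatorname{int}\Xi_Y))$, an interval. Settling these domain points first makes each step an elementary monotone-composition fact.
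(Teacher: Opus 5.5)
Your proposal is correct and follows the paper's proof essentially step for step: the reduction $\eta=h\circ f$ to Theorem~\ref{thm:link-index}, the observation that $g_Y\in[g_{Y,c}]$ makes $h$ an increasing affine map, the affine-bijection argument for an affine index, injectivity of the strictly increasing $h$ on $I_f$ to get $f=h^{-1}\circ a$ for a nonlinear index, and the canonical link turning the index into $\eta$ itself for the final claim. Your explicit contrapositive showing that $h$ is nonlinear on its full domain when $g_Y\notin[g_{Y,c}]$ spells out the paper's remark that $g_Y\in[g_{Y,c}]$ holds exactly when $h$ is affine.
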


\begin{proof}
Theorem~\ref{thm:link-index} reduces the problem to determining when $\eta=h\circ f$ is affine on $\mathring{\mathcal X}_X$. If $f=h^{-1}\circ a$ for an affine function $a$, then $\eta=a$ is affine. Conversely, suppose that $h\circ f=a$ for an affine $a$. The map $h$ is strictly increasing on $I_f$, so its inverse is well defined on $h(I_f)$. Applying this inverse pointwise yields $f=h^{-1}\circ a$ on $\mathring{\mathcal X}_X$.

For clause~(i), $g_Y\in[g_{Y,c}]$ holds exactly when $h$ is affine. Therefore, $\eta=\lambda f+\mu$ with $\lambda\neq0$, and $\eta$ is nonlinear exactly when $f$ is nonlinear.

For clause~(ii), first consider a nonconstant affine index $f(x)=\alpha+\beta x$ with $\beta\neq0$. It maps $\mathring{\mathcal X}_X$ bijectively and affinely onto $I_f=\alpha+\beta\mathring{\mathcal X}_X$. Consequently, $h\circ f$ is affine on $\mathring{\mathcal X}_X$ exactly when $h$ is affine on $I_f$. For a nonlinear index, the preceding inverse argument shows that the exceptions are exactly the functions $f=h^{-1}\circ a$ with $a$ affine. Condition~\ref{cond:affine-reverse} supplies the reverse representation in the affine case. Taking $g_Y=g_{Y,c}$ then makes $f=\eta$ affine as well, which proves the final statement.
\end{proof}

\section{Proofs for Section~\ref{sec:disco-algorithm}}
\label{app:disco-proofs}

\subsection{Proof of Proposition~\ref{prop:marginal-concrete-score}}
\label{app:marginal-proof}
\begin{proof}
Let $S=V\setminus R$ and choose a regular conditional distribution of $X_S$ given $X_R$. For $P_R$-almost every $x_R$ with $p_R(x_R)>0$, Bayes' rule gives
\begin{align*}
\E[s_{j,y}^{V}(X_V)\mid X_R=x_R]
&=
\int
\frac{p_V(x_R^{j\to y},x_S)}
     {p_V(x_R,x_S)}
\frac{p_V(x_R,x_S)}{p_R(x_R)}\,d\nu_S(x_S)\\
&=\frac{p_R(x_R^{j\to y})}{p_R(x_R)}
=s_{j,y}^{R}(x_R).
\end{align*}
Here $y$ is a neighboring value in $\mathcal X_j$, and $\nu_S$ is the product of the dominating measures on the removed coordinates. This proves the stated identity.

It remains to verify integrability for the neighboring entries used by \DISCO{}. Fix $\delta\in\{-1,+1\}$. For $A\in\{R,V\}$, define the zero-extended entry
\[
\bar s_j^{A,\delta}(x_A)
=
\begin{cases}
s^A_{j,\,x_j+\delta}(x_A),&x_j+\delta\in\mathcal X_j,\\
0,&x_j+\delta\notin\mathcal X_j.
\end{cases}
\]
The support event depends only on $x_j$, so the preceding identity implies $\bar s_j^{R,\delta}(X_R)= \E[\bar s_j^{V,\delta}(X_V)\mid X_R]$. Moreover,
\[
\E[\bar s_j^{V,\delta}(X_V)]
=
\sum_{x_j:\,x_j+\delta\in\mathcal X_j}
\int p_V(x_j+\delta,x_{-j})\,d\nu_{-j}(x_{-j})
\leq1.
\]
Thus these zero-extended neighboring entries belong to $L^1$.
\end{proof}

\subsection{Proof of Theorem~\ref{thm:plugin-consistency}}
\label{app:curvature-consistency}

For count-valued $P$, let $\mathcal A(G)$ be the nonempty ancestral remaining sets of $G$, and write $H_{jj,R}:=H_{j,R}$. For $R\in\mathcal A(G)$ and $i,j\in R$, use the events from Appendix~\ref{app:notation}: $A_{jj,R}:=\mathcal A_j^{(2)}$ and $A_{ij,R}:=\mathcal A_{ij}^{(1,1)}$ for $i\ne j$. Set $Q_{ij,R}:=P_R(\,\cdot\mid A_{ij,R})$; the support assumptions give $P_R(A_{ij,R})>0$.

\paragraph{Training and evaluation samples.}
Let $\mathcal F_n$ contain the training sample of size $n$, the fitting randomness, and a finite nonempty noise grid $\mathcal G_{\sigma,n}\subset(0,\infty)$. For every $\sigma\in\mathcal G_{\sigma,n}$, the noise-specific estimated curvatures $\widehat H_{ij,R}^{(n)}(\cdot;\sigma)$ are $\mathcal F_n$-measurable. They use ordinary, unclipped shifts and are evaluated only on $A_{ij,R}$. The evaluation observations $X^{(1)},\ldots,X^{(m_n)}$ are iid from $P$, independent of $\mathcal F_n$, with $m_n\to\infty$ as $n\to\infty$.

\paragraph{Estimators and graph threshold.}
For parent selection, let
\[
\mathcal T(G):=\{(R,i,j):R\in\mathcal A(G),\ i\text{ is a sink of }G_R,
\ j\in R\setminus\{i\}\},
\]
and write $O_{j\to i}(R):=\E_{Q_{ij,R}}|H_{ij,R}|$. For each triple, some causal order places $R\setminus\{i\}$ first and $i$ next, so Theorem~\ref{thm:sink-identification}(ii) applies. We use the $L^2$ and $L^1$ moment conditions of Appendix~\ref{app:score-regularity} for all diagonal entries with $R\in\mathcal A(G)$, $j\in R$, and all triples in $\mathcal T(G)$, respectively.

Write $\overline H_{ij,R}^{(n)}:=|\mathcal G_{\sigma,n}|^{-1}\sum_{\sigma\in\mathcal G_{\sigma,n}}\widehat H_{ij,R}^{(n)}(\cdot;\sigma)$. Let $\widehat S_{j,n}(R)$ be the grouped estimator \eqref{eq:cond-var-score} applied to $\overline H_{jj,R}^{(n)}$ on $A_{jj,R}$, with fixed $n_{\min}=k\ge2$ and unbiased within-group sample variances. Let $\widehat O_{j\to i,n}(R)$ use the averaging rule in \eqref{eq:empirical-ocs} for marginal $R$ and noise grid $\mathcal G_{\sigma,n}$, restricting observations to $A_{ij,R}$ and using the evaluation sample as one fold; absolute values are taken before noise averaging. Empty evaluation averages and maxima are zero. Given $\widehat\pi_n$, write $\widehat R_{i,n}:=\{i\}\cup\mathrm{Pred}_{\widehat\pi_n}(i)$. The graph $\widehat G$ includes $j\to i$ exactly when $j\in\mathrm{Pred}_{\widehat\pi_n}(i)$ and $\widehat O_{j\to i,n}(\widehat R_{i,n})>\lambda_n$. The procedure may be completed arbitrarily outside $\mathcal A(G)$.

\begin{assumption}[Marginal curvature estimation]
\label{ass:unified-curvature-consistency}
Under the setup above, the noise-specific fitted marginal curvatures converge uniformly over the evaluation noise grid to the clean curvatures:
\begin{equation}
\label{eq:unified-curvature-consistency}
\varepsilon_n
:=\max_{\substack{R\in\mathcal A(G),\ i,j\in R\\
                  \sigma\in\mathcal G_{\sigma,n}}}
\bigl\|\widehat H_{ij,R}^{(n)}(\cdot;\sigma)-H_{ij,R}\bigr\|_{L^2(Q_{ij,R})}
\xrightarrow{p}0,
\end{equation}
with the error norms finite almost surely.
\end{assumption}

\begin{proof}
We first show
\begin{align}
\max_{R\in\mathcal A(G)}\max_{j\in R}
|\widehat S_{j,n}(R)-S_j(R)|&\xrightarrow{p}0,
\label{eq:empirical-ccs-consistency}\\
\max_{(R,i,j)\in\mathcal T(G)}
|\widehat O_{j\to i,n}(R)-O_{j\to i}(R)|&\xrightarrow{p}0.
\label{eq:empirical-ocs-consistency}
\end{align}
\paragraph{Step 1: Consistency of the CCS estimator.}
For the CCS, fix $R,j$ and write $Q=Q_{jj,R}$, $Z=X_j$, $f=H_{jj,R}\in L^2(Q)$, and $\mathsf S_N(g)$ for the grouped estimator \eqref{eq:cond-var-score} using curvature $g$ on the $N$ admissible observations. Here $N\to\infty$ in probability, and conditional on $N$ these observations are iid from $Q$. Let $N_v$ be the count at level $v$ and $D_N=\sum_v N_v\mathbf1\{N_v\ge k\}$. For any finite set $F$ of positive-probability levels, each level is eventually retained, giving $\liminf_N D_N/N\ge Q(Z\in F)$ almost surely. Taking $F$ to the countable support yields $D_N/N\to1$; this also holds in probability at the random admissible count.

The contribution of levels in $F$ to $\mathsf S_N(f)$ converges to $\sum_{v\in F}Q(Z=v)\Var_Q(f\mid Z=v)$. On $D_N>0$, the remaining contribution is bounded by
\[
\frac{c_k}{D_N}\sum_{a=1}^N
f(X_R^{(a)})^2\mathbf1\{Z_a\notin F\},
\qquad c_k:=\frac{k}{k-1}\le2.
\]
By the law of large numbers and $D_N/N\to1$, this bound converges to $c_k\E_Q[f^2\mathbf1\{Z\notin F\}]$, which tends to zero as $F$ increases. The population tail has the same bound, so $\mathsf S_N(f)\to_p\E_Q[\Var_Q(f\mid Z)]=S_j(R)$.

Now put $f_n=\overline H_{jj,R}^{(n)}$ and $h_n=f_n-f$. The triangle inequality and Assumption~\ref{ass:unified-curvature-consistency} give
\[
\begin{aligned}
\|h_n\|_{L^2(Q)}
&\le\frac{1}{|\mathcal G_{\sigma,n}|}
\sum_{\sigma\in\mathcal G_{\sigma,n}}
\bigl\|\widehat H_{jj,R}^{(n)}(\cdot;\sigma)-H_{jj,R}\bigr\|_{L^2(Q)}\\
&\le\varepsilon_n.
\end{aligned}
\]
Since $\sqrt{\mathsf S_N(\cdot)}$ is a seminorm, on $D_N>0$,
\[
\bigl|\sqrt{\mathsf S_N(f_n)}-\sqrt{\mathsf S_N(f)}\bigr|^2
\le\mathsf S_N(h_n)
\le c_k\frac{N}{D_N}T_n,
\qquad T_n:=\frac1N\sum_{a=1}^N h_n(X_R^{(a)})^2,
\]
with $T_n=0$ if $N=0$. Independence gives $\E[T_n\mid\mathcal F_n]\le\varepsilon_n^2$. For every $t>0$, conditional Markov's inequality gives
\[
\Pr(T_n>t)\le\E[\min\{1,\varepsilon_n^2/t\}]\longrightarrow0,
\]
because the bounded integrand converges to zero in probability. Together with $D_N/N\to1$, this proves $\widehat S_{j,n}(R)\to_p S_j(R)$ and, by finiteness of the index set, \eqref{eq:empirical-ccs-consistency}.

\paragraph{Step 2: Consistency of the OCS estimator.}
For the OCS, fix $(R,i,j)\in\mathcal T(G)$ and let $\widetilde O_{j\to i,n}(R)$ average $|H_{ij,R}|$ on the same admissible observations. The $L^1(Q_{ij,R})$ moment and the law of large numbers give $\widetilde O_{j\to i,n}(R)\to_p O_{j\to i}(R)$, while
\[
\begin{aligned}
&\E\!\left[
|\widehat O_{j\to i,n}(R)-\widetilde O_{j\to i,n}(R)|
\mid\mathcal F_n\right]\\
&\quad\le\frac{1}{|\mathcal G_{\sigma,n}|}
\sum_{\sigma\in\mathcal G_{\sigma,n}}
\bigl\|\widehat H_{ij,R}^{(n)}(\cdot;\sigma)-H_{ij,R}\bigr\|_{L^1(Q_{ij,R})}\\
&\quad\le\varepsilon_n.
\end{aligned}
\]
The first inequality uses $\bigl||a|-|b|\bigr|\le|a-b|$ before averaging over noise levels; the second uses the $L^2$ error bound. The same conditional Markov argument and finiteness of $\mathcal T(G)$ yield \eqref{eq:empirical-ocs-consistency}.

\paragraph{Step 3: Recovery of a causal order.}
By Theorem~\ref{thm:sink-identification}(i), sinks have zero CCS and nonsinks have positive CCS on every ancestral set. If nonsinks exist, their scores have a positive minimum $\Delta$ over this finite collection. When all CCS errors are below $\Delta/3$, every empirical minimizer is a sink. Starting from $V$, each removal therefore preserves ancestrality and yields a causal order. By \eqref{eq:empirical-ccs-consistency}, this event has probability tending to one. If no nonsinks exist, every order is valid.

\paragraph{Step 4: Recovery of the DAG.}
For a nonparent triple in $\mathcal T(G)$, Theorem~\ref{thm:sink-identification}(ii) gives $H_{ij,R}=0$ $Q_{ij,R}$-almost surely, so $\widetilde O_{j\to i,n}(R)=0$. Hence
\[
\Pr\{\widehat O_{j\to i,n}(R)>\lambda_n\}
\le\E[\min\{1,\varepsilon_n/\lambda_n\}]\longrightarrow0.
\]
For a true parent, $O_{j\to i}(R)>0$, so \eqref{eq:empirical-ocs-consistency} and $\lambda_n\to0$ give selection with probability tending to one. A union bound over $\mathcal T(G)$, together with the correct-order event, proves $\Pr\{\widehat G=G\}\to1$.
\end{proof}

\begin{remark*}[Scope of the consistency result]
Consistency of common-grid clipping and median/IQR standardization is not established here. Applying the theorem to fitted grids requires control of rank and truncation errors, and Assumption~\ref{ass:unified-curvature-consistency} requires any evaluation-noise bias to vanish.
\end{remark*}

\subsection{Proof of Proposition~\ref{prop:support-index-invariance}}
\begin{proof}
Let $Y_j=\phi_j(X_j)$ with $\phi_j$ strictly increasing, and write the ordered support as $v_{j,0}<v_{j,1}<\cdots$. Strict monotonicity gives $T_j^Y(\phi_j(v_{j,k}))=k=T_j^X(v_{j,k})$ for every support value. Thus $T^Y(Y)=T^X(X)$ almost surely, so their marginal CCS and OCS values, and hence their causal orders and parent sets, coincide. The fitted transform also depends only on this order, including its clipping and between-value rules. It therefore produces identical transformed training and evaluation inputs, proving the fixed-seed claim.
\end{proof}

\section{Theoretical Properties of Concrete-Score Estimation}
\label{app:score-estimation-proofs}

Fix a remaining set $R$ and a noise level $\sigma>0$. Starting from $x_0\sim p_R$, corrupt the coordinates independently with kernels $q_\sigma^j(b\mid a)=[\exp(\sigma Q^j)]_{ab}$, where $Q^j$ is a continuous-time Markov-chain rate matrix. We assume $q_\sigma^j(b\mid a)>0$ for every pair of states $a,b$ in the coordinate state space and every positive noise level used below. The finite-grid reflecting birth--death kernel used in \DISCO{} satisfies this condition. The resulting transition probability and noisy marginal are
\[
p_{\sigma\mid0}(x_R\mid x_0)
=\prod_{j\in R}q_\sigma^j(x_j\mid x_{0j}),
\qquad
p_{R,\sigma}(x_R)
=\sum_{x_0}p_{\sigma\mid0}(x_R\mid x_0)p_R(x_0).
\]
The corresponding noisy concrete score is $s_{j,y}^{R}(x_R;\sigma)=p_{R,\sigma}(x_R^{j\to y})/p_{R,\sigma}(x_R)$.

\subsection{Denoising Score-Entropy Objective}
\label{app:denoising-objective}

For $u,v>0$, the score-entropy loss is
\begin{equation}
\label{eq:score-entropy-loss}
\ell_{\mathrm{SE}}(u,v)=u-v+v\log(v/u).
\end{equation}

\paragraph{Denoising objective.} Using the corruption and noisy concrete-score notation above, the count-structured loss below is the birth--death restriction of the denoising score-entropy objective of \citet{lou2024discrete}. Each $Q^j$ has nonnegative off-diagonal entries and zero row sums. We estimate each non-self entry $y\neq x_j$ by $\bigl[s_\theta(x_R;\sigma)\bigr]_{j,y}$, fix the self entry to $1$, and minimize
\begin{equation}
\label{eq:dse}
\begin{aligned}
\mathcal{L}_{\mathrm{DSE}}(\theta)
={}&
\E_{t}\,\E_{x_0\sim p_R}\,
\E_{x_R\sim p_{\sigma\mid 0}(\cdot\mid x_0)}
\sum_{j\in R}\sum_{y\neq x_j} w_{x_j y}\\
&\times\dot\sigma(t)\,\ell_{\mathrm{SE}}\!\left(
\bigl[s_\theta(x_R;\sigma)\bigr]_{j,y},\rho^{x_0}_{j,y}
\right),
\end{aligned}
\end{equation}
where $\ell_{\mathrm{SE}}$ is defined in \eqref{eq:score-entropy-loss}, $t\sim\mathrm{Unif}(\epsilon_t,1)$ with $\epsilon_t\in(0,1)$, and $\sigma=\sigma(t)>0$ with $0<\dot\sigma(t)<\infty$ on the sampled interval. We set $\rho^{x_0}_{j,y}=q_\sigma^j(y\mid x_{0j})/q_\sigma^j(x_j\mid x_{0j})$ and $w_{x_j y}=Q^j(x_j,y)$, matching the training specification in Appendix~\ref{app:training}.

\paragraph{Population minimizer.}
Restrict the objective to modeled entries with $w_{x_jy}>0$ and omit entries with zero generator rate. At a fixed sampled time $t$, for a corrupted state $x_R$ and modeled entry $(j,y)$ with $p_{R,\sigma}(x_R)>0$ and $p_{R,\sigma}(x_R^{j\to y})>0$, set $\bar\rho_{j,y}(x_R)=\E[\rho^{X_0}_{j,y}\mid X_R=x_R]$. Expanding the loss gives, almost surely, the finite conditional risk $u-\bar\rho_{j,y}(x_R)\log u+C$ at prediction $u>0$, where $C$ is finite and does not depend on $u$. Its derivative $1-\bar\rho_{j,y}(x_R)/u$ shows that the unique minimizer is $u=\bar\rho_{j,y}(x_R)$. The strictly positive generator and time weights do not alter this conditional minimizer. Kernel positivity justifies cancellation of the transition probabilities, and the product form of the corruption kernel gives
\begin{align*}
\bar\rho_{j,y}(x_R)
&=
\sum_{x_0}
\frac{p_R(x_0)p_{\sigma\mid0}(x_R\mid x_0)}{p_{R,\sigma}(x_R)}
\frac{q_\sigma^j(y\mid x_{0j})}{q_\sigma^j(x_j\mid x_{0j})}\\
&=
\frac{1}{p_{R,\sigma}(x_R)}
\sum_{x_0}p_R(x_0)p_{\sigma\mid0}(x_R^{j\to y}\mid x_0)
=
\frac{p_{R,\sigma}(x_R^{j\to y})}{p_{R,\sigma}(x_R)}
=s_{j,y}^{R}(x_R;\sigma).
\end{align*}
On the finite common corruption-model grid, both the generator weights and the number of modeled entries are bounded. These entrywise minima therefore make the corrupted concrete score the unique population minimizer of \eqref{eq:dse} on modeled entries.

\subsection{Vanishing-Noise Limit}
\label{app:vanishing-noise-limit}

\begin{proposition}[Vanishing-noise limit]
\label{prop:vanishing-noise}
Suppose $q_\sigma^j$ tends to the identity as $\sigma\downarrow0$, which holds for the finite-grid reflecting birth--death kernel used in \textup{\DISCO}. Then $p_{R,\sigma}\to p_R$ pointwise, and for every $x_R$ with $p_R(x_R)>0$ and every $y\in\mathcal X_j$,
\[
s_{j,y}^R(x_R;\sigma)
\xrightarrow[\;\sigma\downarrow0\;]{}
s_{j,y}^R(x_R).
\]
\end{proposition}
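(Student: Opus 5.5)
The plan is to show first that $p_{R,\sigma}(x_R)\to p_R(x_R)$ pointwise. The concrete-score statement then follows by taking the ratio of two convergent sequences whose denominator has a positive limit. Throughout, I work in the setting of Appendix~\ref{app:score-estimation-proofs}: the corrupted coordinates share the state space on which the kernels $q_\sigma^j$ act, and $q_\sigma^j(b\mid a)\to\mathbf 1\{a=b\}$ for each pair $a,b$ as $\sigma\downarrow0$. For the reflecting birth--death kernel on the finite grid this is immediate, since $\exp(\sigma Q^j)\to I$ entrywise by continuity of the matrix exponential.

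\textbf{Step 1: pointwise convergence of the noisy marginal.} Write
\[
p_{R,\sigma}(x_R)=\sum_{x_0}p_R(x_0)\prod_{j\in R}q_\sigma^j(x_j\mid x_{0j}).
\]
Each summand converges to $p_R(x_0)\mathbf 1\{x_0=x_R\}$, because a finite product of convergent factors converges. To pass the limit through the sum I use dominated convergence with respect to the probability measure $p_R$ on $x_0$. The integrand is a product of transition probabilities, so it is bounded by $1$, and $\sum_{x_0}p_R(x_0)=1$ makes the constant $1$ an integrable dominating function. This gives $p_{R,\sigma}(x_R)\to p_R(x_R)$ for every $x_R$, and the argument also covers countably infinite supports. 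If some coordinates were continuous, the same argument would apply with integrals. However, the proposition concerns the discrete corruption used by \DISCO{}, so I will state it for count coordinates.

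\textbf{Step 2: the ratio.} Apply Step~1 both at $x_R$ and at $x_R^{j\to y}$. By hypothesis $p_R(x_R)>0$, so $p_{R,\sigma}(x_R)>0$ for all sufficiently small $\sigma$. (Kernel positivity in fact gives positivity for every $\sigma>0$.) The quotient is therefore well defined, and
\[
s_{j,y}^R(x_R;\sigma)=\frac{p_{R,\sigma}(x_R^{j\to y})}{p_{R,\sigma}(x_R)}\longrightarrow\frac{p_R(x_R^{j\to y})}{p_R(x_R)}=s_{j,y}^R(x_R).
\]
This includes the case $p_R(x_R^{j\to y})=0$, where the limit is $0$.

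\textbf{Main obstacle.} The only delicate point is justifying the interchange of limit and sum in Step~1. I will handle it with the uniform bound $q_\sigma^j\le1$ together with dominated convergence, as above. A second point to check is that the target state $x_R$ lies in the common corruption grid, so that the indicator limit is meaningful. Both are routine once the identity limit of the kernel is assumed.
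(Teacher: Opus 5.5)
Your proposal is correct and matches the paper's proof: dominated convergence over $x_0$, using $0\le p_{\sigma\mid 0}(x_R\mid x_0)\le 1$ against the summable weights $p_R(x_0)$, gives $p_{R,\sigma}\to p_R$ pointwise. The concrete-score limit then follows from continuity of the quotient when the denominator's limit $p_R(x_R)$ is positive.
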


\begin{proof}
On the countable support, $p_{R,\sigma}(x_R)=\sum_{x_0}p_{\sigma\mid0}(x_R\mid x_0)p_R(x_0)$. For each fixed $x_0$, the summand converges to $\mathbf 1\{x_R=x_0\}p_R(x_0)$ as $\sigma\downarrow0$. It is bounded by the summable function $p_R(x_0)$ because $0\leq p_{\sigma\mid0}(x_R\mid x_0)\leq1$. Dominated convergence therefore gives $p_{R,\sigma}(x_R)\to p_R(x_R)$ for every $x_R$. If $p_R(x_R)>0$, continuity of the ratio yields
\[
s_{j,y}^R(x_R;\sigma)
=\frac{p_{R,\sigma}(x_R^{j\to y})}{p_{R,\sigma}(x_R)}
\longrightarrow
\frac{p_R(x_R^{j\to y})}{p_R(x_R)}
=s_{j,y}^R(x_R).
\]
\end{proof}

\section{\DISCO{} Implementation}
\label{app:algorithm-details}

\paragraph{Cross-fitting.}
We partition the observations into evaluation folds $\mathcal I_1,\ldots,\mathcal I_B$. For each fold, we fit the rank transform, joint concrete-score network, and projection network on the other folds. We then apply the fitted transform and networks to the held-out fold to compute the CCS and OCS.

\subsection{Rank Transform}
\label{app:rank-transform}

For coordinate $j$, let $v_{j,0}<\cdots<v_{j,K_j}$ be the distinct values observed in the score-training split and set $\widehat T_j(v_{j,k})=k$. Apply this fitted transform to the held-out evaluation split without using its empirical distribution. A held-out value below $v_{j,0}$ is assigned rank $0$, a value above $v_{j,K_j}$ is assigned rank $K_j$, and a value strictly between $v_{j,k}$ and $v_{j,k+1}$ is assigned rank $\max(k,1)$. The first interior gap is assigned rank $1$ to keep the observed minimum as a separate state. Thus the evaluation values are clipped or coarsened onto the rank grid according to order rather than Euclidean distance.

\subsection{Joint Concrete-Score Training}
\label{app:training}

\paragraph{Training objective.} We train the joint concrete-score network once on the score-training split, using SEDD's denoising score-entropy objective \citep{lou2024discrete} in \eqref{eq:dse} with $R=V$: each minibatch draws $t\sim\mathrm{Unif}(\epsilon_t,1)$, perturbs $x_0\sim p_V$ to $x\sim p_{\sigma(t)\mid 0}(\cdot\mid x_0)$, and weights the loss by $\dot\sigma(t)$. We use the geometric schedule $\sigma(t)=\sigma_{\min}^{1-t}\sigma_{\max}^{t}$ with $(\epsilon_t,\sigma_{\min},\sigma_{\max})=(10^{-3},10^{-3},3)$. The training objective is restricted to the neighboring replacements $y=x_j\pm1$ and masks replacements outside the common corruption-model grid $\{0,\ldots,K\}$, where $K=\max_jK_j$. We choose a reflecting birth--death corruption on this grid for every coordinate: $Q^j(a,b)=1$ for $|a-b|=1$, zero for other off-diagonal entries, and $Q^j(a,a)=-\sum_{b\ne a}Q^j(a,b)$. The common upper bound $K$ may exceed a coordinate's largest training rank $K_j$, so this grid is not the coordinate's population support.

\paragraph{Network architecture.} The count network is a multilayer perceptron with four hidden layers of width 512 and SiLU activations, without hidden-layer normalization or dropout. Writing $K^\vee=\max(K,1)$ for the common grid bound, its value features are $[\,x/K^\vee,\ \log(1+x)/\log(1+K^\vee)\,]$. These features are standardized using training-split means and standard deviations, with standard deviations floored at $10^{-4}$; $\log\sigma$ is then appended. The output is reshaped to $\mathbb{R}^{d\times2}$, with the last axis indexing the two directions.

\subsection{Marginal Concrete-Score Projection}
\label{app:masking}
\label{app:local-score-projection}

\subsubsection{Mask Distribution}
The score-entropy estimator draws a remaining-set size uniformly from $\{2,\ldots,d\}$ and then a subset uniformly conditional on that size, using one mask per training observation and epoch. Independently with probability $0.2$, it replaces the sampled mask by the full set $V$. For $d\ge2$, the resulting distribution is
\[
\mathcal Q(R)=
\frac{0.8}{(d-1)\binom{d}{|R|}}+0.2\,\mathbf 1\{R=V\},
\qquad 2\le |R|\le d.
\]
Thus every subset of size at least two has positive sampling probability; for $d>2$, neither subsets nor final mask sizes are uniformly distributed.

\subsubsection{Score-Entropy Training}
Each projection update draws $X_0\sim p_V$, samples $\log\sigma$ uniformly from $[\log10^{-3},\log3]$, and corrupts the full observation to $X_\sigma\sim p_{\sigma\mid0}(\cdot\mid X_0)$. The joint network evaluates $X_\sigma$ without parameter updates, while the projection network receives $(X_{\sigma,R},R,\sigma)$ with removed coordinates zero-masked. Masks are independent of the observation, and projection training uses no $\dot\sigma$ loss multiplier.

A mask-conditioned MLP with the same hidden architecture as the joint network learns a log correction to a positive baseline. The baseline is obtained by filling removed coordinates with values from one training observation sampled once and held fixed, then evaluating the joint network.

We apply the score-entropy loss in \eqref{eq:score-entropy-loss} to the numerically stabilized, baseline-normalized projected and joint-network scores. Log clipping and positive numerical floors are used for stability; their constants are specified in the code. We average the loss over neighboring replacements within the grid for $j\in R$ and add a full-set ($R=V$) loss with weight~1.

\subsection{Curvature Estimation}
\label{app:two-pass}

\paragraph{Evaluation.} We use five geometrically spaced noise levels from $10^{-3}$ to $5\times10^{-3}$ in $\mathcal G_\sigma$ and direct log concrete-score outputs from the projection network.

We evaluate diagonal curvature only when $x_j+2\le K$, so both upward log concrete-score entries correspond to transitions inside the fitted model grid. Invalid entries are excluded separately for each node, rather than set to zero or removed from every node's evaluation sample. Counts and weights in \eqref{eq:cond-var-score} use admissible observations; a node with no repeated admissible level is reported as unavailable and stops ordering. This model-grid restriction does not identify the fitted grid with the population support.

For the grouped conditional variance \eqref{eq:cond-var-score}, we use rank levels with at least $n_{\min}=2$ admissible evaluation observations and weight each by the number of these observations. We average the resulting scores across folds.

\paragraph{Symmetric off-diagonal curvature evaluation.} For parent selection, we use the commuting-difference identity
\begin{equation}
\label{eq:symmetric-parent-curvature}
H_{ij,\widehat R_i}
=\Dop_i\Dop_j\log p_{\widehat R_i}
=\Dop_j\Dop_i\log p_{\widehat R_i}
=H_{ji,\widehat R_i}.
\end{equation}
For each node $i$, we obtain the $\widehat R_i$-marginal concrete score for $\widehat R_i=\{i\}\cup\mathrm{Pred}_{\widehat\pi}(i)$, evaluate the projection network on the unshifted batch and the batch shifted in coordinate $i$, $x_{\widehat R_i}+e_i$ on admissible observations, and extract the entries for all predecessor coordinates $j\in\mathrm{Pred}_{\widehat\pi}(i)$ from those two outputs. Thus the off-diagonal curvatures in \eqref{eq:estimated-curvatures} require only one shifted batch per node. We compute these curvatures for all predecessors simultaneously, averaging their absolute values over noise levels and admissible observations for each edge. All candidate parents use the same unshifted score predictions and remaining-set mask.

\subsection{Parent Selection}
\label{app:parent-rule}

We estimate OCS with $B=3$ cross-fitting folds. We use $\mathcal I_b^{ij}=\{a\in\mathcal I_b:X_{ai}+1\le K,\ X_{aj}+1\le K\}$ in \eqref{eq:empirical-ocs}, with denominator $|\mathcal I_b^{ij}|$ for each edge. An empty set is reported as unavailable and stops parent selection. Clipped-boundary evaluation uses $\mathcal I_b^{ij}=\mathcal I_b$. For a node $i$ with at least two candidate parents, we robustly standardize these estimates within each fold,
\[
Z_{j\to i}^{(b)}
=
\frac{\widehat{\mathrm{OCS}}_{j\to i}^{(b)}
-\operatorname{median}_{k\in\mathrm{Pred}_{\widehat\pi}(i)}\widehat{\mathrm{OCS}}_{k\to i}^{(b)}}
{s_i^{(b)}}.
\]
Here $s_i^{(b)}$ is the IQR of the candidate OCS estimates, falling back successively to the scaled median absolute deviation, standard deviation, and unit scale when degenerate. We apply the selection-frequency rule in Section~\ref{sec:dag-recovery}.

We use $(\tau_Z,\pi_{\min})=(2,1)$ unless otherwise stated. For a single candidate, we omit centering and use the median robust scale across children with at least two candidates in the same fold. If none exist, we use the robust scale of all candidate scores in that fold. The same threshold and selection-frequency rule still apply. Candidate sets of size two or three remain unable to pass the default threshold in exact arithmetic. This practical rule does not provide a finite-sample error-control guarantee. Appendix~\ref{app:diagnostic-parent-selection} evaluates threshold sensitivity and compares parent-selection methods.

\section{Experimental Details and Additional Results}
\label{app:disco-exp}

\subsection{Experimental Setup}
\label{app:experimental-setup}

\paragraph{Graph and sampling protocol.}
For count-data synthetic experiments, ER$k$ denotes an Erd\H{o}s--R\'enyi DAG obtained from a random permutation used as the causal order, with each admissible earlier-to-later edge included independently with probability $2k/d$ (expected edge count $k(d-1)$). Unless stated otherwise, we use ER3 DAGs. Within each setting and seed, all methods receive the same graph and observations. For sample-size curves, smaller samples are row prefixes of the largest sample. The continuous-data comparison uses the fixed-edge-count convention specified in Appendix~\ref{app:continuous-recovery}.

\paragraph{Replication and pairing.}
Unless stated otherwise, we use 10 paired seeds for each synthetic experiment. The Lahman comparisons use three runs on each fixed cohort; these are computational repetitions, not independent sampled cohorts. Ablations also share learned scores whenever applicable; graph seeds are the independent replication units.

\paragraph{Evaluation metrics.}
We use $A_{\mathrm{top}}=|\mathcal E|^{-1}\sum_{(u,v)\in\mathcal E}\mathbf 1\{\widehat\pi(u)<\widehat\pi(v)\}$ for ordering agreement. The nonempty evaluation set $\mathcal E$ is the true DAG edge set $E$ for synthetic data and the partial directed reference set $\mathcal R$ for real data. DAG recovery comparisons report $F_1$, precision, recall, and structural Hamming distance (SHD), defined as $\mathrm{FP}+\mathrm{FN}$, so reversing a true edge counts twice.

\paragraph{Training settings.}
On count data, we train the joint network for up to 800 epochs and the projection network for 800 epochs, unless stated otherwise. The COM--Poisson experiment (Table~\ref{tab:nonstandard-count-recovery}), parent-selection experiments (Tables~\ref{tab:parent-selection-sensitivity} and~\ref{tab:parent-method-comparison}), and 2019--2025 Lahman \DISCO{} analysis use 400 projection epochs. In the count-data experiments, \DISCO{} clips upper-grid shifts to $K$ and includes boundary outputs, using all fold observations for OCS, except where admissible-domain evaluation is specified. Separate continuous-data settings are given in Appendix~\ref{app:continuous-recovery}.

Both networks use Adam with batch size 512. The joint and projection learning rates start at $10^{-3}$ and $1.5\times10^{-3}$, respectively, and follow cosine schedules to $10^{-5}$ and $3\times10^{-5}$; their gradient-norm clipping thresholds are 1 and 5. Joint validation uses a 10\% internal holdout (at most 4096 observations), begins at epoch 300, and runs every 100 epochs with patience 2 and minimum improvement $10^{-4}$.

\paragraph{Data generation.}
For non-source nodes, write the index as $z_j=b_0+\sum_{k\in\pa_G(j)}\beta_{kj}\phi(X_k)$. Table~\ref{tab:dgp-spec} gives the six benchmark mechanisms; $\mu_j$ denotes the conditional mean and $q_j$ the binomial success probability. The intercept is fixed within each mechanism. We do not standardize the generated counts, normalize by in-degree, or modify observations after generation.

\begin{table}[htbp]
\TableStyle
\caption{The six single-family benchmark mechanisms. Coefficients are drawn independently and uniformly from the stated intervals.}
\label{tab:dgp-spec}
\begin{tabular}{@{}llccc@{}}
\toprule
\shortstack{Conditional\\family} & Conditional response & $\phi(x)$ & $b_0$ & Coefficients \\
\midrule
\multirow{2}{*}{Poi} & $\mu_j=0.2+\operatorname{softplus}(z_j)$ & $x$ & $0.5$ & $[0.30,0.45]$ \\
 & $\mu_j=\exp(\widetilde z_j)$ & $\log(1+x)$ & $0.4$ & $[0.15,0.30]$ \\
\midrule
\multirow{2}{*}{NB} & $\mu_j=0.2+\operatorname{softplus}(z_j)$ & $x$ & $0.5$ & $[0.30,0.45]$ \\
 & $\mu_j=\exp(\widetilde z_j)$ & $\log(1+x)$ & $0.4$ & $[0.15,0.30]$ \\
\midrule
\multirow{2}{*}{Bin} & $q_j=\Phi(z_j)$ & $x/50$ & $-1.28$ & $[0.50,1.00]$ \\
 & $q_j=\operatorname{sigmoid}(z_j)$ & $\sqrt{x/50}$ & $-2.197$ & $[0.60,0.90]$ \\
\bottomrule
\end{tabular}
\end{table}

Here $\widetilde z_j=\operatorname{clip}(z_j,-3,3.5)$ limits the exponential response, and $\Phi$ is the standard normal CDF. Source means are 2 for the softplus mechanisms and 1.5 for the exponential mechanisms. Negative binomial nodes have size $r=6$. Binomial nodes have size $M=50$ and source probability $0.10$; generated probabilities are clipped to $[10^{-8},1-10^{-8}]$. Mixed-family benchmarks use the softplus/affine rows for Poisson and negative binomial children and the sigmoid/square-root row for binomial children, with the feature map determined by the child's conditional family.

\paragraph{CCS comparison.}
\label{app:ccs-diagnostics}
Table~\ref{tab:ccs-ablation}(a) uses the softplus/affine Poisson and negative-binomial mechanisms and the sigmoid/square-root binomial mechanism in Table~\ref{tab:dgp-spec}, at $d=50$ and $n=5000$. CCS and the constant-curvature criterion use the same fitted networks. Across 10 seeds, we evaluate nested prefixes of the true causal order with sizes 2--9, excluding prefixes in which every node is a sink; this leaves 60 matched nontrivial ancestral remaining sets per conditional family. Accuracy is the fraction of sets for which the selected node is a true sink; sets within a graph seed are not independent replicates.

For DAG recovery, each criterion orders all nodes using the same fitted networks, and the default OCS rule then selects parents. Table~\ref{tab:ccs-full-dag} reports the results. CCS attains higher mean $A_{\mathrm{top}}$ and $F_1$ and lower mean SHD for Poisson and negative binomial data. For binomial data, the two criteria give similar $A_{\mathrm{top}}$, and the constant-curvature criterion gives slightly higher mean $F_1$ and lower mean SHD.

\begin{table}[H]
\TableStyle
\caption{DAG recovery with CCS and the constant-curvature criterion ($d=50$, $n=5000$). Entries are means $\pm$ sample standard deviations over 10 seeds. Constant denotes the constant-curvature criterion.}
\label{tab:ccs-full-dag}
\begin{tabular}{@{}llccc@{}}
\toprule
\shortstack{Conditional\\family} & Criterion & $A_{\mathrm{top}}$ & $F_1$ & SHD \\
\midrule
\multirow{2}{*}{Poi} & Constant & $0.915\pm0.022$ & $0.883\pm0.037$ & $31.5\pm9.4$ \\
 & CCS & $0.931\pm0.017$ & $0.892\pm0.034$ & $28.9\pm8.7$ \\
\midrule
\multirow{2}{*}{NB} & Constant & $0.907\pm0.019$ & $0.763\pm0.070$ & $57.9\pm15.5$ \\
 & CCS & $0.931\pm0.017$ & $0.792\pm0.066$ & $51.2\pm15.4$ \\
\midrule
\multirow{2}{*}{Bin} & Constant & $0.899\pm0.028$ & $0.884\pm0.027$ & $30.8\pm7.7$ \\
 & CCS & $0.901\pm0.027$ & $0.877\pm0.031$ & $32.3\pm8.4$ \\
\bottomrule
\end{tabular}
\end{table}

\subsubsection{Baseline Methods}
\label{app:baseline-methods}

\paragraph{ODS.}
\ODSpoi{} follows the QVF overdispersion ordering criterion of \citet{park2018qvf}.  The multivariate experiments use regression estimates of the conditional mean and variance because exact-match conditioning cells collapse at the graph densities considered here.  The fixed-family ODS variants (Poisson, negative binomial, and binomial) and ODS (Oracle) differ only in the supplied QVF coefficient and family-specific regression link. For ODS (Oracle), the fixed parameters are the negative binomial size $r$ and binomial trial count $M$, giving QVF coefficients $1/r$ and $-1/M$, respectively; Poisson has coefficient zero and requires no additional fixed parameter.

\paragraph{\NOTEARSls.}
We use the gCastle nonlinear MLP \citep{zhang2021gcastle} with squared loss on standardized $\log(1+x)$ counts.

\paragraph{DiffAN.}
We use the DiffAN implementation of \citet{sanchez2023diffan}, which includes CAM pruning \citep{buhlmann2014cam}. Runtime includes diffusion ordering and parent pruning.

\paragraph{MRS.}
We use the MRS implementation \citep{park2019ghd} with GES skeleton estimation. Mixed-family and scalability experiments use its Poisson setting. Single-family experiments use the Poisson and binomial settings for the corresponding families, and a hyper-Poisson proxy for negative binomial data.

\subsection{Additional Benchmarks}
\label{app:additional-benchmarks}

\subsubsection{Invariance to Strictly Increasing Transformations}
\label{app:rank-transform-invariance}

Table~\ref{tab:rank-transform-invariance} compares \DISCO{} with \DISCONoRank{} after applying $\log(1+x)$, Anscombe's $2\sqrt{x+3/8}$ transform, and $x^2+x$ to the same Poisson data, generated by the softplus/affine mechanism in Table~\ref{tab:dgp-spec}. With the rank transform, median $A_{\mathrm{top}}=0.922$ and $F_1=0.894$ remain unchanged across transformations. Without it, median $F_1$ falls from $0.899$ on raw counts to $0.301$ under $x^2+x$.

\begingroup
\begin{table}[H]
\TableStyle
\caption{Invariance to strictly increasing transformations on Poisson DAGs ($d=50$, $n=5000$). Entries are medians. \DISCONoRank{} omits the rank transform.}
\label{tab:rank-transform-invariance}
\begin{minipage}{\linewidth}
\normalsize\rmfamily
\setlength{\parskip}{0pt}
\setlength{\tabcolsep}{0pt}
\renewcommand{\arraystretch}{1.0}
\setlength{\aboverulesep}{0.3ex}
\setlength{\belowrulesep}{0.4ex}
\begin{tabular*}{\linewidth}{@{\extracolsep{\fill}}lcc@{\hspace{5pt}}cc@{\hspace{5pt}}cc@{\hspace{5pt}}cc@{}}
\toprule
& \multicolumn{2}{c}{Raw counts}
& \multicolumn{2}{c}{$\log(1+x)$}
& \multicolumn{2}{c}{Anscombe}
& \multicolumn{2}{c}{$x^2+x$} \\
\cmidrule(lr){2-3}\cmidrule(lr){4-5}\cmidrule(lr){6-7}\cmidrule(lr){8-9}
\makebox[0.18\linewidth][l]{Method}
& \makebox[0.095\linewidth]{$A_{\mathrm{top}}$} & \makebox[0.095\linewidth]{$F_1$}
& \makebox[0.095\linewidth]{$A_{\mathrm{top}}$} & \makebox[0.095\linewidth]{$F_1$}
& \makebox[0.095\linewidth]{$A_{\mathrm{top}}$} & \makebox[0.095\linewidth]{$F_1$}
& \makebox[0.095\linewidth]{$A_{\mathrm{top}}$} & \makebox[0.095\linewidth]{$F_1$} \\
\midrule
\textbf{\DISCO{}} & 0.922 & 0.894 & 0.922 & 0.894 & 0.922 & 0.894 & 0.922 & 0.894 \\
\DISCONoRank{} & 0.937 & 0.899 & 0.897 & 0.726 & 0.890 & 0.860 & 0.834 & 0.301 \\
\bottomrule
\end{tabular*}\par
\end{minipage}
\end{table}
\endgroup

\subsubsection{Single-Family DAG Recovery}
\label{app:extended-single-family}

The binomial--probit panel uses \ODSpoi{} with a Poisson QVF, a non-oracle specification for binomial data.

\begin{figure}[htbp]
\centering
\includegraphics[width=\linewidth]{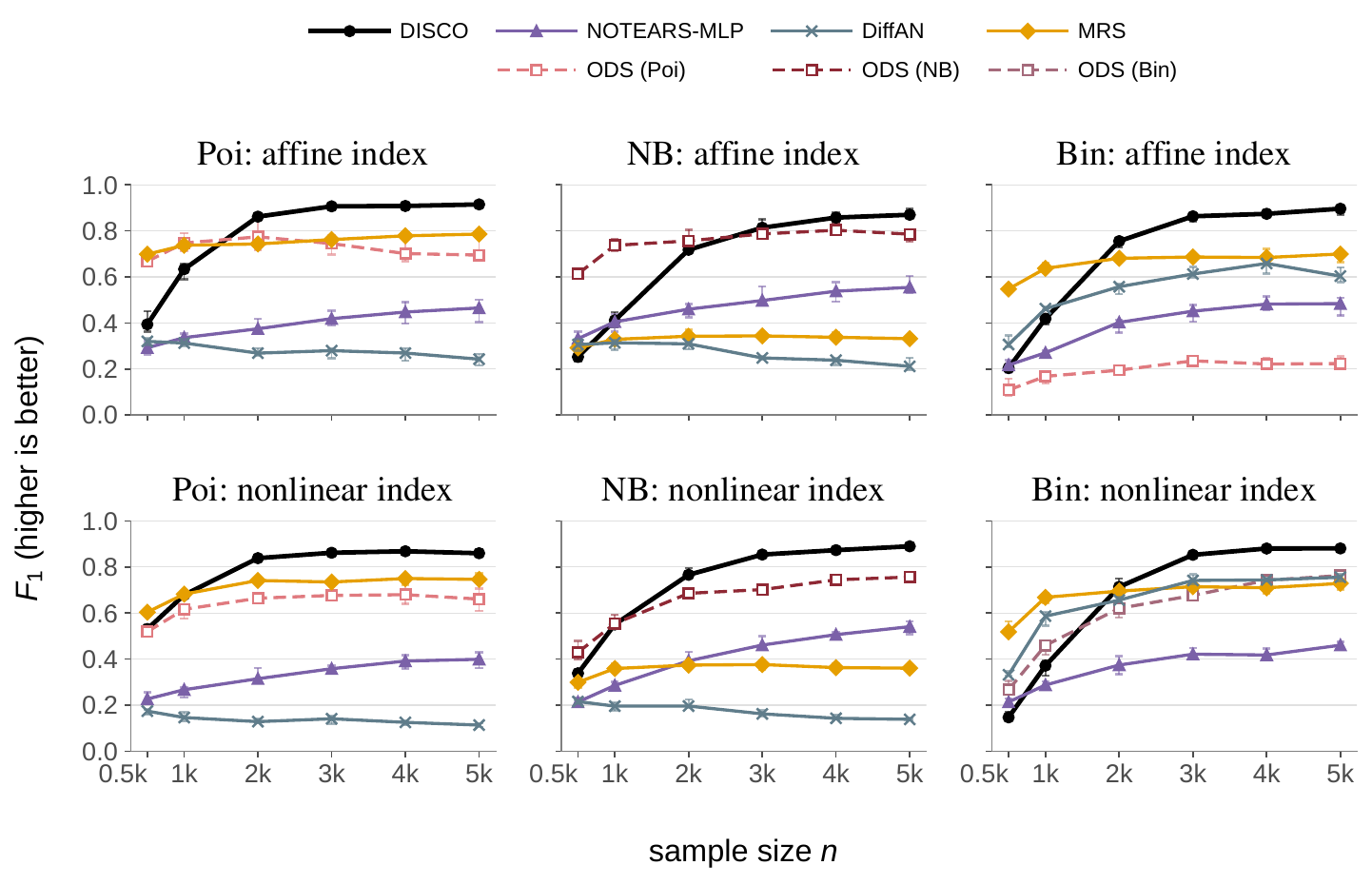}
\caption{Extended single-family comparison across all six mechanisms and six sample sizes. Columns correspond to the Poi, NB, and Bin conditional families; the top and bottom rows use the first and second mechanisms for each family in Table~\ref{tab:dgp-spec}, respectively. Points and error bars show the median and interquartile range.}
\label{fig:single-family-all-mechanisms}
\end{figure}

\subsubsection{Mixed-Family DAG Recovery}
\label{app:extended-mixed-family}

We split a causal order into three consecutive groups of nearly equal size and assign Poisson, negative binomial, and binomial mechanisms to these groups in all six permutations. The arrows in the panel titles indicate this assignment order. Each node uses the mechanism for its assigned conditional family in Table~\ref{tab:dgp-spec}. Figure~\ref{fig:curvature-scaling}(c) reports the NB $\to$ Bin $\to$ Poi configuration. To examine sensitivity to conditional-family specification, Figure~\ref{fig:mixed-family-all-methods} compares \DISCO{} with ODS (Oracle), ODS (Poi), ODS (NB), and ODS (Bin) across all six configurations.

\begin{figure}[htbp]
\centering
\includegraphics[width=\linewidth]{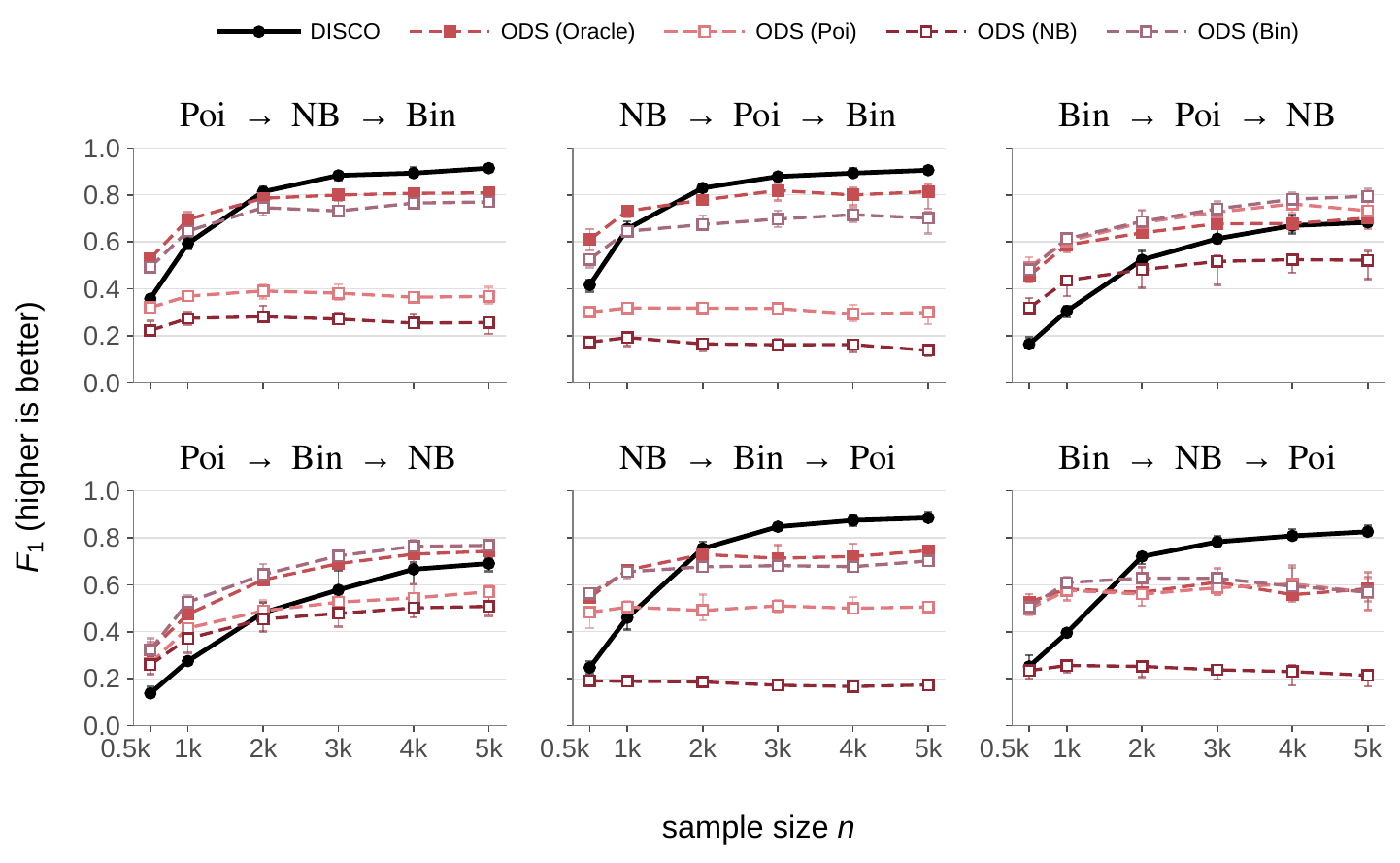}
\caption{Mixed-family DAG recovery across six conditional-family configurations. Each panel compares \DISCO{} with ODS (Oracle), ODS (Poi), ODS (NB), and ODS (Bin). ODS (Oracle) uses the true nodewise conditional families and their fixed parameters. Points and error bars show the median and interquartile range.}
\label{fig:mixed-family-all-methods}
\end{figure}

\subsubsection{Scalability with Graph Size}

Table~\ref{tab:scale} reports $A_{\mathrm{top}}$ for the Poisson scalability experiments in Table~\ref{tab:scalability-main}, using the softplus/affine mechanism in Table~\ref{tab:dgp-spec} under the same 240-minute time limit.

\begin{table}[H]
\TableStyle
\caption{Ordering accuracy in the scalability experiments at $n=5000$. Entries are medians; configurations marked Timeout in Table~\ref{tab:scalability-main} are omitted.}
\label{tab:scale}
\begin{tabular}{@{}lrc@{}}
\toprule
Method & $d$ & $A_{\mathrm{top}}$ \\
\midrule
\multirow{4}{*}{\DISCO{}} & 100 & 0.937 \\
 & 200 & 0.902 \\
 & 500 & 0.914 \\
 & 1000 & 0.908 \\
\midrule
\multirow{3}{*}{\textsc{ODS} (Oracle)} & 100 & 0.819 \\
 & 200 & 0.816 \\
 & 500 & 0.811 \\
\midrule
\multirow{3}{*}{MRS} & 100 & 0.797 \\
 & 200 & 0.779 \\
 & 500 & 0.790 \\
\midrule
\NOTEARSls{} & 100 & 0.541 \\
\bottomrule
\end{tabular}
\end{table}

Total time in Table~\ref{tab:scalability-main} covers the complete DAG recovery procedure, including parent selection. MRS, \NOTEARSls{}, and DiffAN were first evaluated with a four-hour limit. Configurations finishing within this limit were evaluated on the remaining seeds. For DiffAN at $d=100$ and $d=200$, the time limit was reached during parent pruning after ordering had finished.

CPU baselines use one thread, while \DISCO{} uses GPU training and CPU post-processing. The runtime comparison is therefore descriptive and does not imply matched hardware.

\subsubsection{Sparse DAG Recovery}
\label{app:er1-recovery}

We repeat the single-family and mixed-family designs with ER1 graphs, holding the mechanisms, coefficient distributions, and sample-size grid fixed. Table~\ref{tab:er1-recovery} reports medians for \DISCO{} at $n=5000$.

\begin{table}[H]
\TableStyle
\caption{Single-family and mixed-family DAG recovery on ER1 graphs ($d=100$, $n=5000$). Entries are medians for \DISCO{}.}
\label{tab:er1-recovery}
\label{tab:er1-single}
\label{tab:er1-mixed}
\begin{tabular}{@{}lllccc@{}}
\toprule
Setting & \shortstack{Conditional\\family} & Index/configuration & $A_{\mathrm{top}}$ & $F_1$ & SHD \\
\midrule
\multirow{6}{*}{\shortstack{Single-\\family}} & \multirow{2}{*}{Poi} & Affine & 0.816 & 0.750 & 49.0 \\
 &  & Nonlinear & 0.897 & 0.873 & 25.0 \\
 & \multirow{2}{*}{NB} & Affine & 0.841 & 0.788 & 42.5 \\
 &  & Nonlinear & 0.907 & 0.884 & 22.0 \\
 & \multirow{2}{*}{Bin} & Affine & 0.886 & 0.860 & 28.5 \\
 &  & Nonlinear & 0.904 & 0.886 & 23.0 \\
\midrule
\multirow{6}{*}{\shortstack{Mixed-\\family}} & \multicolumn{2}{l}{Poi $\to$ NB $\to$ Bin} & 0.891 & 0.845 & 29.5 \\
 & \multicolumn{2}{l}{Poi $\to$ Bin $\to$ NB} & 0.845 & 0.796 & 44.0 \\
 & \multicolumn{2}{l}{NB $\to$ Poi $\to$ Bin} & 0.810 & 0.777 & 44.0 \\
 & \multicolumn{2}{l}{NB $\to$ Bin $\to$ Poi} & 0.766 & 0.705 & 58.0 \\
 & \multicolumn{2}{l}{Bin $\to$ Poi $\to$ NB} & 0.739 & 0.685 & 67.0 \\
 & \multicolumn{2}{l}{Bin $\to$ NB $\to$ Poi} & 0.723 & 0.633 & 75.0 \\
\bottomrule
\end{tabular}
\end{table}

\subsubsection{Scale-Free DAG Recovery}
\label{app:scale-free-recovery}

We replace the ER graph generator with the Barab\'asi--Albert preferential-attachment generator, keeping $d=50$ and $n=5000$ for each of Poisson, negative binomial, and binomial. SF1 and SF3 attach each new node to one and three existing nodes, respectively, yielding 49 and 141 edges. A random node ordering orients the skeleton acyclically; attachment time is not supplied as a causal order. The conditional mechanisms and coefficient ranges are unchanged from the parent-selection study in Appendix~\ref{app:diagnostic-parent-selection}. Seed labels identify independent repetitions within a graph type, not identical observations across ER and SF graphs.

\begin{table}[H]
\TableStyle
\caption{Scale-free count-DAG recovery ($d=50$, $n=5000$). Entries are means $\pm$ sample standard deviations.}
\label{tab:scale-free-recovery}
\begin{tabular}{@{}llccc@{}}
\toprule
Graph & \shortstack{Conditional\\family} & $A_{\mathrm{top}}$ & $F_1$ & SHD \\
\midrule
\multirow{3}{*}{SF1} & Poi & $0.784\pm0.065$ & $0.705\pm0.091$ & $28.6\pm8.5$ \\
 & NB & $0.822\pm0.054$ & $0.741\pm0.094$ & $24.8\pm8.9$ \\
 & Bin & $0.778\pm0.060$ & $0.782\pm0.043$ & $20.1\pm4.0$ \\
\midrule
\multirow{3}{*}{SF3} & Poi & $0.918\pm0.019$ & $0.719\pm0.074$ & $65.3\pm14.4$ \\
 & NB & $0.904\pm0.035$ & $0.620\pm0.114$ & $82.4\pm19.7$ \\
 & Bin & $0.769\pm0.050$ & $0.693\pm0.077$ & $71.2\pm14.8$ \\
\bottomrule
\end{tabular}
\end{table}

Recovery varies with both graph structure and conditional family. In particular, the higher $A_{\mathrm{top}}$ on SF3 for Poisson and negative binomial does not imply a corresponding increase in $F_1$. These results extend the empirical evaluation beyond ER graphs; they do not establish performance independent of graph structure or conditional family.

\subsubsection{DAG Recovery Beyond Standard Count Families}
\label{app:nonstandard-count-recovery}

We test recovery beyond the Poisson, negative binomial, and binomial benchmark conditional families using mean-parametrized Conway--Maxwell--Poisson (COM--Poisson) distributions \citep{huang2017cmp}. For a parent-independent shape $\nu>0$,
\[
p(y\mid x_{\pa_G(j)})=
\exp\{y\eta(x_{\pa_G(j)})-\nu\log(y!)-A_\nu(\eta(x_{\pa_G(j)}))\},
\qquad y\in\Nzero.
\]
We use $\nu\in\{0.5,1,2\}$ on DAGs with $d=50$ and $n=5000$; $\nu=1$ reduces to the Poisson distribution. Graphs and coefficients are paired by seed. At fixed parent values, all three settings share source mean 2 and non-source conditional mean $0.2+\operatorname{softplus}(0.5+\sum_{k\in\pa_G(j)}\beta_{kj}X_k)$, with $\beta_{kj}\sim\mathrm{U}(0.30,0.45)$. We solve for the canonical parameter to match this mean, rather than treating the usual COM--Poisson rate as its expectation. Hence conditional means are matched, while the corresponding canonical parameters and samples may differ. Sampling approximates the unbounded distribution through adaptive support expansion, with bounds on the omitted probability mass and its first-moment contribution. \DISCO{} receives neither the conditional family nor $\nu$; the same configuration is used for all three $\nu$ values.

\begin{table}[H]
\TableStyle
\caption{Recovery beyond the three benchmark conditional families ($d=50$, $n=5000$). Entries are means $\pm$ sample standard deviations. The $\nu=1$ row uses results from the corresponding Poisson experiment; graph metrics include \DISCO{} parent selection.}
\label{tab:nonstandard-count-recovery}
\begin{tabular}{@{}lccc@{}}
\toprule
Conditional family & $A_{\mathrm{top}}$ & $F_1$ & SHD \\
\midrule
COM--Poisson ($\nu=0.5$) & $0.934\pm0.018$ & $0.853\pm0.033$ & $37.8\pm8.5$ \\
COM--Poisson ($\nu=1$) & $0.928\pm0.020$ & $0.877\pm0.043$ & $32.4\pm10.2$ \\
COM--Poisson ($\nu=2$) & $0.920\pm0.019$ & $0.873\pm0.026$ & $33.6\pm6.5$ \\
\bottomrule
\end{tabular}
\end{table}

Ordering remains accurate in both non-Poisson settings, while DAG recovery varies: $\nu=0.5$ has lower mean $F_1$ than $\nu=1$. This supports recovery on additional fixed-carrier one-parameter families, not arbitrary count conditional distributions or parent-dependent shape parameters.

\subsubsection{Continuous-Data Ordering Recovery}
\label{app:continuous-recovery}

\paragraph{Algorithm.}
\DISCOCont{} builds on DiffAN's score network \citep{sanchez2023diffan}, adds marginal projection, and uses CCS for sink selection. We train an unmasked joint score network for 1000 epochs and a width-256 mask-conditioned projection network for 400 epochs, using squared error against noise predictions from the fixed joint network on corrupted inputs. Both stages use a learning rate of $10^{-3}$ and a batch size of 1000; neither network is refitted during ordering. Inputs are standardized for network training, but CCS uses curvature in the original data units at $t=0.05$ and held-out squared residuals from five-fold cross-fitted cubic splines of $X_j$ (six quantile knots; ridge penalty $10^{-3}$).

The DiffAN comparator uses masking without residue updates. For Gaussian data, it is trained separately with a 3000-epoch cap and the implementation's early-stopping and voting rules. For Gamma data, both ordering procedures share the same joint network, with DiffAN evaluated at $t=0.05$ without retraining. The methods differ in both marginal-score approximation and sink selection.

The Gamma joint network omits the first hidden-layer normalization to retain amplitude information relevant to score derivatives. We use paired Gaussian perturbations $\epsilon$ and $-\epsilon$. Half the noise levels come from $t\sim\mathrm{Uniform}(0,0.5)$ and half from noise standard deviations drawn log-uniformly on $[0.05,0.8]$. This mixture broadens the range of noise scales used for joint training. Projection uses $t\sim\mathrm{Uniform}(0,0.5)$.

\paragraph{Experimental setup.}
We compare Gaussian and Gamma conditional families at $d=30$ and $n=5000$ for both ER1 and ER3. A random node permutation defines the possible forward edges, of which exactly $kd$ are sampled for ER$k$. Each mechanism $f_j$ is drawn independently from a zero-mean Gaussian process with a multivariate RBF kernel of unit variance and unit length scale. In the Gaussian model, sources are standard normal and non-sources follow $X_j=f_j(X_{\pa_G(j)})+\varepsilon_j$ with independent $\varepsilon_j\sim\mathcal N(0,1)$. Gamma conditionals have fixed shape 6 and mean $2\exp\{0.5f_j(X_{\pa_G(j)})\}$; sources have the same shape and mean 2. Within each family and seed, both methods use the same graph and observations.

\begin{table}[H]
\TableStyle
\caption{Continuous-data ordering recovery ($d=30$, $n=5000$). Entries report $A_{\mathrm{top}}$ (higher is better) as means $\pm$ sample standard deviations.}
\label{tab:continuous-recovery}
\begin{tabular}{@{}llcc@{}}
\toprule
Graph & Method & Gaussian & Gamma \\
\midrule
\multirow{2}{*}{ER1} & \DISCOCont{} & $0.877\pm0.063$ & $0.640\pm0.126$ \\
    & DiffAN & $0.850\pm0.072$ & $0.540\pm0.137$ \\
\midrule
\multirow{2}{*}{ER3} & \DISCOCont{} & $0.866\pm0.041$ & $0.718\pm0.021$ \\
    & DiffAN & $0.882\pm0.049$ & $0.657\pm0.057$ \\
\bottomrule
\end{tabular}
\end{table}

\paragraph{Results.}
For Gaussian data, \DISCOCont{} has higher mean $A_{\mathrm{top}}$ on ER1, whereas DiffAN has higher mean $A_{\mathrm{top}}$ on ER3 (Table~\ref{tab:continuous-recovery}).

For Gamma data, the two methods share the same joint network. \DISCOCont{} has higher mean $A_{\mathrm{top}}$ on both ER1 and ER3. Both methods have lower ordering accuracy on ER1 than on ER3.

\subsection{Real Data: Lahman Batting Counts}
\label{app:lahman}

\paragraph{Data and preprocessing.}
We use the Lahman batting table through 2025 \citep{lahmanDatabase}. For each season from 2019 to 2025, we retain complete stints, select the 200 players with the most at-bats, aggregate retained stints by player, and pool the seven seasons, yielding $n=1{,}400$ player--season observations from 490 players on $d=17$ batting-count variables. The main analysis uses a four-level quantile representation (q4) to increase the number of observations per count level for CCS estimation. Each pooled column is cut at its empirical quartiles, with no additional support-rank recoding after binning. This support compression is a finite-sample preprocessing choice and need not preserve the conditional-independence structure of the raw counts; we therefore examine alternative support resolutions and raw counts below.

\paragraph{Order evaluation.}
Standard batting definitions \citep{mlbRecording} motivate the seven domain-informed directions
\[
\mathcal R_7 =
\{
\mathrm{AB}\!\to\!\mathrm{H},
\mathrm{AB}\!\to\!\mathrm{SO},
\mathrm{AB}\!\to\!\mathrm{GIDP},
\mathrm{H}\!\to\!\mathrm{2B},
\mathrm{H}\!\to\!\mathrm{3B},
\mathrm{H}\!\to\!\mathrm{HR},
\mathrm{BB}\!\to\!\mathrm{IBB}
\}.
\]
These accounting or containment relations are used only for post-fit diagnostics and are not supplied during fitting. Because several imply parent-dependent feasible ranges, they need not satisfy the parent-independent-support assumptions of Definition~\ref{def:ef-dag}. We therefore interpret $A_{\mathrm{top}}$ as agreement with a partial directional reference, not as DAG causal accuracy or as evidence that the Lahman distribution belongs exactly to the proposed model class.

For reference-set sensitivity, we additionally use $\mathcal R_{17}=\mathcal R_7\cup\mathcal R_{\mathrm{PP}}$, where
\[
\begin{aligned}
\mathcal R_{\mathrm{PP}}=\{&
\mathrm{G}\!\to\!\mathrm{H},
\mathrm{G}\!\to\!\mathrm{BB},
\mathrm{G}\!\to\!\mathrm{SO},
\mathrm{G}\!\to\!\mathrm{RBI},
\mathrm{AB}\!\to\!\mathrm{BB},
\mathrm{AB}\!\to\!\mathrm{RBI},\\
&
\mathrm{H}\!\to\!\mathrm{R},
\mathrm{H}\!\to\!\mathrm{SO},
\mathrm{HR}\!\to\!\mathrm{IBB},
\mathrm{SB}\!\to\!\mathrm{CS}
\}.
\end{aligned}
\]
The ten additional directions are extracted from the qualitative relationships discussed in the MLB analysis of \citet{park2019mrs}; neither $\mathcal R_7$ nor $\mathcal R_{17}$ is treated as a known causal DAG.

\paragraph{Method settings.}
General replication, evaluation, and estimator settings follow Appendix~\ref{app:experimental-setup}, and baseline implementations follow Appendix~\ref{app:disco-exp}. DISCO uses seeds 0--2, while the ODS family-specification analysis uses three paired cross-validation assignments shared across the Poisson, negative-binomial, and binomial fits so that only the supplied family specification changes. All fits use the same fixed cohort, so the reported variation reflects fitting or cross-validation variability rather than sampling uncertainty.

\paragraph{Family-specification sensitivity.}
Under the main q4 setting, the three \DISCO{} orders respect $6/7$, $7/7$, and $6/7$ relations in $\mathcal R_7$, yielding mean $A_{\mathrm{top}}=0.905$. The corresponding mean values for ODS under the Poisson, negative-binomial, and binomial specifications are $0.429$, $0.333$, and $0.429$, respectively. Figure~\ref{fig:lahman-family} summarizes ordering support and majority parent-edge recovery on these seven reference relations. Because the reference set is incomplete, non-reference edges are not interpreted as false causal edges.

\begin{figure}[t]
    \centering
    \includegraphics[width=.48\linewidth]{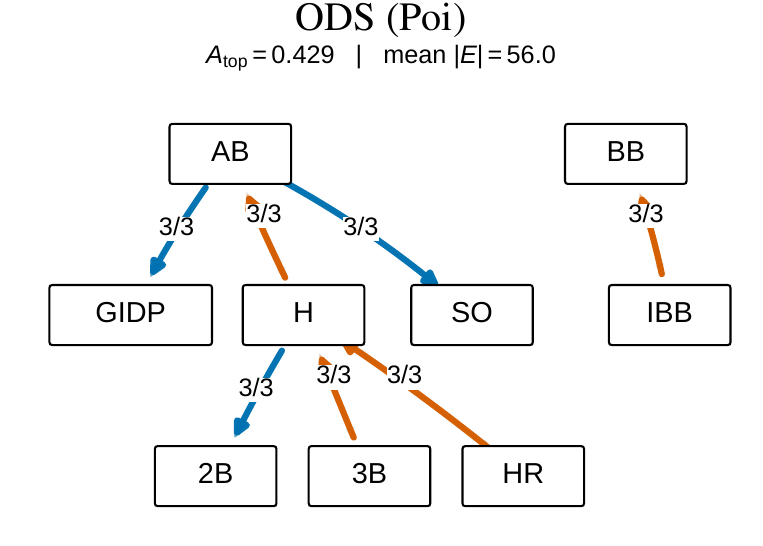}
    \hfill
    \includegraphics[width=.48\linewidth]{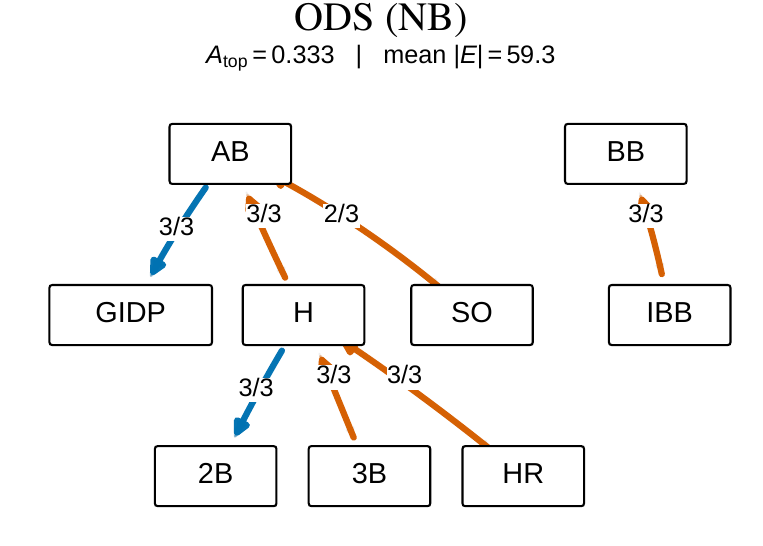}

    \vspace{0.6em}

    \includegraphics[width=.48\linewidth]{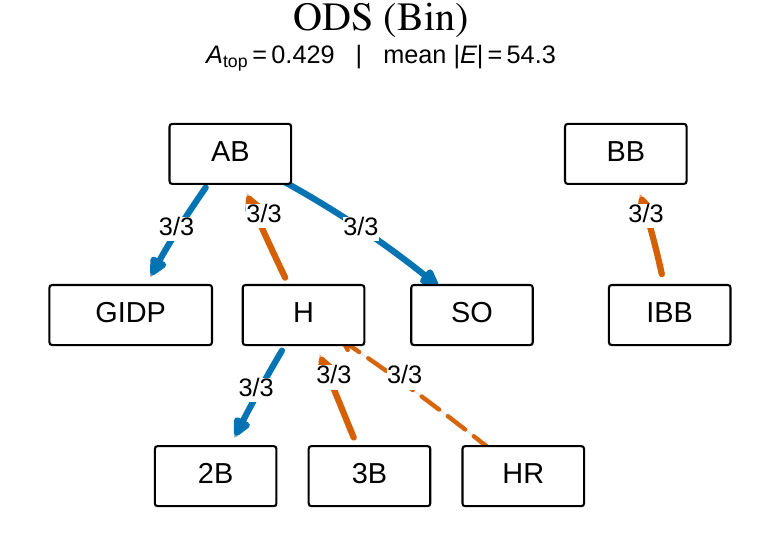}
    \hfill
    \includegraphics[width=.48\linewidth]{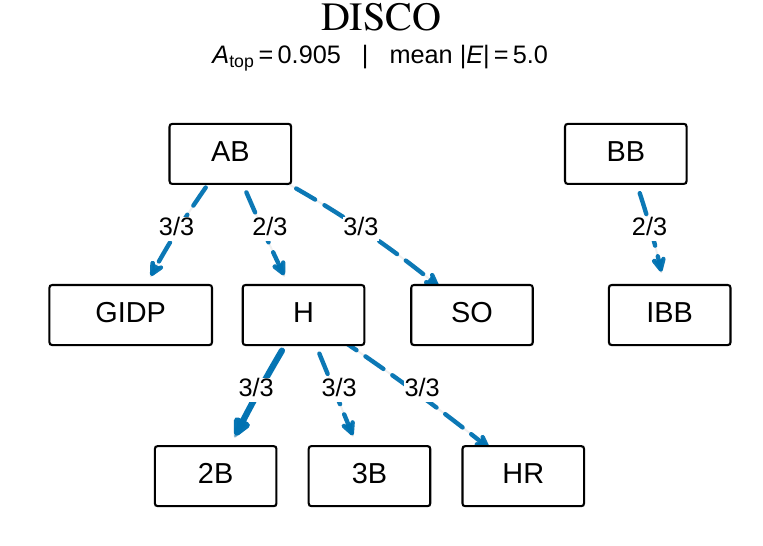}

    \caption{Ordering and parent-edge diagnostics on the Lahman 2019--2025 q4 representation. The top row shows ODS with Poisson and negative-binomial specifications; the bottom row shows ODS with a binomial specification and \DISCO{}. Blue arrows agree with the reference direction and orange arrows reverse it. Arrow labels report ordering support across the three runs. Solid arrows denote parent edges selected in at least two runs, whereas dashed arrows denote majority ordering without majority parent-edge selection. The reference pairs are used only for post-fit evaluation.}
    \label{fig:lahman-family}
\end{figure}

The three ODS fits are compared pairwise in Table~\ref{tab:lahman-ods-pairwise}. Order agreement is the fraction of unordered node pairs ranked in the same relative order by two fitted orders. Edge Jaccard is $|\widehat E_1\cap\widehat E_2|/|\widehat E_1\cup\widehat E_2|$ for their directed edge sets, treating opposite directions as distinct edges.

Poisson and binomial yield similar estimated orders, whereas comparisons involving the negative-binomial specification show larger changes in both ordering and edge set. Because the q4 codes are compressed counts rather than the original count scale, these results are interpreted as sensitivity to the supplied ODS specification rather than as correctly specified family comparisons.

\begin{table}[t]
\TableStyle
\caption{Pairwise sensitivity of ODS to the supplied conditional family on the Lahman q4 representation. Entries are mean $\pm$ standard deviation across three paired cross-validation assignments. Pairwise SHD compares two fitted directed graphs, not an estimated graph against a causal ground truth.}
\label{tab:lahman-ods-pairwise}
\begin{tabular}{@{}lccc@{}}
\toprule
ODS pair & Order agreement & Edge Jaccard & Pairwise SHD \\
\midrule
Poi--NB  & $0.882 \pm 0.007$ & $0.747 \pm 0.023$ & $16.7 \pm 1.5$ \\
Poi--Bin & $0.988 \pm 0.008$ & $0.924 \pm 0.021$ & $4.3 \pm 1.2$ \\
NB--Bin  & $0.880 \pm 0.004$ & $0.713 \pm 0.019$ & $19.0 \pm 1.0$ \\
\bottomrule
\end{tabular}
\end{table}

\paragraph{Boundary and support-resolution sensitivity.}
The main q4 implementation evaluates upper-grid shifts by clipping, whereas the population CCS is defined on the admissible finite-difference domain. As a reproduction check, the original q4 configuration again yields $6/7$, $7/7$, and $6/7$ agreement across the three runs, giving mean $A_{\mathrm{top}}^{\mathcal R_7}=0.905$. We then hold the \DISCO{} training configuration fixed and replace only curvature evaluation by the admissible-domain rule. Under this rule, $A_{\mathrm{top}}^{\mathcal R_7}=0.762$ for each of q4, q6, and q8, while the corresponding $\mathcal R_{17}$ values are $0.804$, $0.765$, and $0.784$. Thus the absolute q4 result is sensitive to boundary handling, whereas the admissible-domain ordering agreement is stable across the examined compressed support resolutions.

\begin{table}[t]
\TableStyle
\caption{Boundary and support-resolution sensitivity of \DISCO{} on Lahman. The first row reproduces the main q4 analysis; the remaining rows use admissible finite differences with the same \DISCO{} training configuration.}
\label{tab:lahman-boundary}
\begin{tabular}{@{}llrr@{}}
\toprule
Representation & Boundary & $A_{\mathrm{top}}^{\mathcal R_7}$ & $A_{\mathrm{top}}^{\mathcal R_{17}}$ \\
\midrule
q4  & clipped    & 0.905 & 0.843 \\
q4  & admissible & 0.762 & 0.804 \\
q6  & admissible & 0.762 & 0.765 \\
q8  & admissible & 0.762 & 0.784 \\
raw & admissible & 0.000 & 0.137 \\
\bottomrule
\end{tabular}
\end{table}

The substantially lower raw-count agreement indicates that support compression is important in this finite-sample Lahman analysis.

\paragraph{Preprocessing sensitivity.}
Because quantile compression changes the raw count mean--variance relationship used by QVF-based ODS, we additionally evaluate ODS on the uncompressed Lahman counts. For this sensitivity analysis, the Poisson and negative-binomial variants use log-link regressions, while the binomial variant uses a logit link with empirical nodewise support maxima. On the raw counts, $A_{\mathrm{top}}^{\mathcal R_7}$ is $0.333$, $0.857$, and $0.286$ for the Poisson, negative-binomial, and binomial specifications, respectively, with corresponding $\mathcal R_{17}$ values $0.412$, $0.843$, and $0.392$. Thus ODS can attain high directional agreement under one supplied family, but the result varies substantially with that specification; these results are therefore treated as preprocessing and family-specification sensitivity rather than as an oracle comparison with \DISCO{}.

\begin{table}[t]
\TableStyle
\caption{ODS sensitivity on the raw Lahman counts. The binomial row uses empirical nodewise support maxima and is not an oracle binomial specification.}
\label{tab:lahman-ods-raw}
\begin{tabular}{@{}lrrr@{}}
\toprule
Supplied family & $A_{\mathrm{top}}^{\mathcal R_7}$ & $A_{\mathrm{top}}^{\mathcal R_{17}}$ & Mean $|\widehat E|$ \\
\midrule
Poi & 0.333 & 0.412 & 73.7 \\
NB  & 0.857 & 0.843 & 69.3 \\
Bin & 0.286 & 0.392 & 79.7 \\
\bottomrule
\end{tabular}
\end{table}

\paragraph{Additional robustness checks.}
For the 2019--2025 q4 analysis, the three \DISCO{} orders agree with $\mathcal R_{17}$ on $15/17$, $15/17$, and $13/17$ relations, giving mean $A_{\mathrm{top}}=0.843$. Repeating the same cohort construction and q4 preprocessing for 2012--2018 gives mean agreement $0.952$ on $\mathcal R_7$ and $0.922$ on $\mathcal R_{17}$. These checks show that the reported directional pattern is not confined to one reference set or one analysis period, while remaining descriptive partial-order diagnostics rather than causal validation.

\section{Algorithm Diagnostics}
\label{app:estimator-diagnostics}

\subsection{Marginal Projection Methods}
\label{app:diagnostic-projection}

We compare three projection procedures on DAGs with $d=50$ and $n=5000$, using the same mechanisms for the three conditional families as in Appendix~\ref{app:diagnostic-parent-selection}. They share the data, three-fold splits, fixed joint networks, the first sink selected from all variables, and initial projection networks.

\paragraph{Additional training settings.}
We use the joint-network validation protocol, initial projection learning rate, and batch size specified in Appendix~\ref{app:experimental-setup}.

\emph{Amortized} projection uses this projection network throughout ordering without further training. After each removal, \emph{Stagewise direct} projection initializes the projection network with its previous parameters and trains it for 100 additional epochs, using predictions from the fixed joint network as regression targets. \emph{Stagewise recursive} projection uses the same training budget but obtains targets from the preceding projection network, whose parameters are held fixed. Both procedures use the joint network for the first reduced-set fit. Thus each stagewise procedure adds 48 fits per fold, for remaining-set sizes 49 through 2. Training targets and projection predictions use the same fully corrupted sample and noise level. No joint concrete-score network is refitted on a reduced set.

Parent selection uses the recovered orders with the same fixed amortized projection networks and default OCS rule, isolating the ordering procedure. Total computation budgets are not matched. Projection time includes fitting the shared initial projection network, the additional training at each removal, and saving network parameters, but excludes joint training, CCS evaluation, and parent selection.

\begin{table}[htbp]
\TableStyle
\caption{Comparison of marginal concrete-score projection methods ($d=50$, $n=5000$). Recovery metrics are means $\pm$ sample standard deviations, with seeds paired across methods within each conditional family. Projection time is the mean wall-clock time for the projection stage, in seconds.}
\label{tab:marginal-projection-comparison}
\begin{tabular*}{\linewidth}{@{\extracolsep{\fill}}llrrrr@{}}
\toprule
\shortstack{Conditional\\family} & Projection & $A_{\mathrm{top}}$ & $F_1$ & SHD & \shortstack{Projection\\time (s)} \\
\midrule
\multirow{3}{*}{Poi} & Amortized & $0.931\pm0.017$ & $0.892\pm0.034$ & $28.9\pm8.7$ & 92.6 \\
 & Stagewise direct & $0.917\pm0.016$ & $0.874\pm0.034$ & $34.0\pm9.2$ & 511.9 \\
 & Stagewise recursive & $0.916\pm0.013$ & $0.869\pm0.032$ & $35.1\pm8.2$ & 544.5 \\
\midrule
\multirow{3}{*}{NB} & Amortized & $0.931\pm0.017$ & $0.792\pm0.066$ & $51.2\pm15.4$ & 94.0 \\
 & Stagewise direct & $0.923\pm0.019$ & $0.785\pm0.055$ & $53.3\pm13.1$ & 515.0 \\
 & Stagewise recursive & $0.924\pm0.021$ & $0.788\pm0.053$ & $52.6\pm12.9$ & 545.2 \\
\midrule
\multirow{3}{*}{Bin} & Amortized & $0.901\pm0.027$ & $0.877\pm0.031$ & $32.3\pm8.4$ & 94.2 \\
 & Stagewise direct & $0.901\pm0.032$ & $0.873\pm0.036$ & $33.4\pm9.7$ & 510.3 \\
 & Stagewise recursive & $0.900\pm0.025$ & $0.874\pm0.035$ & $33.1\pm9.2$ & 542.0 \\
\bottomrule
\end{tabular*}
\end{table}

Amortized projection achieves comparable or higher mean $A_{\mathrm{top}}$, higher mean $F_1$, lower mean SHD, and shorter projection time in each conditional family. These differences are not uniform across seeds.

\subsection{Parent Selection}
\label{app:diagnostic-parent-selection}

We use the first Poi and NB mechanisms and the second Bin mechanism in Table~\ref{tab:dgp-spec}. The held-out folds collectively cover all 5000 observations, and no network is retrained for this comparison.

For the threshold-sensitivity analysis, the median/IQR standardization is unchanged. We vary $\tau_Z\in\{1,1.5,2,2.5,3\}$ and $\pi_{\min}\in\{2/3,1\}$: a candidate must satisfy the strict upper-tail test $Z_{j\to i}^{(b)}>\tau_Z$ in at least two or all three folds, respectively.

\begin{table}[H]
\TableStyle
\setlength{\tabcolsep}{3pt}
\caption{Parent-selection sensitivity for the threshold settings shown, with fixed learned orders and OCS estimates ($d=50$, $n=5000$). Entries are arithmetic means over the same seeds. Prec. and Rec. denote precision and recall against the full true DAG, including true edges excluded by the estimated order.}
\label{tab:parent-selection-sensitivity}
\begin{tabular*}{\linewidth}{@{\extracolsep{\fill}}l*{9}{r}@{}}
\toprule
& \multicolumn{3}{c}{Poi} & \multicolumn{3}{c}{NB}
& \multicolumn{3}{c}{Bin} \\
\cmidrule(lr){2-4}\cmidrule(lr){5-7}\cmidrule(lr){8-10}
Rule & $F_1$ & Prec. & Rec. & $F_1$ & Prec. & Rec.
& $F_1$ & Prec. & Rec. \\
\midrule
$\tau_Z=2,\ \pi_{\min}=1$ & 0.877 & 0.934 & 0.827 & 0.797 & 0.917 & 0.708 & 0.878 & 0.943 & 0.821 \\
$\tau_Z=2,\ \pi_{\min}=2/3$ & 0.891 & 0.898 & 0.884 & 0.870 & 0.898 & 0.845 & 0.890 & 0.895 & 0.885 \\
$\tau_Z=1,\ \pi_{\min}=1$ & 0.900 & 0.898 & 0.901 & 0.866 & 0.892 & 0.843 & 0.896 & 0.906 & 0.885 \\
$\tau_Z=1,\ \pi_{\min}=2/3$ & 0.800 & 0.709 & 0.917 & 0.813 & 0.735 & 0.910 & 0.809 & 0.732 & 0.907 \\
\bottomrule
\end{tabular*}
\end{table}

Table~\ref{tab:parent-selection-sensitivity} shows that lowering $\tau_Z$ from 2 to 1 while retaining all-fold agreement, or requiring two rather than three folds at $\tau_Z=2$, increases mean $F_1$ in all three conditional families. For negative binomial, relaxing fold agreement alone raises $F_1$ from 0.797 to 0.870. Relaxing both requirements increases false positives and lowers $F_1$ relative to the one-at-a-time changes. Thus the same OCS estimates can yield different recovery under different selection rules.
No threshold is selected or tuned against the true graph; this is not an error-control procedure and does not replace the default $\tau_Z=2$, $\pi_{\min}=1$.

\paragraph{Comparison of parent recovery methods.}
To separate parent selection from ordering, we compare the default \DISCO{} OCS selector with CAM pruning, an additive-regression variable-selection procedure \citep{buhlmann2014cam}, and PCM-GAM, a conditional-mean independence test \citep{lundborg2024projected}. Under a valid causal order in our semiparametric GLM model, $\E[X_i\mid X_{\mathrm{Pred}(i)}]=A_i'(\eta_i)$. Since $A_i''>0$, the nonconstant parent effects in Definition~\ref{def:ef-dag}, together with the support regularity conditions, make conditional-mean dependence on a predecessor equivalent to that predecessor being a parent of $i$. Thus the population criterion targeted by PCM identifies true parents; this does not guarantee exact recovery by the finite-sample PCM-GAM test.

We use a separate study with $d=50$ and $n=5000$, under both the true order and the same estimated \DISCO{} order. Family-specific generators are unchanged from the study above. Each dataset is split into 2500 ordering observations and 2500 parent-recovery observations; PCM further splits the latter into 1250 nuisance-training and 1250 testing observations. This disjoint-sample protocol is distinct from the cross-fitted threshold study above.

We reuse the CAM implementation described in Appendix~\ref{app:baseline-methods}, with cutoff $0.001$. PCM-GAM uses the public \texttt{comets::pcm} implementation of the projected covariance measure \citep{lundborg2024projected}, with GAM mean/projection regressions (basis parameter $k=4$) and a 500-tree random forest for the variance nuisance. Each candidate is tested at $\alpha=0.05$ without multiplicity correction. CAM and PCM-GAM receive raw counts without the true conditional family or link. These are method-specific settings, not tests at a common nominal level; using GAM nuisance estimates does not itself establish count-specific test calibration.

\begin{table}[htbp]
\TableStyle
\caption{Parent recovery under fixed orders ($d=50$, $n=5000$). Entries are arithmetic means over seeds shared by all methods and both order conditions. Metrics use the full true DAG, including edges excluded by an estimated order. Estimated denotes the shared \DISCO{} order; OCS denotes the default \DISCO{} parent selector.}
\label{tab:parent-method-comparison}
\begin{tabular*}{\linewidth}{@{\extracolsep{\fill}}lllrr@{}}
\toprule
\shortstack{Conditional\\family} & Order & Parent selector & $F_1$ & SHD \\
\midrule
\multirow{6}{*}{Poi} & \multirow{3}{*}{True} & OCS & 0.878 & 30.8 \\
 & & CAM & 0.989 & 3.2 \\
 & & PCM-GAM & 0.839 & 54.1 \\
 & \multirow{3}{*}{Estimated} & OCS & 0.858 & 37.2 \\
 & & CAM & 0.850 & 46.3 \\
 & & PCM-GAM & 0.712 & 105.6 \\
\midrule
\multirow{6}{*}{NB} & \multirow{3}{*}{True} & OCS & 0.780 & 50.5 \\
 & & CAM & 0.959 & 12.1 \\
 & & PCM-GAM & 0.832 & 56.3 \\
 & \multirow{3}{*}{Estimated} & OCS & 0.751 & 59.2 \\
 & & CAM & 0.823 & 54.9 \\
 & & PCM-GAM & 0.703 & 108.3 \\
\midrule
\multirow{6}{*}{Bin} & \multirow{3}{*}{True} & OCS & 0.820 & 43.1 \\
 & & CAM & 0.993 & 1.9 \\
 & & PCM-GAM & 0.827 & 58.3 \\
 & \multirow{3}{*}{Estimated} & OCS & 0.828 & 42.6 \\
 & & CAM & 0.898 & 28.7 \\
 & & PCM-GAM & 0.738 & 90.3 \\
\bottomrule
\end{tabular*}
\end{table}

With the true order, CAM has the highest mean $F_1$ and lowest SHD in every conditional family (Table~\ref{tab:parent-method-comparison}). With the shared estimated order, OCS outperforms PCM-GAM in all three families and has slightly higher $F_1$ and lower SHD than CAM for Poi; CAM remains strongest for NB and Bin. The OCS gap under the true order reflects limitations of learned scores and the practical selector, rather than the population OCS characterization; this comparison does not separate score-estimation error from thresholding error.


\clearpage
\begin{thebibliography}{99}

\bibitem[Blei et~al.(2003)]{blei2003lda}
D.~M. Blei, A.~Y. Ng, and M.~I. Jordan.
\newblock Latent {Dirichlet} allocation.
\newblock \emph{Journal of Machine Learning Research}, 3:993--1022, 2003.

\bibitem[Bodik and Chavez-Demoulin(2025)]{bodik2025cpcm}
J.~Bodik and V.~Chavez-Demoulin.
\newblock Identifiability of causal graphs under non-additive conditionally parametric causal models.
\newblock \emph{Journal of Machine Learning Research}, 26(264):1--55, 2025.

\bibitem[B\"uhlmann et~al.(2014)]{buhlmann2014cam}
P.~B\"uhlmann, J.~Peters, and J.~Ernest.
\newblock CAM: Causal additive models, high-dimensional order search and penalized regression.
\newblock \emph{The Annals of Statistics}, 42(6):2526--2556, 2014.

\bibitem[Castel et~al.(2015)]{castel2015allelic}
S.~E. Castel, A.~Levy-Moonshine, P.~Mohammadi, E.~Banks, and T.~Lappalainen.
\newblock Tools and best practices for data processing in allelic expression analysis.
\newblock \emph{Genome Biology}, 16:195, 2015.
\newblock doi:10.1186/s13059-015-0762-6.

\bibitem[Gao et~al.(2020)]{gao2020polynomial}
M.~Gao, Y.~Ding, and B.~Aragam.
\newblock A polynomial-time algorithm for learning nonparametric causal graphs.
\newblock In \emph{Advances in Neural Information Processing Systems 33 (NeurIPS)}, pages 11599--11611, 2020.

\bibitem[Ghoshal and Honorio(2018)]{ghoshal2018learning}
A.~Ghoshal and J.~Honorio.
\newblock Learning linear structural equation models in polynomial time and sample complexity.
\newblock In \emph{Proceedings of the 21st International Conference on Artificial Intelligence and Statistics (AISTATS)}, volume~84 of \emph{Proceedings of Machine Learning Research}, pages 1466--1475, 2018.

\bibitem[Huang(2017)]{huang2017cmp}
A.~Huang.
\newblock Mean-parametrized Conway--Maxwell--Poisson regression models for dispersed counts.
\newblock \emph{Statistical Modelling}, 17(6):359--380, 2017.
\newblock doi:10.1177/1471082X17697749.

\bibitem[Hyv\"arinen(2005)]{hyvarinen2005estimation}
A.~Hyv\"arinen.
\newblock Estimation of non-normalized statistical models by score matching.
\newblock \emph{Journal of Machine Learning Research}, 6(24):695--709, 2005.

\bibitem[Kang et~al.(2025)]{kang2025scino}
J.~Kang, S.~Kim, C.~Lee, D.~Hwang, J.~Chung, Y.~Ko, S.~Lee, S.~Kim, and S.~Lim.
\newblock Score-informed neural operator for enhancing ordering-based causal discovery.
\newblock In \emph{Advances in Neural Information Processing Systems 38 (NeurIPS)}, pages 113109--113151, 2025.

\bibitem[Lahman(n.d.)]{lahmanDatabase}
S.~Lahman.
\newblock Lahman baseball database.
\newblock Society for American Baseball Research.
\newblock \url{https://sabr.org/lahman-database/}, accessed September 22, 2026.

\bibitem[Lou et~al.(2024)]{lou2024discrete}
A.~Lou, C.~Meng, and S.~Ermon.
\newblock Discrete diffusion modeling by estimating the ratios of the data distribution.
\newblock In \emph{Proceedings of the 41st International Conference on Machine Learning (ICML)}, volume 235 of \emph{Proceedings of Machine Learning Research}, pages 32819--32848, 2024.

\bibitem[Love et~al.(2014)]{love2014deseq2}
M.~I. Love, W.~Huber, and S.~Anders.
\newblock Moderated estimation of fold change and dispersion for {RNA}-seq data with {DESeq2}.
\newblock \emph{Genome Biology}, 15(12):550, 2014.

\bibitem[Lundborg et~al.(2024)]{lundborg2024projected}
A.~R. Lundborg, I.~Kim, R.~D. Shah, and R.~J. Samworth.
\newblock The projected covariance measure for assumption-lean variable significance testing.
\newblock \emph{The Annals of Statistics}, 52(6):2851--2878, 2024.
\newblock doi:10.1214/24-AOS2447.

\bibitem[Lyu(2009)]{lyu2009generalized}
S.~Lyu.
\newblock Interpretation and generalization of score matching.
\newblock In \emph{Proceedings of the 25th Conference on Uncertainty in Artificial Intelligence (UAI)}, pages 359--366, 2009.

\bibitem[Major League Baseball, n.d.]{mlbRecording}
Major League Baseball.
\newblock Glossary of standard statistics: \href{https://www.mlb.com/glossary/standard-stats/at-bat}{At-bat}, \href{https://www.mlb.com/glossary/standard-stats/hit}{Hit}, and \href{https://www.mlb.com/glossary/standard-stats/intentional-walk}{Intentional walk}.
\newblock MLB.com, n.d. Accessed September 21, 2026.

\bibitem[McCullagh and Nelder(1989)]{mccullagh1989generalized}
P.~McCullagh and J.~A. Nelder.
\newblock \emph{Generalized Linear Models}.
\newblock Chapman \& Hall, 2nd edition, 1989.

\bibitem[Meng et~al.(2022)]{meng2022concrete}
C.~Meng, K.~Choi, J.~Song, and S.~Ermon.
\newblock Concrete score matching: Generalized score matching for discrete data.
\newblock In \emph{Advances in Neural Information Processing Systems 35 (NeurIPS)}, pages 34532--34545, 2022.

\bibitem[Montagna et~al.(2023a)]{montagna2023das}
F.~Montagna, N.~Noceti, L.~Rosasco, K.~Zhang, and F.~Locatello.
\newblock Scalable causal discovery with score matching.
\newblock In \emph{Proceedings of the 2nd Conference on Causal Learning and Reasoning (CLeaR)}, volume 213 of \emph{Proceedings of Machine Learning Research}, pages 752--771, 2023a.

\bibitem[Montagna et~al.(2023b)]{montagna2023nogam}
F.~Montagna, N.~Noceti, L.~Rosasco, K.~Zhang, and F.~Locatello.
\newblock Causal discovery with score matching on additive models with arbitrary noise.
\newblock In \emph{Proceedings of the 2nd Conference on Causal Learning and Reasoning (CLeaR)}, volume 213 of \emph{Proceedings of Machine Learning Research}, pages 726--751, 2023b.

\bibitem[Nelder and Wedderburn(1972)]{nelder1972generalized}
J.~A. Nelder and R.~W.~M. Wedderburn.
\newblock Generalized linear models.
\newblock \emph{Journal of the Royal Statistical Society: Series A (General)}, 135(3):370--384, 1972.

\bibitem[Park and Park(2019a)]{park2019ghd}
G.~Park and H.~Park.
\newblock Identifiability of generalized hypergeometric distribution (GHD) directed acyclic graphical models.
\newblock In \emph{Proceedings of the 22nd International Conference on Artificial Intelligence and Statistics (AISTATS)}, volume 89 of \emph{Proceedings of Machine Learning Research}, pages 158--166, 2019a.

\bibitem[Park and Park(2019b)]{park2019mrs}
G.~Park and S.~Park.
\newblock High-dimensional Poisson structural equation model learning via $\ell_1$-regularized regression.
\newblock \emph{Journal of Machine Learning Research}, 20(95):1--41, 2019b.

\bibitem[Park and Raskutti(2015)]{park2015poisson}
G.~Park and G.~Raskutti.
\newblock Learning large-scale Poisson DAG models based on overdispersion scoring.
\newblock In \emph{Advances in Neural Information Processing Systems 28 (NeurIPS)}, pages 631--639, 2015.

\bibitem[Park and Raskutti(2018)]{park2018qvf}
G.~Park and G.~Raskutti.
\newblock Learning quadratic variance function (QVF) DAG models via overdispersion scoring (ODS).
\newblock \emph{Journal of Machine Learning Research}, 18(224):1--44, 2018.

\bibitem[Peters et~al.(2014)]{peters2014cam}
J.~Peters, J.~M. Mooij, D.~Janzing, and B.~Sch\"olkopf.
\newblock Causal discovery with continuous additive noise models.
\newblock \emph{Journal of Machine Learning Research}, 15(58):2009--2053, 2014.

\bibitem[Rathouz and Gao(2009)]{rathouz2009generalized}
P.~J. Rathouz and L.~Gao.
\newblock Generalized linear models with unspecified reference distribution.
\newblock \emph{Biostatistics}, 10(2):205--218, 2009.

\bibitem[Robinson et~al.(2010)]{robinson2010edger}
M.~D. Robinson, D.~J. McCarthy, and G.~K. Smyth.
\newblock {edgeR}: a {Bioconductor} package for differential expression analysis of digital gene expression data.
\newblock \emph{Bioinformatics}, 26(1):139--140, 2010.

\bibitem[Rolland et~al.(2022)]{rolland2022score}
P.~Rolland, V.~Cevher, M.~Kleindessner, C.~Russell, D.~Janzing, B.~Schölkopf, and F.~Locatello.
\newblock Score matching enables causal discovery of nonlinear additive noise models.
\newblock In \emph{Proceedings of the 39th International Conference on Machine Learning (ICML)}, volume 162 of \emph{Proceedings of Machine Learning Research}, pages 18741--18753, 2022.

\bibitem[Sanchez et~al.(2023)]{sanchez2023diffan}
P.~Sanchez, X.~Liu, A.~Q. O'Neil, and S.~A. Tsaftaris.
\newblock Diffusion models for causal discovery via topological ordering.
\newblock In \emph{International Conference on Learning Representations (ICLR)}, 2023.

\bibitem[Stoklosa et~al.(2022)]{stoklosa2022negbin}
J.~Stoklosa, R.~V. Blakey, and F.~K.~C. Hui.
\newblock An overview of modern applications of negative binomial modelling in ecology and biodiversity.
\newblock \emph{Diversity}, 14(5):320, 2022.

\bibitem[Vo et~al.(2026)]{vo2026generalized}
V.~Vo, T.~Le, H.~Zhao, E.~V. Bonilla, and D.~Phung.
\newblock Ordering-based causal discovery via generalized score matching.
\newblock In \emph{Proceedings of the 32nd ACM SIGKDD Conference on Knowledge Discovery and Data Mining V.2 (KDD)}, pages 4706--4717, 2026.
\newblock doi:10.1145/3770855.3817672.

\bibitem[Yang et~al.(2018)]{yang2018semiparametric}
Z.~Yang, Y.~Ning, and H.~Liu.
\newblock On semiparametric exponential family graphical models.
\newblock \emph{Journal of Machine Learning Research}, 19(57):1--59, 2018.

\bibitem[Zhang et~al.(2021)]{zhang2021gcastle}
K.~Zhang, S.~Zhu, M.~Kalander, I.~Ng, J.~Ye, Z.~Chen, and L.~Pan.
\newblock {gCastle}: A Python toolbox for causal discovery.
\newblock \emph{arXiv preprint arXiv:2111.15155}, 2021.

\bibitem[Zheng et~al.(2020)]{zheng2020learning}
X.~Zheng, C.~Dan, B.~Aragam, P.~Ravikumar, and E.~P. Xing.
\newblock Learning sparse nonparametric DAGs.
\newblock In \emph{Proceedings of the 23rd International Conference on Artificial Intelligence and Statistics (AISTATS)}, volume 108 of \emph{Proceedings of Machine Learning Research}, pages 3414--3425, 2020.

\end{thebibliography}
\end{document}